\newtheorem{theorem}{Theorem}
\newtheorem{corollary}{Corollary}
\newtheorem{lemma}{Lemma}
\newtheorem{definition}{Definition}
\DeclareMathOperator*{\argmax}{arg\,max}
\DeclareMathOperator*{\argmin}{arg\,min}
\DeclareMathOperator*{\defn}{ \ \overset{\mathrm{def}}{=} \ }
\newcommand\blfootnote[1]{%
  \begin{NoHyper}%
  \renewcommand\thefootnote{}\footnote{#1}%
  \addtocounter{footnote}{-1}%
  \end{NoHyper}%
}
\title{PAC-Bayesian Domain Adaptation Bounds for Multiclass Learners}
\author[1]{\href{mailto:<anthonysicilia@pitt.edu>?Subject=Your UAI 2022 paper}{Anthony Sicilia}{}}
\author[2]{\href{mailto:<kaa139@pitt.edu>?Subject=Your UAI 2022 paper}{Katherine Atwell}{}}
\author[1,2]{\href{mailto:<malihe@pitt.edu>?Subject=Your UAI 2022 paper}{Malihe Alikhani}{}}
\author[3*]{\href{mailto:<seongjae@yonsei.ac.kr>?Subject=Your UAI 2022 paper}{Seong Jae Hwang}{}}
\affil[1]{%
    Intelligent Systems Program\\
    University of Pittsburgh\\
    Pittsburgh, Pennsylvania, USA
}
\affil[2]{%
    Department of Computer Science\\
    University of Pittsburgh\\
    Pittsburgh, Pennsylvania, USA
}
\affil[3]{%
    Department of Artificial Intelligence\\ Yonsei University\\
    Seoul, South Korea
  }
\begin{document}
\maketitle

\begin{abstract}
Multiclass neural networks are a common tool in modern unsupervised domain adaptation, yet an appropriate theoretical description for their non-uniform sample complexity is lacking in the adaptation literature. To fill this gap, we propose the first PAC-Bayesian adaptation bounds for multiclass learners. We facilitate practical use of our bounds by also proposing the first approximation techniques for the multiclass distribution divergences we consider. For divergences dependent on a Gibbs predictor, we propose additional PAC-Bayesian adaptation bounds which remove the need for inefficient Monte-Carlo estimation. Empirically, we test the efficacy of our proposed approximation techniques as well as some novel design-concepts which we include in our bounds. Finally, we apply our bounds to analyze a common adaptation algorithm that uses neural networks.
\end{abstract}

%%%%%%%%%%%%%%%%%%%%%%%%%%%%%%%%%%%%%%%%%%%%%%%%
%%%%%%%%%%%%%%%%%%%%%%%%%%%%%%%%%%%%%%%%%%%%%%%%
\section{Introduction}
\label{sec:intro}
\blfootnote{$^*$Corresponding Author}Multiclass neural networks are frequently used in implementation of many unsupervised domain adaptation algorithms. For example, neural networks are often employed for invariant feature learning algorithms \citep{ganin2015unsupervised, long2017deep, long2018conditional, zhang2019bridging}, importance weighting algorithms \citep{lipton2018detecting}, or combinations of both techniques \citep{tachet2020domain}. While most of these adaptation algorithms are motivated by theoretical bounds, recent literature has paid close attention to the assumptions and failure-cases of some techniques \citep{zhao2019learning, wu2019domain, johansson2019support}. Namely, some learning algorithms \textit{ignore} key terms in the adaptation bounds on which they are based, and as a result, may output solutions (i.e., learned models) that violate assumptions and are \textit{guaranteed} to fail at the adaptation task \citep{zhao2019learning, wu2019domain}. Still, the story here is not totally complete. In particular, there has not been much discussion of the non-uniform sample complexity of these modern adaptation algorithms. Sample complexity, in fact, contributes an additional ``ignored'' term in the theoretical bounds on which modern adaptation algorithms are based. 

In this paper, we propose the first multiclass adaptation bounds which allow us to study this non-uniform sample complexity. Studying sample complexity is important to our understanding of adaptation algorithms because it describes how ``data-hungry'' an algorithm is. When this sample complexity is non-uniform across an algorithm's solution space, it allows us to study properties of a solution as a function of its ``data-hunger.'' This is especially important for adaptation algorithms, which as mentioned, can inadvertently output poor solutions. Identifying a dynamic relationship between the properties of solutions and their non-uniform sample-complexity can provide insight on how to prevent these failure-cases in practice (e.g., by collecting sufficient data for an algorithm).
Non-uniform sample complexity (rather than uniform complexity) can also help us to better quantify implicit regularization inherent to our algorithm \citep{dziugaite2017computing, nagarajan2019uniform}. Accurately describing implicit regularization is especially important when using neural networks \citep{neyshabur2014search, neyshabur2017implicit, keskar2017large, zhang2017understanding}, since similar learning algorithms can lead to solutions with distinct generalization performance and implicit regularization is believed to be the cause of this phenomena.

Despite the importance of studying non-uniform sample complexity in modern adaptation contexts, we are not aware of any multiclass adaptation bounds with this capability. To fill this gap, we contribute the first PAC-Bayesian adaptation bound for multiclass learners (Thm.~\ref{thm:pb-bound}). While PAC-Bayesian bounds actually control error for \textit{stochastic} models, we choose this framework for its demonstrated empirical accuracy in describing neural network sample complexity \citep{dziugaite2017computing, zhou2018non, jiang2019fantastic, dziugaite2020search, dziugaite2021role, perez2021tighter}. Compared to existing bounds, we design our proposals to be more sensitive to the solution output by our learning algorithm as well as the data sample available for estimating key quantities. The former is vital in studying non-uniform complexity of adaptation algorithms (as discussed), while the latter is important for facilitating empirical study. To make our bound useful in practice, we also propose the first approximation techniques for the divergence terms in our bound. In one case, this involves proposal of a novel surrogate for optimizing 01-loss (Thm.~\ref{thm:surrogate_loss}). In another, we show a standard technique for computing divergence fails to generalize to the mutliclass setting without additional constraints (Thm.~\ref{thm:mdp_div_red2erm}). Working in the PAC-Bayesian framework, some divergences we study are also expressed as expectations with no known analytic solution. For these, we propose additional bounds (Thm.~\ref{thm:pb-bound-efficient}, Cor.~\ref{cor:pb-bound-efficient}) which allow us to avoid inefficient Monte-Carlo estimation by introduction of a new flatness assumption related to the well-known flat-minima hypothesis \citep{hochreiter1997flat}. To conclude, we conduct extensive empirical study of more than 12K models learned across 5 diverse adaptation datasets. 
% We validate the new design-concepts in our adaptation bounds, our proposed approximation techniques, and our novel flatness assumption. Our last empirical analysis uses the proposed bounds to empirically study sample complexity of a common domain-invariant learning algorithm. Our findings reveal unexpected relationships between sample complexity and important properties of the algorithm we study. 
%%%%%%%%%%%%%%%%%%%%%%%%%%%%%%%%%%%%%%%%%%%%%%%%
%%%%%%%%%%%%%%%%%%%%%%%%%%%%%%%%%%%%%%%%%%%%%%%%
\section{Background}
\label{sec:background}
\subsection{Notation and Assumptions}
Consider the space $\mathcal{X} \times \mathcal{Y}$ for some finite $\mathcal{Y}$ with $|\mathcal{Y}| > 2$ unless otherwise noted. Colloquially, we call $\mathcal{X}$ the feature space and $\mathcal{Y}$ the label space. For a distribution $\mathbb{D}$ over $\mathcal{X} \times \mathcal{Y}$, we are interested in the risk functional $\mathbf{R}_\mathbb{D}: \mathcal{Y}^\mathcal{X} \to [0,1]$ 
\begin{equation}\small
\label{eqn:risk}
\mathbf{R}_\mathbb{D}(h) \defn \mathbf{Pr}(h(X) \neq Y); \qquad (X,Y) \sim \mathbb{D}
\end{equation}
applied to some hypothesis (i.e., model) $h \in \mathcal{H} \subseteq \mathcal{Y}^\mathcal{X}$. The risk functional $\mathbf{R}_\mathbb{D}$ precisely gives the error rate of the hypothesis $h$ when tasked with modelling the relationship between $\mathcal{X}$ and $\mathcal{Y}$ described by $\mathbb{D}$. In PAC-Bayes, we also consider the risk of stochastic (Gibbs) predictors. For a distribution $\mathbb{Q}$ over $\mathcal{H} \subseteq \mathcal{Y}^\mathcal{X}$, the Gibbs risk is the expectation
\begin{equation}\small
\label{eqn:gibbs_risk}
    \mathbf{R}_\mathbb{D}(\mathbb{Q}) \defn \mathbf{E}[\mathbf{R}_\mathbb{D}(H)]; \qquad H \sim \mathbb{Q}.
\end{equation}
For neural networks, a common stochastic formulation is to sample weights from the distribution $\mathbb{Q}$ before inference -- e.g., the Bayesian neural networks of \citet{blundell2015weight}.
%defined for $\mathbb{D}$ over $\mathcal{X} \times \mathcal{Y}$. 

Throughout this paper, we assume a source distribution $\mathbb{S}$ over $\mathcal{X} \times \mathcal{Y}$ and a target distribution $\mathbb{T}$ over $\mathcal{X} \times \mathcal{Y}$. We assume observation of an i.i.d. random sample $S \sim \mathbb{S}^n$ and an i.i.d. random sample $T_X \sim \mathbb{T}_X^m$ where the subscript $X$ denotes the $\mathcal{X}$-marginal of a distribution. In this context, an algorithm for the unsupervised adaptation problem we study is a function $(S, T_X) \mapsto h \in \mathcal{H}$. We are interested in bounds on $\mathbf{R}_\mathbb{T}(h)$ for such algorithms. 

Interchangeably, we think of the sample $S$ as both a random variable with distribution $\mathbb{S}^n$ and a (random) distribution itself, since any observation of a sample $S$ uniquely defines its own empirical distribution over $\mathcal{X} \times \mathcal{Y}$ by the pmf
\begin{equation}\small
\label{eqn:sample_pmf}
    (x,y) \mapsto  n^{-1}\sum\nolimits_{i=1}^n \mathbf{1}_{\{(x_i, y_i)\}}\{(x,y)\}
\end{equation}
where $\mathbf{1}$ is the indicator function. So, $\mathbf{R}_S$ is well-defined by this identification. $\mathbf{R}_S(\mathbb{Q}) = \mathbf{E}_{H \sim \mathbb{Q}}[\mathbf{R}_S(H)]$ is also defined -- the observation of $S$ is used, not integrated out.

Finally, we also use distribution divergences based on the $\mathcal{H}$-divergence proposed by \citet{ben2007analysis}. This divergence is a specification of the $\mathcal{A}$-distance \citep{kifer2004detecting} which relaxes the total variation distance by considering only a subset $\mathcal{A}$ of measurable sets when taking the supremum. In particular, the $\mathcal{H}$-divergence considers sets identifiable by a class $\mathcal{H} \subseteq \{0,1\}^\mathcal{X}$
\begin{equation}\small
\label{eqn:hyp_class_divergence}
    \mathbf{d}_\mathcal{H}(\mathbb{D}_1, \mathbb{D}_2) \defn \sup\nolimits_{h \in \mathcal{H}} \big \lvert \mathbf{E}[h(X_1)] - \mathbf{E}[h(X_2)] \big \rvert
\end{equation}
where $X_i \sim \mathbb{D}_i$. 
% Note, the $\mathcal{H}$-divergence is symmetric and obeys the triangle-inequality for any choice of $\mathcal{H}$. 
While it is typically defined with a factor of 2, we omit this for convenience. Given a class $\mathcal{H} \subseteq \mathcal{Y}^\mathcal{X}$, we first study the $\mathcal{H}\Delta\mathcal{H}$-divergence based on the class 
\begin{equation}\small
\label{eqn:mid_class_defn}
\begin{split}
\mathcal{H}\Delta\mathcal{H} & \defn \big \{x \mapsto 1-\mathbf{1}_{\{h(x)\}}\{h'(x)\} \mid (h,h') \in \mathcal{H}^2 \big \}.  
\end{split}
\end{equation}
This is a multiclass generalization, which simplifies to the original binary definition of Ben-David et al. when $|\mathcal{Y}| = 2$.
\subsection{Some Existing Adaptation Bounds}
\label{sec:background_bounds}
In this section, we discuss two adaptation bounds. More detailed knowledge of these bounds will be useful later for comparison with our proposed bounds. First, we discuss the seminal uniform convergence bound proposed by \citet{ben2007analysis, ben2010theory}. Second, we discuss a PAC-Bayesian bound proposed by \citet{germain2020pac}.
\subsubsection{Adaptation Based on Uniform Convergence}
\begin{theorem}\label{thm:ben2010theory}
\citep{ben2010theory}
Let $\mathcal{Y}$ be binary. For all $\delta > 0$, w.p. at least $1-\delta$, for all $h \in \mathcal{H}$
\begin{equation}\small
\begin{split}
    & \mathbf{R}_\mathbb{T}(h) \leq \lambda + \mathbf{R}_S(h) + \mathbf{d}_{\mathcal{H}\Delta\mathcal{H}}(S_X, T_X) \\
    & + 4\sqrt{\tfrac{4 \nu \ln (2m) - \ln (\delta / 4)}{m}} + 2\sqrt{\tfrac{8\nu \ln(em / \nu) - 2 \ln(\delta / 4)}{m}}
\end{split}
\end{equation}
where $\lambda = \min_{\eta \in \mathcal{H}} \mathbf{R}_\mathbb{S}(\eta) + \mathbf{R}_\mathbb{T}(\eta)$ and $\nu = \mathrm{VCDim}(\mathcal{H})$.
\end{theorem}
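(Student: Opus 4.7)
The plan is to combine a triangle inequality for the 01-loss with the definition of $\mathcal{H}\Delta\mathcal{H}$-divergence and classical Vapnik--Chervonenkis uniform convergence. First, I would introduce the optimal joint hypothesis $\eta^* \in \argmin_{\eta \in \mathcal{H}}[\mathbf{R}_\mathbb{S}(\eta) + \mathbf{R}_\mathbb{T}(\eta)]$, so that $\mathbf{R}_\mathbb{S}(\eta^*) + \mathbf{R}_\mathbb{T}(\eta^*) = \lambda$. Since the 01-loss obeys the pointwise triangle inequality $\mathbf{1}\{h(x) \neq y\} \leq \mathbf{1}\{h(x) \neq \eta^*(x)\} + \mathbf{1}\{\eta^*(x) \neq y\}$, taking expectations under $\mathbb{T}$ gives $\mathbf{R}_\mathbb{T}(h) \leq \mathbf{R}_\mathbb{T}(\eta^*) + \mathbf{R}_\mathbb{T}(h, \eta^*)$, where $\mathbf{R}_\mathbb{D}(h, h')$ denotes the disagreement probability under $\mathbb{D}_X$.

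Next, I would bound the target disagreement via the divergence. By (\ref{eqn:mid_class_defn}), the indicator $x \mapsto \mathbf{1}\{h(x) \neq \eta^*(x)\}$ lies in $\mathcal{H}\Delta\mathcal{H}$, so (\ref{eqn:hyp_class_divergence}) yields $|\mathbf{R}_\mathbb{T}(h,\eta^*) - \mathbf{R}_\mathbb{S}(h,\eta^*)| \leq \mathbf{d}_{\mathcal{H}\Delta\mathcal{H}}(\mathbb{S}_X, \mathbb{T}_X)$. Applying the triangle inequality once more, now under $\mathbb{S}$, gives $\mathbf{R}_\mathbb{S}(h,\eta^*) \leq \mathbf{R}_\mathbb{S}(h) + \mathbf{R}_\mathbb{S}(\eta^*)$. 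Chaining these inequalities together and rewriting the $\eta^*$ terms via $\lambda$ produces the population-level inequality
\begin{equation*}
\mathbf{R}_\mathbb{T}(h) \leq \lambda + \mathbf{R}_\mathbb{S}(h) + \mathbf{d}_{\mathcal{H}\Delta\mathcal{H}}(\mathbb{S}_X, \mathbb{T}_X).
\end{equation*}

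The final step is to pass from the population quantities to their empirical counterparts, which is where the two square-root terms appear. Two suprema must be controlled: (i) $\sup_{h \in \mathcal{H}}|\mathbf{R}_\mathbb{S}(h) - \mathbf{R}_S(h)|$, handled by the standard VC tail with $\mathrm{VCDim}(\mathcal{H}) = \nu$, contributing the $2\sqrt{(8\nu\ln(em/\nu) - 2\ln(\delta/4))/m}$ term; and (ii) $|\mathbf{d}_{\mathcal{H}\Delta\mathcal{H}}(\mathbb{S}_X,\mathbb{T}_X) - \mathbf{d}_{\mathcal{H}\Delta\mathcal{H}}(S_X,T_X)|$, handled by VC uniform convergence over the class $\mathcal{H}\Delta\mathcal{H}$, whose VC dimension is at most $2\nu$, contributing the $4\sqrt{(4\nu\ln(2m) - \ln(\delta/4))/m}$ term. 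A union bound with budget $\delta/2$ for each event (each further split as $\delta/4$ inside the deviation bounds) then assembles the conclusion.

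The main obstacle is step (ii): one needs uniform convergence of a supremum, so the sample-based divergence must approximate its population counterpart \emph{simultaneously} over all pairs $(h,h') \in \mathcal{H}^2$. This is resolved by the growth-function bound $\mathrm{VCDim}(\mathcal{H}\Delta\mathcal{H}) \leq 2\nu$ (a Sauer-style argument) together with standard double-sample symmetrization applied to a class of $\{0,1\}$-valued functions on $\mathcal{X}$ alone (no labels). Once these two uniform-convergence facts are in hand, the remainder is a routine chain of inequalities that follow directly from the definitions and the 01-loss triangle inequality.
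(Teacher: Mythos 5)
Your proposal is correct and follows essentially the same route as the paper: the paper's proof simply cites Theorem~2 of Ben-David et al.\ (2010) --- whose internal argument is exactly your triangle-inequality decomposition through $\eta^*$ together with the VC/growth-function estimation of the divergence over $\mathcal{H}\Delta\mathcal{H}$ --- and then appends the standard uniform-convergence bound on $\mathbf{R}_S(h) - \mathbf{R}_\mathbb{S}(h)$, combining the two events with Boole's inequality, which is precisely your step (i) and union bound. You have unpacked the cited result rather than taken a different path.
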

% \begin{proof}
% This is Thm. 2 of \citet{ben2010theory} with added bound on $\mathbf{R}_S(h) - \mathbf{R}_\mathbb{S}(h)$ by standard uniform convergence arguments; e.g., Ch. 28.1 of \citet{shalev2014understanding}. Boole's Inequality is used to combine bounds. 
% \end{proof}
The seminal result above is the standard adaptation bound on which many newer results are based. Still, this uniform convergence bound is not well-suited for every application. We discuss some limitations below.

%\textbf{Uniform Sample Complexity}\hspace{.5em} 
\paragraph{Uniform Sample Complexity}
Simply put, uniform convergence is too conservative: it assigns the same sample complexity to each outcome of our learning algorithm, regardless of the solution quality. As discussed in Section~\ref{sec:intro}, this prevents us from studying important properties of a model as a function of its sample complexity.
% , and further, prevents us from capturing any implicit regularization inherent to the training process. % Both of these are important when training neural networks via SGD, because solution quality can vary substantially for similar learning algorithms \citep{zhang2017understanding}.

% \textbf{Model-Independent Divergence}\hspace{.5em}
\paragraph{Model-Independent Divergence}
In general, divergence is meant to characterize the similarity in feature distributions under the source $\mathbb{S}$ and the target $\mathbb{T}$. Similar to above, independence of the divergence $\mathbf{d}_{\mathcal{H}\Delta\mathcal{H}}$ and the model $h$ is overly conservative and makes this term insensitive to changes in the outcome of our learning algorithm. For example, when $\mathcal{H}$ is fixed, this divergence cannot distinguish between a random initialization and a carefully trained solution.

%\textbf{Sample-Independent Adaptability}\hspace{.5em}
\paragraph{Sample-Independent Adaptability}
The term $\lambda$ is often called the \textit{adaptability}. It is a measure of similarity in the labeling functions of $\mathbb{S}$ and $\mathbb{T}$, characterizing the extent to which one hypothesis in $\mathcal{H}$ can do well on \textit{both} of these distributions. When no such hypothesis exists, it is unclear how a learner could successfully adapt by minimizing risk on the source distribution \citep{ben2010theory}. Importantly, this term has been central to the theoretical discussion of failure-cases in widely used DA algorithms \citep{johansson2019support, zhao2019learning}. Meanwhile, estimation of $\lambda$ remains an under-studied research area \citep{redko2020ASO}. One problem, which we observe, is independence of $\lambda$ from the samples $S$ and $T$. In particular, one cannot directly compute the population statistic $\lambda$ in typical circumstances. Instead, one might estimate using $\min_\eta \mathbf{R}_S(\eta) + \mathbf{R}_T(\eta)$, but this requires verifying generalization of a learned model $h^* \in \argmin_\eta \mathbf{R}_S(\eta) + \mathbf{R}_T(\eta)$ using a holdout set or some other descriptor of generalization performance (e.g., such as a learning bound). This is undesirable when, as in this paper, we wish to study adaptability in an empirical context. As we show in later experiments (Section~\ref{sec:experiments}), the extra generalization requirement typically inflates our estimation of $\lambda$, and subsequently, mars the results we would like to interpret.

%\textbf{Binary Label Space}\hspace{.5em}
\paragraph{Binary Label Space}
It is also important to note that this bound was designed for binary learners. Computation of the $\mathcal{H}\Delta\mathcal{H}$-divergence is the most concerning issue, since existing algorithms for computation rely on symmetry of $\mathcal{H}$ and ERM over the class $\mathcal{H}\Delta\mathcal{H}$. In Section~\ref{sec:method_approx}, we discuss these issues in detail and present some solutions.
\subsubsection{A PAC-Bayesian Bound for Binary Learners}
% \begin{theorem}\label{thm:germain2020pac}
% \citep{germain2020pac} Let $\mathcal{Y}$ be binary, $\mathbb{P}$ any distribution over $\mathcal{H}$, and $\omega > 0$. For all $\delta > 0$, w.p. at least $1-\delta$, for all distributions $\mathbb{Q}$ over $\mathcal{H}$,
% \begin{equation}\small
% \begin{split}
%     & \mathbf{R}_\mathbb{T}(\mathbb{Q}) \leq \omega' ( \mathbf{R}_S(\mathbb{Q}) + |\mathrm{d}_S(\mathbb{Q}) - \mathrm{d}_T(\mathbb{Q})| ) \\
%     & + |\mathrm{e}_\mathbb{S}(\mathbb{Q}) - \mathrm{e}_\mathbb{T}(\mathbb{Q})| + 2\omega \tfrac{\mathrm{KL}(\mathbb{Q} \mid \mid \mathbb{P})  - \ln ( \delta / 3) }{m\omega'} + 2(\omega' - 1)
% \end{split}
% \end{equation}
% where $\omega' = 2\omega / (1 - \exp(-2\omega))$ and for $H_i \sim (\mathbb{Q})_i$, $(X,Y) \sim \mathbb{S}$ we have
% \begin{equation}\small
% \begin{split}
%     & \mathrm{e}_\mathbb{S}(\mathbb{Q}) \defn \mathbf{E}[(1-\mathbf{1}_{\{H_1(X)\}}\{Y\}) (1-\mathbf{1}_{\{H_2(X)\}}\{Y\})], \\
%     & \mathrm{d}_{\mathbb{S}}(\mathbb{Q}) \defn \mathbf{E} [1 - \mathbf{1}_{\{H_1(X)\}}\{H_2(X)\}].
% \end{split}
% \end{equation}
% \end{theorem}
% % \begin{proof}
% % This is a simplification of Thm.~7 of \citet{germain2020pac}. We set $\omega = a$ in the original notation and use the fact that $\omega / (1 - \exp(-\omega))$ is increasing for $\omega > 0$. \end{proof}
% %In this section, we consider Thm.~\ref{thm:germain2020pac} in Appendix~\ref{sec:germain2020pac}, which is a result of \citet{germain2020pac}.
We give Thm.~\ref{thm:germain2020pac} in Appendix~\ref{sec:germain2020pac}, which is one of the first PAC-Bayesian adaptation bounds. While \citet{germain2013pac, germain2016new, germain2020pac} propose other bounds, we focus on Thm.~\ref{thm:germain2020pac} because it is easiest to compare to the proposal of \citet{ben2010theory}. 
% In particular, the absolute difference in disagreement $\mathrm{d}$ can be viewed as similar to the $\mathcal{H}\Delta\mathcal{H}$-divergence and the absolute difference in joint-error $\mathrm{e}$ can be viewed as similar to the adaptability $\lambda$ \citep{germain2020pac}. 
While tailored to Thm.~\ref{thm:germain2020pac}, the weaknesses discussed below are generally applicable to other bounds of Germain et al. 

%\textbf{Benefits Compared to Thm.~\ref{thm:ben2010theory}}\hspace{.5em} 
\paragraph{Benefits Compared to Thm.~\ref{thm:ben2010theory}}
One benefit of Thm.~\ref{thm:germain2020pac} is that the divergence employed in this bound is \textbf{model-dependent} (rather than independent); namely, it depends on the Gibbs predictor $\mathbb{Q}$, whose target error we bound. As mentioned, model-independence is an overly conservative quality and \citet{germain2020pac} show this formally by proving their divergence actually lowerbounds $\mathbf{d}_{\mathcal{H}\Delta\mathcal{H}}$ for all $\mathbb{Q}$ and $\mathcal{H}$. Another primary benefit is that Thm.~\ref{thm:germain2020pac} employs a \textbf{non-uniform} sample complexity. Specifically, complexity is measured through a KL-divergence $\mathrm{KL}(\mathbb{Q} \mid \mid \mathbb{P})$, which explicitly depends on the outcome of the learning algorithm $\mathbb{Q}$. Simply put, a model is complex if it deviates much from our prior knowledge, which is captured in the prior $\mathbb{P}$.
% Recalling our previous discussion, we expect this non-uniform notion of complexity to better capture the observed sample-efficiency of neural-networks. PAC-Bayesian bounds are actually (exclusively) known for this quality. As we are aware, when computed in practice, non-uniform bounds based on PAC-Bayes are the only learning bounds which have been shown to accurately describe a neural-network's generalization \citep{dziugaite2021role, perez2021tighter, sicilia2021}. In fact, it was only recently that non-vacuous learning bounds have been computed on standard datasets \citep{dziugaite2017computing, zhou2018non}; these also depend on the PAC-Bayesian framework.

% \textbf{Weaknesses Shared with Thm.~\ref{thm:ben2010theory}}\hspace{.5em}
\paragraph{Weaknesses Shared with Thm.~\ref{thm:ben2010theory}}
Despite its benefits over Thm.~\ref{thm:ben2010theory}, Thm.~\ref{thm:germain2020pac} also shares some weaknesses. First, the proposed adaptability term is also \textbf{sample-independent}. Second, the bound is still designed for a \textbf{binary label space} $\mathcal{Y}$. Unlike the case of Thm.~\ref{thm:ben2010theory}, it is not computation of the bound that causes concern, but the \textit{validity} of the bound in mutliclass settings. In particular, the problem arises because the proof of Thm.~\ref{thm:germain2020pac} relies on a decomposition of the risk which assumes $|\mathcal{Y}| = 2$. This decomposition does not hold, in general, when $\mathcal{Y}$ is larger. In fact, \citet{germain2020pac}, themselves, observe Thm.~\ref{thm:germain2020pac} is not easily extended to multiclass settings, leaving the investigation of such PAC-Bayes bounds as an open problem. For some additional empirical study of Thm.~\ref{thm:germain2020pac}, see Appendix~\ref{sec:comp2germain}.
% Empirically, in Appendix~\textcolor{red}{X} we 
% Finally, we observe that Thm.~\ref{thm:germain2020pac} requires optimization of $\omega$ to be useful in practical settings. Most other PAC-Bayes bounds employed in practical settings either handle the optimization of free constants directly \citep{langford2001not, dziugaite2017computing} or, more recently, rely on variations which remove the dependence on free constants \citep{dziugaite2021role, perez2021tighter}.
\subsection{Other Related Works}
\label{sec:related}
Besides those works discussed above, there are some additional works, which propose alternate theories of adaptation. Some theories of adaptation use distinct integral probability metrics in place of the $\mathcal{H}$-divergence \citep{redko2017theoretical, shen2018wasserstein, johansson2019support}, while others have sought to generalize and modify the $\mathcal{H}$-divergence \citep{mansour2009domain, kuroki2019unsupervised, zhang2019bridging}. Meanwhile, others focus on assumptions distinct from small adaptability. These include covariate shift \citep{sugiyama2007covariate, you2019towards}, label shift \citep{lipton2018detecting} and \textit{generalized} label shift \citep{tachet2020domain}. The DA problem can also be modeled through causal graphs \citep{zhang2015multi, magliacane2018domain} and some extensions to DA consider a meta-distribution over targets \citep{blanchard2021domain, albuquerque2019adversarial, deng2020representation}. Notably, most assumptions are untestable in practice and not many works consider such testing, even in controlled research settings where it might be possible. As we are aware, we are the first to use a sample-dependent adaptability, which improves estimation in empirical study.

In adaptation, PAC-Bayesian results are almost exclusively due to \citet{germain2013pac, germain2016new, germain2020pac}. Albeit, in transfer learning some work does exist \citep{li2007bayesian, mcnamara2017risk}. Most directly, our work employs the PAC-Bayes bound of \citet{maurer2004note} in proofs as well as some techniques of \citet{langford2001not, dziugaite2017computing}, and \citet{perez2021tighter} in empirical study. Most notably, ours is the only PAC-Bayesian work to propose multiclass adaptation bounds. A more in depth coverage of relevant literature -- for adaptation and PAC-Bayes -- is available in Appendix~\ref{sec:ext_related}.

%%%%%%%%%%%%%%%%%%%%%%%%%%%%%%%%%%%%%%%%%%%%%%%%
%%%%%%%%%%%%%%%%%%%%%%%%%%%%%%%%%%%%%%%%%%%%%%%%
\section{Proposed Bounds}
\label{sec:method}
In this section, we give the proposed adaptation bounds for multiclass learners. We also provide novel algorithms for computing two multiclass divergence terms and compare these to existing approaches. Lastly, we give a second adaptation bound which removes the need for inefficient Monte-Carlo estimation of divergences dependent on a Gibbs predictor. Proof of all results is given in Appendix~\ref{sec:proofs}.
\subsection{A PAC-Bayesian Adaptation Bound for Multiclass Learners}
\label{sec:method_bound}
We begin by introducing a model-dependent class of hypotheses similar to $\mathcal{H}\Delta\mathcal{H}$. Precisely, for $h \in \mathcal{H}$,
\begin{equation}\small
    h\Delta\mathcal{H} \defn \big \{x \mapsto 1-\mathbf{1}_{\{h(x)\}}\{h'(x)\} \mid h' \in \mathcal{H} \big \}.  
\end{equation}
With it, we propose to use the $h\Delta\mathcal{H}$-divergence $\mathbf{d}_{h\Delta\mathcal{H}}$ in our adaptation bounds. This divergence is a \textbf{model-dependent} extension of the $\mathcal{H}\Delta\mathcal{H}$-divergence, applicable in multiclass settings. It is easy to observe from the definitions that this new divergence lowerbounds the $\mathcal{H}\Delta\mathcal{H}$-divergence for all $\mathcal{H}$ and all $h$. Both \citet{zhang2019bridging} and \citet{kuroki2019unsupervised} study similar divergences for bounding 01-loss in the binary case, but we are first to use this divergence with non-uniform sample complexity, and also, the first to use this divergence in an adaptation bound for multiclass learners.\footnote{We discuss a multiclass proposal of \citet{zhang2019bridging} later, but it is based on margin loss and used for uniform convergence.} Our full proposal requires novel technique and theoretical study to compute this divergence (see Section~\ref{sec:method_approx}).

Next, we give the proposed adaptation bound. As alluded, the bound has a number of notable features and we expand on these in comparison to Thms.~\ref{thm:ben2010theory} and \ref{thm:germain2020pac} after its statement.
\begin{theorem}
\label{thm:pb-bound}
For any $\mathbb{P}$ over $\mathcal{H}$, all $\delta > 0$, w.p. at least $1-\delta$, for all $\mathbb{Q}$ over $\mathcal{H}$
\begin{equation}\small
\begin{split}
    \mathbf{R}_\mathbb{T}(\mathbb{Q}) & \leq \tilde{\lambda}_{S, T} + \mathbf{R}_S(\mathbb{Q}) + \mathbf{E}_{H \sim \mathbb{Q}}[\mathbf{d}_{\mathcal{C}_{H}}(S_X, T_X)] \\
    & + \sqrt{\tfrac{\mathrm{KL}(\mathbb{Q} \mid \mid \mathbb{P}) + \ln \sqrt{4m} - \ln ( \delta ) }{2m}}
\end{split}
\end{equation}
where $\tilde{\lambda}_{S,T} = \min_{\eta \in \mathcal{H}} \mathbf{R}_S(\eta) + \mathbf{R}_T(\eta)$ and we may choose either $\mathcal{C}_h = \mathcal{H} \Delta \mathcal{H}$ for all $h$ as before or $\mathcal{C}_h = h \Delta \mathcal{H}$.
\end{theorem}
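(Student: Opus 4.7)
The plan is to apply Maurer's PAC-Bayes bound once --- to convert the population target Gibbs risk $\mathbf{R}_\mathbb{T}(\mathbb{Q})$ into its empirical counterpart $\mathbf{R}_T(\mathbb{Q})$ --- and then to decompose the empirical quantity using a multiclass version of the Ben-David inequality, so that concentration and decomposition are cleanly separated. Applied to the $[0,1]$-valued $0$-$1$ loss on the i.i.d.\ target sample $T \sim \mathbb{T}^m$ with a data-independent prior $\mathbb{P}$, Maurer's inequality immediately yields, with probability at least $1-\delta$ and simultaneously for every posterior $\mathbb{Q}$,
\begin{equation*}
\mathbf{R}_\mathbb{T}(\mathbb{Q}) \leq \mathbf{R}_T(\mathbb{Q}) + \sqrt{\tfrac{\mathrm{KL}(\mathbb{Q}\|\mathbb{P}) + \ln\sqrt{4m} - \ln\delta}{2m}},
\end{equation*}
using $\ln(2\sqrt{m}/\delta) = \ln\sqrt{4m} - \ln\delta$. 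This will be the only invocation of concentration in the proof.

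Next I would decompose $\mathbf{R}_T(h)$ deterministically. The pointwise multiclass triangle $\mathbf{1}[h(x)\neq y] \le \mathbf{1}[h(x)\neq \eta(x)] + \mathbf{1}[\eta(x)\neq y]$ holds for arbitrary $|\mathcal{Y}|$ and every auxiliary $\eta \in \mathcal{H}$ --- this is exactly what fails in the binary-specific risk factorization used in the proof of Thm.~\ref{thm:germain2020pac}. Averaging against $(X,Y)\sim T$ gives $\mathbf{R}_T(h) \le \mathbf{R}_T(\eta) + \mathbf{R}_{T,\eta}(h)$ with $\mathbf{R}_{T,\eta}(h) := \mathbf{Pr}_{X\sim T_X}[h(X)\neq \eta(X)]$. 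Because the disagreement indicator $x\mapsto 1-\mathbf{1}_{\{h(x)\}}\{\eta(x)\}$ lies in $\mathcal{C}_h$ --- both $\mathcal{H}\Delta\mathcal{H}$ and $h\Delta\mathcal{H}$ contain it by construction --- evaluating the $\mathcal{C}_h$-divergence at the empirical marginals gives $\mathbf{R}_{T,\eta}(h) \le \mathbf{R}_{S,\eta}(h) + \mathbf{d}_{\mathcal{C}_h}(S_X,T_X)$, and one more application of the triangle at the source level gives $\mathbf{R}_{S,\eta}(h) \le \mathbf{R}_S(h) + \mathbf{R}_S(\eta)$. Chaining these and minimizing over $\eta$ yields the purely empirical bound
\begin{equation*}
\mathbf{R}_T(h) \le \tilde{\lambda}_{S,T} + \mathbf{R}_S(h) + \mathbf{d}_{\mathcal{C}_h}(S_X,T_X).
\end{equation*}
Averaging over $h\sim \mathbb{Q}$ and composing with Step 1 closes the proof.

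I expect the argument to go through quite directly, for two reasons: (i) the multiclass triangle is valid for any $|\mathcal{Y}|$, so no uniform convergence over $\eta$ is ever needed --- the infimum defining $\tilde{\lambda}_{S,T}$ commutes with the remaining right-hand side; and (ii) the disagreement indicator always lives in $\mathcal{C}_h$ regardless of whether we take $\mathcal{C}_h = \mathcal{H}\Delta\mathcal{H}$ or the model-dependent $\mathcal{C}_h = h\Delta\mathcal{H}$, which is what licenses the two stated choices in the theorem. The only mild subtlety --- not so much an obstacle as a point to state carefully --- is that Step 1's high-probability event is over the (conceptually existing) labeled target sample $T$, whereas the adaptation algorithm actually only consumes $(S, T_X)$; the sampling of $T$ is the probabilistic object, and the inequality itself is a statement about quantities that happen to depend on the labels only through $\tilde{\lambda}_{S,T}$.
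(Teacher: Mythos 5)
Your proposal is correct and follows essentially the same route as the paper: the pointwise multiclass triangle inequality for the discrete metric, a Ben-David-style decomposition applied to the \emph{empirical} distributions $S$ and $T$ (which is exactly what yields the sample-dependent $\tilde{\lambda}_{S,T}$), averaging over $H\sim\mathbb{Q}$, and a single invocation of Maurer's bound (plus Pinsker) for the $T\to\mathbb{T}$ gap. The only cosmetic difference is that the paper phrases the intermediate steps via absolute error gaps and Jensen's inequality, whereas you work with one-sided inequalities throughout; the substance is identical.
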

% \begin{proofsk}
% The result relies on three main Lemmas. Lemma~\ref{lem:maurer2004note} is the PAC-Bayes bound of \citet{maurer2004note} which lets us bound $\mathbf{R}_T(\mathbb{Q}) - \mathbf{R}_\mathbb{T}(\mathbb{Q})$. Lemma~\ref{lem:ben2010theory_multi} and Lemma~\ref{lem:ben2010theory_stoch} are then used in combination to bound $\mathbf{R}_S(\mathbb{Q}) - \mathbf{R}_T(\mathbb{Q})$. Lemma~\ref{lem:ben2010theory_multi} is a multiclass extension of an important inequality in the proof of Thm.~\ref{thm:ben2010theory}, which also holds in case $\mathcal{C}_h = h \Delta \mathcal{H}$. It simply requires a verification of the triangle-inequality for 01-loss \citep{ben2007analysis, crammer2007learning} in the multiclass case. Lemma~\ref{lem:ben2010theory_stoch} characterizes the inequality of Lemma~\ref{lem:ben2010theory_multi} in expectation over $\mathbb{Q}$. It mostly relies on Jensen's Inequality and a treatment of the random samples $S$ and $T$ as distributions themselves through the identification in Eq.~\eqref{eqn:sample_pmf}.
% \end{proofsk}
% \textbf{Comparison to Thms.~\ref{thm:ben2010theory} and \ref{thm:germain2020pac}}\hspace{.5em} 
\paragraph{Comparison to Thms.~\ref{thm:ben2010theory} and \ref{thm:germain2020pac}}
By design, the bound proposed above resolves the weaknesses mentioned in Section~\ref{sec:background_bounds}. First and foremost, we remove the requirement that $\mathcal{Y}$ is binary. Second, we use a \textbf{non-uniform} notion of sample complexity; i.e., $\mathrm{KL}(\mathbb{Q}\mid\mid\mathbb{P})$. Third, Thm.~\ref{thm:pb-bound} allows for \textit{either} a \textbf{model-dependent} or \textbf{model-independent} notion of data-distribution divergence. While model-independent divergences do have some weaknesses, we retain them in our bound since, as discussed later, they can be more efficient. 
% Therefore, choosing $\mathcal{C}_h = \mathcal{H} \Delta \mathcal{H}$ will be an overly conservative choice when comparing individual models. On the other hand, it can be a more efficient choice when one wishes to compare neural architectures for adaptation. 
Lastly, Thm.~\ref{thm:pb-bound} employs a \textbf{sample-dependent} notion of adaptability. Compared to $\lambda$, the new adaptability $\tilde{\lambda}_{S, T}$ is the smallest error achievable on the \textit{samples}. In research contexts wherein we assume access to target labels for purpose of studying our assumptions, we later show that this quantity is fairly easy to empirically bound. 
% (typically unobserved) quantities to verify assumptions, this removes the need to verify generalization beyond the samples in hand. Subsequently, the information we care about is isolated as demonstrated, empirically, in Section~\ref{sec:experiments}. As a final observation, compared to Thm.~\ref{thm:germain2020pac}, this bound removes any free-constants which need to be optimized. Barring a some additional details, which will be discussed in the next parts, the bound is ready to compute as presented.
\subsection{Approximating Multiclass Divergence}
\label{sec:method_approx}
\subsubsection{The Multiclass \texorpdfstring{$\mathcal{H}\Delta\mathcal{H}$}{H{Delta}H}-divergence}
\label{sec:method_approx_mid_div} First, since the $\mathcal{H}\Delta\mathcal{H}$-divergence is model-independent, the expectation with respect to the Gibbs predictor $\mathbb{Q}$ simplifies significantly. In particular, if $\mathcal{C}_h = \mathcal{H}\Delta\mathcal{H}$, we have
\begin{equation}\small
\mathbf{E}_{H \sim \mathbb{Q}}[\mathbf{d}_{\mathcal{C}_H}(S_X, T_X)] = \mathbf{d}_{\mathcal{H}\Delta\mathcal{H}}(S_X, T_X).
\end{equation}
Thus, computation of this divergence simplifies to computing the $\mathcal{H}\Delta\mathcal{H}$-divergence for multiclass learners. 

%\textbf{Summary of Approach}\hspace{.5em} 
\paragraph{Summary of Approach}
In general, we take inspiration from the proposal of \citet{ben2010theory} who compute $\mathcal{H}\Delta\mathcal{H}$-divergence when models in $\mathcal{H}$ have binary output. Namely, we frame computation as minimization of error in a specific classification problem. To adapt this strategy to the multiclass setting, we do two primary things. First, we remove the assumption that $\mathcal{H}$ is symmetric. This is important for multiclass settings since we have no reason to believe $\mathcal{H}\Delta\mathcal{H}$ is typically symmetric.
% \footnote{We do not formally prove this in any case. Informally, it appears to be true by a counting argument. There are more ways for multiclass predictors to disagree than agree.}
We replace the symmetry in $\mathcal{H}\Delta\mathcal{H}$ with a symmetry in our classification problems. Second, for score-based classifiers such as neural networks, we give an optimization procedure for approximating ERM over this class based on a surrogate loss function. As we are aware, this is the first algorithm for approximating ERM over $\mathcal{H}\Delta\mathcal{H}$ when models in $\mathcal{H}$ have multiclass output.

\paragraph{Reduction to ERM}
\begin{theorem}
\label{thm:mid_div_red2erm}
Let $\mathcal{C} = \mathcal{H}\Delta\mathcal{H}$. Almost surely,
\begin{equation}\small
    \mathbf{d}_{\mathcal{C}}(S_X, T_X) = \max \begin{rcases}
    \begin{dcases}
       1 - \min_ {\varphi \in \mathcal{C}}
       \mathbf{R}_P(\varphi) + \mathbf{R}_Q(\varphi), \\
      1 - \min_ {\varphi \in \mathcal{C}} \mathbf{R}_U(\varphi) + \mathbf{R}_V(\varphi)
    \end{dcases}
  \end{rcases}
\end{equation}
where
\begin{equation}\small
\begin{split}
    P & = \big((X_i, 1) \mid X_i \in S_X \big), \ Q = \big((\tilde{X}_i, 0) \mid \tilde{X}_i \in T_X \big), \\
    U & = \big((X_i, 0) \mid X_i \in S_X \big), \ V = \big ((\tilde{X}_i, 1 ) \mid \tilde{X}_i \in T_X \big).
\end{split}
\end{equation}
\end{theorem}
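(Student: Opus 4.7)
The plan is to unfold both sides of the claim into expectations of $\varphi$ under the empirical distributions $S_X$ and $T_X$ and match them term by term. Since every $\varphi \in \mathcal{C} = \mathcal{H}\Delta\mathcal{H}$ is $\{0,1\}$-valued, the definition of $\mathbf{d}_{\mathcal{C}}$ in Eq.~\ref{eqn:hyp_class_divergence} gives
\begin{equation*}
\mathbf{d}_{\mathcal{C}}(S_X, T_X) = \sup_{\varphi \in \mathcal{C}} \bigl\lvert \mathbf{E}_{X \sim S_X}[\varphi(X)] - \mathbf{E}_{\tilde{X} \sim T_X}[\varphi(\tilde{X})] \bigr\rvert.
\end{equation*}
Splitting the absolute value into its positive and negative parts rewrites this as $\max\{A, B\}$ where
\begin{equation*}
A \defn \sup_{\varphi \in \mathcal{C}} \bigl(\mathbf{E}_{S_X}[\varphi] - \mathbf{E}_{T_X}[\varphi]\bigr), \qquad B \defn \sup_{\varphi \in \mathcal{C}} \bigl(\mathbf{E}_{T_X}[\varphi] - \mathbf{E}_{S_X}[\varphi]\bigr).
\end{equation*}
This outer $\max$ already matches the target statement; it remains to identify each branch with the corresponding ERM quantity.

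Next I would use the constant-label structure of $P,Q,U,V$ in conjunction with the empirical-risk definitions in Eqs.~\ref{eqn:risk} and~\ref{eqn:sample_pmf}. On $P$ every label is $1$, so $\mathbf{R}_P(\varphi) = 1 - \mathbf{E}_{X \sim S_X}[\varphi(X)]$; on $Q$ every label is $0$, so $\mathbf{R}_Q(\varphi) = \mathbf{E}_{\tilde{X} \sim T_X}[\varphi(\tilde{X})]$. Adding gives $\mathbf{R}_P(\varphi) + \mathbf{R}_Q(\varphi) = 1 - (\mathbf{E}_{S_X}[\varphi] - \mathbf{E}_{T_X}[\varphi])$, so $1 - \min_\varphi \mathbf{R}_P(\varphi) + \mathbf{R}_Q(\varphi) = A$. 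An identical computation with source points labeled $0$ and target points labeled $1$ yields $1 - \min_\varphi \mathbf{R}_U(\varphi) + \mathbf{R}_V(\varphi) = B$. Taking the outer maximum matches $\mathbf{d}_{\mathcal{C}}(S_X, T_X) = \max\{A,B\}$, completing the identity. The almost-sure qualifier arises only because $S$ and $T_X$ are themselves random; the algebra holds for every realization for which the empirical pmfs of Eq.~\ref{eqn:sample_pmf} are well-defined.

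The delicate point I expect to need the most care, and the reason two minimizations appear in contrast to the single ERM used by \citet{ben2010theory} in the binary case, is that $\mathcal{H}\Delta\mathcal{H}$ is \emph{not} in general closed under $\varphi \mapsto 1-\varphi$ when $|\mathcal{Y}|>2$. Binary symmetry of $\mathcal{H}$ forces $\mathcal{H}\Delta\mathcal{H}$ to be closed under this flip, which collapses $A=B$ and lets a single ERM problem recover the divergence. In the multiclass case, $1-\varphi$ is an \emph{agreement} indicator and need not lie in $\mathcal{H}\Delta\mathcal{H}$, so the two sides of the absolute value genuinely differ and both ERM problems must be carried through. I would flag this explicitly alongside the statement to emphasize that the outer $\max$ on the right is a substantive multiclass feature rather than a cosmetic redundancy.
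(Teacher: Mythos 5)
Your proposal is correct and follows essentially the same route as the paper's proof: expand the divergence, split the absolute value into the maximum of the two signed differences, and identify each branch with the corresponding sum of class-conditional risks via the constant-label construction of $P, Q, U, V$ (the paper phrases this through the law of complements applied to $\mathbf{Pr}(\varphi(X)=1)$ rather than through expectations, which is the same computation). The only detail the paper makes explicit that you elide is that the supremum is attained --- so that writing $\min$ rather than $\inf$ is legitimate --- because $S_X$ and $T_X$ are finitely supported.
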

Notice, pooled samples $P +_\mathrm{c} Q$ and $U +_\mathrm{c} V$ define binary classification problems ($+_\mathrm{c}$ is concatenation). Namely, they represent an identification problem wherein the learner must distinguish between the samples $S_X$ and $T_X$. To compute divergence as above, we need only select $\varphi$ to minimize the sum of class-conditional error rates for these problems. Even in simple cases, risk-minimization can be computationally hard \citep{shalev2014understanding}. Thus, we instead select $\varphi$ by optimizing a surrogate loss.

% \textbf{Approximate Minimization via Surrogate}\hspace{.5em}
\paragraph{Approximate Minimization via Surrogate}
WLOG, $\mathcal{Y} = \{1, \ldots, C\}$. We consider a score-based class $\mathcal{S}$ written
\begin{equation}\small
    \label{eqn:mcsb_hclass}
    \mathcal{S} \defn \big \{ \Psi_\mathbf{f} \mid \mathbf{f} \in \mathcal{F} \big\}; \quad \Psi_\mathbf{f}(x) \defn  \argmax\nolimits_{\ell \in [C]} \mathbf{f}_\ell(x)
\end{equation}
with $\mathcal{F} \subseteq \{\mathbf{f} \mid \mathbf{f}_\ell : \mathcal{X} \to \mathbb{R}, \ \ell \in [C]\}$ a class of scoring-functions. In case of ties, suppose $\argmax$ returns the least label. Using the na\"ive definition in Eq.~\eqref{eqn:mid_class_defn},
\begin{equation}\small
    \mathcal{S}\Delta\mathcal{S} \defn \big \{x \mapsto 1-\mathbf{1}_{\{\Psi_\mathbf{f}(x)\}}\{\Psi_\mathbf{g}(x)\} \mid (\mathbf{f},\mathbf{g}) \in \mathcal{F}^2 \big \}.
\end{equation}
At first glance, it is unclear how to pick $\varphi \in \mathcal{S}\Delta\mathcal{S}$ to minimize error on a given dataset. So, in place of this obscure definition, the following result gives a surrogate loss which upperbounds the 01-loss on the original problem. Thus, we indirectly reduce the error by minimizing the surrogate.
\begin{theorem}
\label{thm:surrogate_loss}
% if this happens, h = 0 => L = 1 >= 01-loss
% Let $x \in \mathcal{X}$ and $(\mathbf{f}, \mathbf{g}) \in \mathcal{F}^2$ such that maximum scores in $\mathbf{f}(x)$ and $\mathbf{g}(x)$ are unique. 
Suppose $\tau : \mathbb{R} \to \mathbb{R}_{\geq 0}$ is differentiable and monotone increasing. Let $\mathbf{A} = \big (\tau \circ \mathbf{g}(x) \big) \cdot \big (\tau \circ \mathbf{f}(x)^\mathrm{T} \big )$ with $\tau$ applied element-wise and $\mathbf{f}, \mathbf{g} \in \mathcal{F}$. Set
% \begin{equation}\small
%     1-\mathbf{1}_{\{\Psi_\mathbf{f}(x)\}}\{\Psi_\mathbf{g}(x)\} = \argmax\nolimits_{\ell \in \{0,1\}} \mathbf{h}_\ell(x)
% \end{equation}
% where $\mathbf{h}(x) = (z_x, -z_x)$ and
\begin{equation}\small
\begin{split}
    z(x) & \defn \max\nolimits_{(j,k) \in [C]^2} \mathbf{A}_{jk} - \max\nolimits_{i \in [C]} \mathbf{A}_{ii}, \\
    \mathcal{L}(z, y) & \defn \ln(1 + \exp(-(2y-1) \cdot z)) / \ln(2).
\end{split}
\end{equation}
Then, if $\varphi(x) = 1-\mathbf{1}_{\{\Psi_\mathbf{f}(x)\}}\{\Psi_\mathbf{g}(x)\}$, we have
\begin{equation}\label{eqn:score-based-ub}\small
    \mathbf{R}_\mathbb{D}(\varphi) \leq \mathbf{E}_{(X,Y) \sim \mathbb{D}}  \ \mathcal{L}(z(X), Y)
\end{equation}
for any distribution $\mathbb{D}$ s.t. $\mathbf{f}(X)$ has no repeated scores and $\mathbf{g}(X)$ has no repeated scores, almost surely.\footnote{This stipulation on $\mathbb{D}$ is not too strict. It only assumes ties in the scores of $\mathbf{f}$ or $\mathbf{g}$ are \textit{very} unlikely, so these ties can be ignored.}
\end{theorem}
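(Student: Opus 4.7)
The plan is to reduce the claim to a pointwise inequality $\mathbf{1}\{\varphi(x) \neq y\} \leq \mathcal{L}(z(x), y)$ holding $\mathbb{D}$-almost surely, and then take expectations. The chief structural observation I would exploit is that the matrix $\mathbf{A}$ is a rank-one outer product of two nonnegative vectors $\tau \circ \mathbf{g}(x)$ and $\tau \circ \mathbf{f}(x)$. Because of nonnegativity, the $(j,k)$-maximum of a product factorises into the product of the coordinatewise maxima; because $\tau$ is monotone increasing, those coordinatewise maxima are attained at the same indices as the untransformed scores. Consequently, I would first show that $\max_{(j,k)} \mathbf{A}_{jk} = \tau(\mathbf{g}_{\Psi_\mathbf{g}(x)}(x))\,\tau(\mathbf{f}_{\Psi_\mathbf{f}(x)}(x)) = \mathbf{A}_{\Psi_\mathbf{g}(x),\,\Psi_\mathbf{f}(x)}$, with the no-ties hypothesis on $\mathbf{f}(X)$ and $\mathbf{g}(X)$ used only to make the argmax indices unambiguous almost surely.

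With that identification in hand, two elementary facts about $z$ will do the rest. First, $z(x) \geq 0$ trivially, since the diagonal entries $\{\mathbf{A}_{ii}\}$ are contained in the set over which the larger maximum is taken. Second, $z(x) = 0$ whenever $\varphi(x) = 0$: if $\Psi_\mathbf{f}(x) = \Psi_\mathbf{g}(x) = i^*$, then the global maximiser $(\Psi_\mathbf{g}(x), \Psi_\mathbf{f}(x))$ lies on the diagonal at $(i^*, i^*)$, forcing $\max_{(j,k)} \mathbf{A}_{jk} \leq \max_i \mathbf{A}_{ii}$ and hence equality.

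Given these two facts, the pointwise bound follows from a clean four-case check of $(\varphi(x), y) \in \{0,1\}^2$. When $\varphi(x) = 0$, we have $z(x) = 0$ a.s., so $\mathcal{L}(z(x), y) = \log_2 2 = 1$, which dominates $\mathbf{1}\{y \neq 0\} \in \{0,1\}$. When $\varphi(x) = 1$ and $y = 1$, the 0-1 loss vanishes and any nonnegative surrogate value suffices. When $\varphi(x) = 1$ and $y = 0$, we use $z(x) \geq 0$ to get $\mathcal{L}(z(x), 0) = \log_2(1 + e^{z(x)}) \geq \log_2 2 = 1$, matching the 0-1 loss. In every case the inequality holds, and I would then integrate against $\mathbb{D}$ to recover Eq.~\eqref{eqn:score-based-ub}.

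I expect the main obstacle to be the very first step: writing down a careful, notationally clean argument that the outer-product structure of $\mathbf{A}$ together with monotonicity of $\tau$ forces the global max to sit at $(\Psi_\mathbf{g}(x), \Psi_\mathbf{f}(x))$, and articulating precisely how the no-ties hypothesis is (and is not) needed. One has to be mindful that $\tau$ is only assumed nonnegative, not strictly positive, so that in degenerate cases all entries of $\mathbf{A}$ may vanish; however this merely yields $z(x) = 0$, which is already handled by the bound $\mathcal{L}(0, y) = 1$ in the case analysis above. Differentiability of $\tau$ plays no role in the validity of the inequality (it is presumably there for downstream gradient-based optimisation) and I would not invoke it.
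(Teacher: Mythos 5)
Your proposal is correct and follows essentially the same route as the paper's proof: both locate the maximum of the rank-one matrix $\mathbf{A}$ via the factorization of a maximum of products of nonnegative numbers together with monotonicity of $\tau$, deduce the sign relationship between $z(x)$ and $\varphi(x)$ (with the no-ties hypothesis disambiguating the argmax), and finish with the same four-case comparison of the scaled logistic loss against the 01-loss before taking expectations. Your added remarks --- that differentiability of $\tau$ is irrelevant to the inequality and that degenerate (all-zero) $\mathbf{A}$ is harmless --- are accurate but do not change the argument.
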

We point out the log loss $\mathcal{L}(z, y)$ is differentiable with respect to $z$ and $z(x)$ is differentiable with respect to $\mathbf{f}$ and $\mathbf{g}$. In practice, functions in $\mathcal{F}$ -- such as $\mathbf{f}$ and $\mathbf{g}$ -- are typically differentiable with respect to a real-parameter vector, which also defines the function. For example, this is precisely the case for neural networks. In these contexts, since composition preserves differentiability, the output of the surrogate $\mathcal{L}$ is differentiable with respect to the real-parameter vector. So, the RHS of Eq.~\eqref{eqn:score-based-ub} may be approximately minimized using batch SGD. At this point, the proposed algorithm should be familiar to the typical practitioner. It is equivalent to the manner in which we usually optimize a neural network, except for  the new surrogate $(\mathbf{f}, \mathbf{g}, x, y) \mapsto \mathcal{L}(z(x), y)$. 
% As a final note, observe that $\mathcal{L}(z, y)$ may be substituted with \textit{any} differentiable function that upperbounds the 01-loss. We choose $\mathcal{L}(z, y)$, specifically, since it is equivalent to the commonly-used binary cross entropy (modulo a constant factor).
\subsubsection{The Multiclass \texorpdfstring{$h\Delta\mathcal{H}$}{h{Delta}H}-divergence}
\label{sec:method_approx_mdp_div}
When $\mathcal{C}_h = h\Delta\mathcal{H}$ the divergence term is model-dependent and the expectation with respect to the Gibbs predictor $\mathbb{Q}$ becomes a challenge. For neural networks, even the Gibbs risk $\mathbf{R}_S(\mathbb{Q})$ does not have a known analytic solution. Instead, it is common to approximate using Monte-Carlo sampling \citep{langford2001not, dziugaite2017computing, dziugaite2021role, perez2021tighter}. By Hoeffding's Inequality, w.p. at least $1-\delta$, we approximate
\begin{equation}\label{eqn:montecarlo}\small
    \underset{H \sim \mathbb{Q}}{\mathbf{E}}[\mathbf{d}_{\mathcal{C}_H}(S_X, T_X)] \leq \frac{1}{k}\sum_{i=1}^k \mathbf{d}_{\mathcal{C}_{H_i}}(S_X, T_X) + \sqrt{\tfrac{\ln 2 / \delta}{2k}}
\end{equation}
where $(H_i)_{i=1}^k \sim \mathbb{Q}^k$. Using the RHS as an approximation, our computation reduces to computing $\mathbf{d}_{\mathcal{C}_h}$ for any deterministic $h \in \mathcal{H}$. Upon sampling from $\mathbb{Q}$, we can apply the algorithm for computing $\mathbf{d}_{\mathcal{C}_h}$ to each point in the sample. In light of this, the next part focuses on computing $\mathbf{d}_{\mathcal{C}_h}$ for deterministic $h$. We proceed as before, reducing computation to risk-minimization for a specific classification problem.

\textbf{Reduction to ERM with Constrained Labeling Function}
\begin{theorem}
\label{thm:mdp_div_red2erm}
Almost surely, for all $h \in \mathcal{H}$
\begin{equation}\small
\mathbf{d}_{\mathcal{C}_h}(S_X, T_X) =  \max \begin{rcases}
    \begin{dcases}
       1 - \underset{\bar{h} \in \Upsilon}{\min_ {\varphi \in \mathcal{H},}}
       \mathbf{R}_{P}(\varphi) + \mathbf{R}_{Q}(\varphi), \\
      1 - \underset{\bar{h} \in \Upsilon}{\min_ {\varphi \in \mathcal{H},}} \mathbf{R}_{U}(\varphi) + \mathbf{R}_{V}(\varphi)
    \end{dcases}
  \end{rcases}
\end{equation}
where $\mathcal{C}_h = h\Delta\mathcal{H}$ and
\begin{equation}\small
\begin{split}
    P({\bar{h}}) & = \big ((X_i, \bar{h}(X_i)) \mid X_i \in S_X \big ), \\
    Q & = \big ((\tilde{X}_i, h(\tilde{X}_i)) \mid \tilde{X}_i \in T_X \big ), \\
    U & = \big ((X_i, h(X_i)) \mid X_i \in S_X \big),\\
    V({\bar{h}}) & = \big((\tilde{X}_i, \bar{h}(\tilde{X}_i)) \mid \tilde{X}_i \in T_X \big)
\end{split}
\end{equation}
and $\Upsilon = \{\bar{h} \in \mathcal{Y}^\mathcal{X} \mid \bar{h}(x) \neq h(x), \ \forall x \in \mathcal{X}\}$.
\end{theorem}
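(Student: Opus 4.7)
The approach mirrors the proof of Thm.~\ref{thm:mid_div_red2erm}, but must accommodate the model-dependence of $\mathcal{C}_h = h\Delta\mathcal{H}$ through the auxiliary class $\Upsilon$. I would begin by unfolding
\[
\mathbf{d}_{h\Delta\mathcal{H}}(S_X, T_X) = \sup_{h' \in \mathcal{H}} \bigl| \mathbf{E}_{S_X}[\psi_{h,h'}(X)] - \mathbf{E}_{T_X}[\psi_{h,h'}(X)] \bigr|,
\]
where $\psi_{h,h'}(x) \defn 1 - \mathbf{1}_{\{h(x)\}}\{h'(x)\}$, and splitting the absolute value into the two one-sided suprema corresponding to the two lines of the max on the RHS.

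Concentrate on the $P(\bar h), Q$ line; the other is symmetric. By the sample definitions, $\mathbf{R}_Q(\varphi) = \mathbf{E}_{T_X}[\psi_{h,\varphi}(X)]$ for any $\varphi \in \mathcal{H}$ (identifying $\varphi$ with $h'$). The crux is the identity
\[
\min_{\bar h \in \Upsilon} \mathbf{R}_{P(\bar h)}(\varphi) = 1 - \mathbf{E}_{S_X}[\psi_{h,\varphi}(X)] \quad \text{a.s.}
\]
Since $\bar h \in \Upsilon$ is constrained only by $\bar h(x) \neq h(x)$ pointwise, and the sample-averaged objective depends on $\bar h$ only through its values on $\{X_1,\ldots,X_n\}$, the minimization decomposes across sample points. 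At each $X_i$, if $\varphi(X_i) \neq h(X_i)$ then choosing $\bar h(X_i) := \varphi(X_i)$ is admissible and yields indicator $0$; otherwise $\varphi(X_i) = h(X_i)$ forces $\bar h(X_i) \neq \varphi(X_i)$ and indicator $1$. Thus the pointwise minimum is $\mathbf{1}[\varphi(X_i) = h(X_i)] = 1 - \psi_{h,\varphi}(X_i)$, and averaging gives the claim.

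Adding $\mathbf{R}_Q$, minimizing also over $\varphi$, and subtracting from one produces
\[
1 - \min_{\varphi \in \mathcal{H},\, \bar h \in \Upsilon} \bigl[ \mathbf{R}_{P(\bar h)}(\varphi) + \mathbf{R}_Q(\varphi) \bigr] = \sup_{\varphi \in \mathcal{H}} \bigl( \mathbf{E}_{S_X}[\psi_{h,\varphi}] - \mathbf{E}_{T_X}[\psi_{h,\varphi}] \bigr).
\]
An entirely analogous argument, now with $U$ (source labels fixed to $h$) and $V(\bar h)$ (target labels free in $\Upsilon$), yields the opposite-sign one-sided supremum; the outer max restores the absolute value and recovers $\mathbf{d}_{h\Delta\mathcal{H}}$.

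The main obstacle is justifying the pointwise decomposition of $\min_{\bar h \in \Upsilon}$: one must verify that the only constraints on $\bar h$ bearing on the sample-averaged objective are the pointwise feasibility conditions at $X_1, \ldots, X_n$. This is where the ``almost surely'' clause and the ambient assumption $|\mathcal{Y}| \geq 2$ earn their keep, ensuring $\Upsilon$ is rich enough to realize every admissible pointwise choice consistently as a single global function. Everything else is bookkeeping against the sample definitions of $P(\bar h), Q, U, V(\bar h)$.
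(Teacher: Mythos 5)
Your proposal is correct and takes essentially the same route as the paper: the crux in both is the identity $\min_{\bar h \in \Upsilon}\mathbf{R}_{P(\bar h)}(\varphi) = \mathbf{Pr}(h(X)=\varphi(X))$, which the paper establishes by a set-inclusion upper bound plus an explicit minimizer $\bar h^*_\varphi$ (defined as $\varphi(x)$ where $\varphi(x)\neq h(x)$ and an arbitrary label $\neq\varphi(x)$ elsewhere) --- exactly the pointwise-optimal $\bar h$ your decomposition constructs. Your worry about gluing the pointwise choices into a single member of $\Upsilon$ is resolved precisely as you suspect, since $\Upsilon$ imposes only the pointwise constraint $\bar h(x)\neq h(x)$ and $|\mathcal{Y}|\geq 2$ guarantees an admissible label at every point.
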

As before, the result describes two classification problems. This time, the learner's goal is to agree with $h$ on one sample, while disagreeing with $h$ (in the way specified by $\bar{h}$) on the other sample. We minimize the class-conditional error rates by selecting from the class $\mathcal{H}$ used for the original prediction task, rather than $\mathcal{H}\Delta\mathcal{H}$. So, we make the obvious proposal: re-use whichever approximation technique we used to select $h$ in the first place. In our experiments, since $\mathcal{H}$ is a space of neural networks, we use batch SGD on an NLL loss.
% ; i.e., the weight is defined to balance the importance of the summed risks. 

% \textbf{A Heuristic for Selecting from $\Upsilon$}\hspace{.5em} 
\paragraph{A Heuristic for Selecting from $\Upsilon$}
Reduction to ERM in the multiclass setting also requires specification of $\bar{h} \in \Upsilon$. Specifically, $\bar{h}$ should aid in minimizing the class-conditional error rates. In our experiments, we found a simple strategy to be fairly effective. Namely, we specify $\bar{h}$ by always picking the label with the second-highest confidence according to $h$. So, $\bar{h}$ disagrees with $h$ on all of $\mathcal{X}$, but does so in the ``most reasonable'' way according to the probabilities output by $h$. This approach uses the probabilities output by $h$ to rank similarity of labeling functions and supposes the most ``similiar'' labeling function in $\Upsilon$ will be easiest for another hypothesis in $\mathcal{H}$ to simultaneously learn. Mathematically, our solution satisfies
\begin{equation}\small
\bar{h} \in \argmax\nolimits_{\upsilon \in \Upsilon} \sum\nolimits_{x \in \mathcal{X}} h_{\upsilon(x)}(x)
\end{equation}
where $h_\ell(x)$ is the score assigned to the label $\ell$. This heuristic is a practical solution that avoids search over $\Upsilon$, which will typically be unknown, unless we inefficiently enumerate using membership constraints. As we are aware, there is no known algorithm to efficiently select minimizers from $\Upsilon$ and $\mathcal{H}$, simultaneously, as called for by Thm.~\ref{thm:mdp_div_red2erm}. Besides the heuristic, we leave this problem as future work.

% \textbf{Comparison to Some Related Approaches}\hspace{.5em}
\paragraph{Comparison to Some Related Approaches}
% \citet{zhang2019bridging} propose a similar approach for approximating a mutliclass divergence which bounds a margin loss. In general, our two techniques require different consideration because of the difference in losses. 
Considering a binary label space $\mathcal{Y}$, \citet{kuroki2019unsupervised} propose a similar algorithm. The multiclass setting we consider does require some differences, primarily, related to the distinct degrees of freedom in multiclass and binary classification. First, our proposal removes the requirement that $\mathcal{\mathcal{H}}$ is symmetric since, as we are aware, this concept is not well-defined for hypotheses with multiclass output. Similar to before, we replace the symmetry required of $\mathcal{H}$ with symmetry in our classification problems. Second, in multiclass settings, the reduction strategy necessitates a new parameter to optimize: the labeling function $\bar{h}$. Besides our proposed heuristic for optimization, proof of this fact is \textit{not} a straightforward extension of the work of \citet{kuroki2019unsupervised}. In fact, it requires a different proof-technique (see Appendix~\ref{sec:mdp_div_red2erm}). In a multiclass setting, \citet{zhang2019bridging} also propose an approach for approximating a mutliclass divergence. In general, our two techniques require different consideration because their divergence is based on a margin loss, rather than 01-loss. Notably, the multiclass bounds of \citet{zhang2019bridging} use uniform sample complexity, unlike our proposed non-uniform approach. Further, working directly with 01-loss, as we do, avoids
any loosening of the bound via the margin penalty. 
\subsection{Efficiency Through Flat-Minima}
\label{sec:method_efficiency}
In full view of Section~\ref{sec:method_approx_mdp_div}, the reader may rightfully be concerned about the efficiency of the proposed technique for approximating $\mathbf{E}_H[\mathbf{d}_{\mathcal{C}_H}(\cdot, \cdot)]$. In particular, the suggestion requires training $k$ distinct neural networks: one for each $H_i \sim \mathbb{Q}$. Typically, $k$ will be large -- e.g., larger than 100 -- to control the size of the upperbound in Eq.~\eqref{eqn:montecarlo}, so this is not computationally feasible for practical applications. This problem is not totally unique to the model-dependent $h\Delta\mathcal{H}$-divergence, either. Common invariant feature-learning algorithms -- e.g., DANN \citep{ganin2015unsupervised} -- actually modify the feature distribution over which the classifier learns. In these cases, even the $\mathcal{H}\Delta\mathcal{H}$-divergence becomes dependent on the model \citep{johansson2019support}. In particular, supposing every model $h \in \mathcal{H}$ is the composition $c_h \circ f_h$ of a classifier $c_h$ and a feature extractor $f_h$, the modified $\mathcal{H}\Delta\mathcal{H}$-divergence results from the following restriction
\begin{equation}\small
    [\mathcal{H}\Delta\mathcal{H}]_h \defn \{1 - \mathbf{1}_{\{c_p\circ f_h(\cdot)\}}\{c_q \circ f_h(\cdot)\} \mid (p,q) \in \mathcal{H}^2\}.
\end{equation}
A similar restriction can be defined for the class $h\Delta\mathcal{H}$
\begin{equation}\small
    [h\Delta\mathcal{H}]_h \defn \{1 - \mathbf{1}_{\{c_h\circ f_h(\cdot)\}}\{c_q \circ f_h(\cdot)\} \mid q \in \mathcal{H}\}.
\end{equation}
In both cases, due to the dependence on $h$, the expectation over $\mathbb{Q}$ cannot be avoided as in Section~\ref{sec:method_approx_mid_div}. To resolve this frequent issue, we propose a new adaptation bound, which relies on an assumption related to flatness of the 01-loss over a (weighted) region in parameter space defined by $\mathbb{Q}$. Flatness assumptions are not unusual in PAC-Bayes and we develop this connection next.

\subsubsection{Flat-Minima and PAC-Bayes}
\label{sec:method_efficiency_bg}
An SGD solution lies in a flat-minimum if its parameters are robust to perturbation: changing the parameters (slightly) does not change the trained network's already low error rate. To put it another way, all parameter configurations near the SGD solution have identically low error. So, flatness here is an absence of ``elevation'' in error as our model moves about some region of parameter space. %This concept is further illustrated in Figure~\ref{fig:flat_help}. 
\citet{hochreiter1997flat} first discussed ``flatness'' as it relates to neural network generalization, hypothesizing that models lying in a large flat-minimum generalize well. More recently, the idea has been validated empirically at large scale. In particular, notions of the sharpness of minima are often good empirical descriptors of an SGD-trained neural network's generalization performance \citep{jiang2019fantastic, dziugaite2020search}. The motivation for using PAC-Bayes bounds is very often based on the hypothesis that flat-minima generalize well. This is because PAC-Bayes bounds implicitly encode the existence of flat-minima \citep{neyshabur2017exploring, dziugaite2017computing}. In details, for a bound to be small for some predictor $\mathbb{Q}$, both its Gibbs risk and its KL-divergence with the prior $\mathbb{P}$ must be small. Because the prior $\mathbb{P}$ typically has some variance, we know $\mathbb{Q}$ should have variance too, or else the KL-divergence will be large. Further, the variance of $\mathbb{Q}$ ensures a region of non-zero probability away from the mean. Thus, if the Gibbs risk is also small, it is required  that models in this region away from the mean all have identically low error (i.e., form a flat-minimum). Otherwise, the Gibbs risk would be inflated by probable parameter configurations with high error as illustrated in Figure~\ref{fig:flat_help}. In this sense, PAC-Bayes bounds and flat-minima go hand-in-hand. If the former is small, we know the latter exists.\footnote{This argument fails for some pathological cases. It works best with unimodal continuous $\mathbb{Q}$; e.g., the Gaussians used in Section~\ref{sec:experiments}.} 
% Note, the PAC-Bayes bound \textit{will} be small whenever it is acting as a good descriptor of neural network sample efficiency.
\begin{figure}
    \centering
    \includegraphics[width=.75\columnwidth]{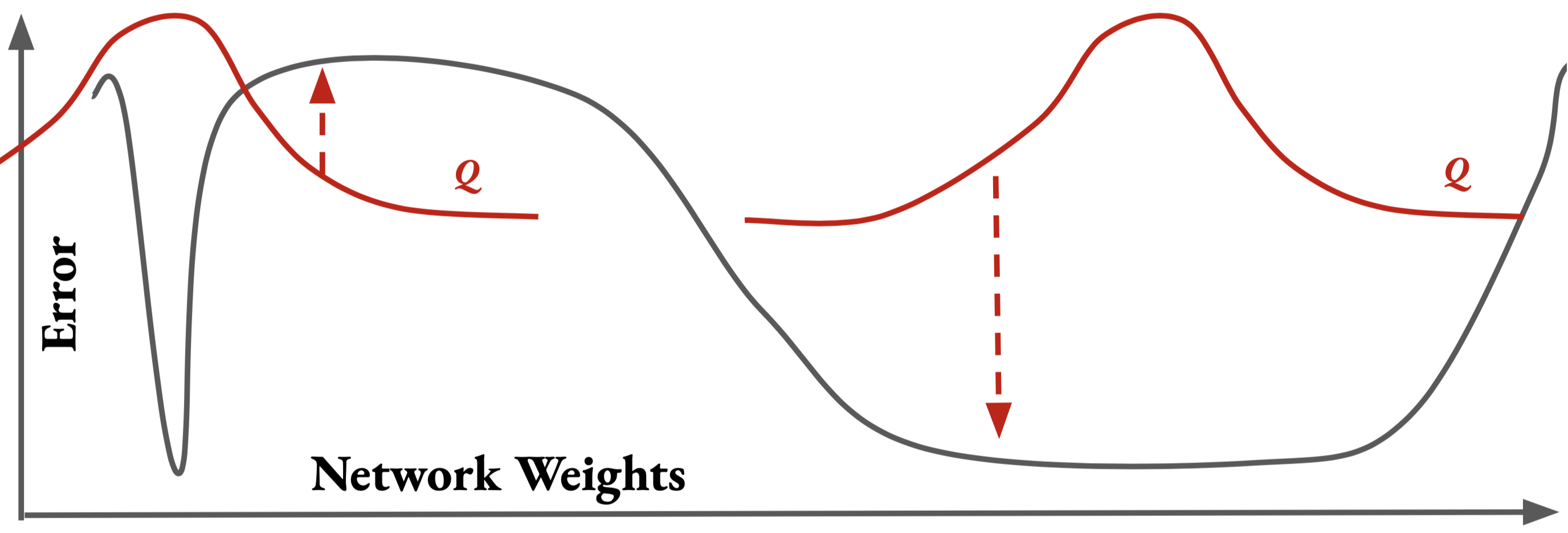}
    \caption{\small Informal illustration of flat-minimum (right) and sharp-minimum (left). For ``flat'' regions in parameter space, a unimodal Gibbs predictor $\mathbb{Q}$ with some variance has consistently low error across probable samples from $\mathbb{Q}$. Otherwise, when a region is ``sharp'', there is non-negligible likelihood of sampling a hypothesis from $\mathbb{Q}$ with high error. The expected error over $\mathbb{Q}$ is thus inflated by these likely regions of high error. }
    \label{fig:flat_help}
\end{figure}
\subsubsection{A More Efficient Adaptation Bound}
% \begin{definition}
% Let $\mathfrak{D}(\mathcal{H})$ be the set of distributions over $\mathcal{H}$. A summary $\mu = \mu(\mathbb{Q}) \in \mathcal{H}$ of $\mathbb{Q}$ is the image of a map $ \mathfrak{D}(\mathcal{H}) \to \mathcal{H}$.
% \end{definition}
\begin{definition}\label{def:flatness} Let $\mathfrak{D}(\mathcal{H})$ be the space of distributions over $\mathcal{H}$ and fix a function $\mu: \mathfrak{D}(\mathcal{H}) \to \mathcal{H}$. The Gibbs predictor $\mathbb{Q}$ is $\rho$-flat on the distribution $\mathbb{D}$ if $\lvert \mathbf{R}_\mathbb{D}(\mathbb{Q}) - \mathbf{R}_\mathbb{D}(\mu(\mathbb{Q}))\rvert \leq \rho.$
\end{definition}
We call the function $\mu: \mathfrak{D}(\mathcal{H}) \to \mathcal{H}$ a \textit{summary function} and call the image of $\mu$ a \textit{summary}. When $\mathbb{Q}$ is implied, we typically abuse notation by writing $\mu = \mu(\mathbb{Q})$. Often, as in the definition above, we will refer to the ``flatness'' of a Gibbs predictor $\mathbb{Q}$ when it would be more precise to refer to the flatness of the (weighted) region in parameter space that this predictor defines (i.e., a around the summary $\mu$). In this sense, the above definition quantifies the flatness of a region in parameter space by the ability of the error in this region to be represented by a single hypothesis $\mu$ from that region. Intuitively, this echoes physical properties of flatness: a topographic map requires many more numbers to describe a mountainous terrain than a flat prairie. That is, each change in elevation for the mountainous terrain must be demarcated by individual numbers, while the flat prairie may only need one number to summarize the elevation. Similarly for the region around $\mu$ defined by the predictor $\mathbb{Q}$, a region is only ``flat'' if the error at $\mu$ is a good representative of the error across the whole region. Next, we give the proposed bound.
\begin{theorem}
\label{thm:pb-bound-efficient}
For any $\mathbb{P}$ over $\mathcal{H}$, all $\delta > 0$, w.p. at least $1-\delta$, for all $\mathbb{Q}$ over $\mathcal{H}$ s.t. $\mathbb{Q}$ is $\rho_S$-flat on $S$ and $\rho_T$-flat on $T$
\begin{equation}\small
\begin{split}
    \mathbf{R}_\mathbb{T}(\mathbb{Q}) & \leq \rho + \tilde{\lambda}_{S, T} + \mathbf{R}_S(\mathbb{Q}) + \mathbf{d}_{\mathcal{C}_\mu}(S_X, T_X) \\
    & + \sqrt{\tfrac{\mathrm{KL}(\mathbb{Q} \mid \mid \mathbb{P}) + \ln \sqrt{4m} - \ln ( \delta ) }{2m}}
\end{split}
\end{equation}
where $\mu$ is the summary of $\mathbb{Q}$, $\rho = \rho_S + \rho_T$, and $\mathcal{C}_\mu = \mu\Delta\mathcal{H}$.
\end{theorem}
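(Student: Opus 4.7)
The plan is to mirror the structure of Thm.~\ref{thm:pb-bound}, but to apply the deterministic adaptation triangle inequality at the single hypothesis $\mu$ rather than at a random $H \sim \mathbb{Q}$. Flatness is precisely what buys this simplification: it lets us exchange the Gibbs risks $\mathbf{R}_S(\mathbb{Q})$ and $\mathbf{R}_T(\mathbb{Q})$ for the summary risks $\mathbf{R}_S(\mu)$ and $\mathbf{R}_T(\mu)$ at an additive cost of $\rho_S + \rho_T$, after which everything proceeds deterministically and no expectation over $\mathbb{Q}$ ever sits inside a divergence.

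First I would invoke the Maurer PAC-Bayes bound on the labelled target sample $T \sim \mathbb{T}^m$ (the statement implicitly assumes access to target labels, as is already required to define $\tilde\lambda_{S,T}$): for any prior $\mathbb{P}$ and any $\delta > 0$, with probability at least $1-\delta$ over $T$, simultaneously for all $\mathbb{Q}$,
\begin{equation}\small
\mathbf{R}_\mathbb{T}(\mathbb{Q}) \leq \mathbf{R}_T(\mathbb{Q}) + \sqrt{\tfrac{\mathrm{KL}(\mathbb{Q} \mid \mid \mathbb{P}) + \ln \sqrt{4m} - \ln \delta}{2m}}.
\end{equation}
This is the only probabilistic step of the proof. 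Applying $\rho_T$-flatness then gives $\mathbf{R}_T(\mathbb{Q}) \leq \mathbf{R}_T(\mu) + \rho_T$, collapsing the Gibbs predictor onto its summary.

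Next I would control $\mathbf{R}_T(\mu)$ by the usual sample-level $\mathcal{H}$-divergence triangle chain, now evaluated at the single hypothesis $\mu$. Fix any $\eta^\star \in \argmin_{\eta \in \mathcal{H}} \mathbf{R}_S(\eta) + \mathbf{R}_T(\eta)$, so $\mathbf{R}_S(\eta^\star) + \mathbf{R}_T(\eta^\star) = \tilde\lambda_{S,T}$. The 01-loss triangle inequality gives $\mathbf{R}_T(\mu) \leq \mathbf{R}_T(\eta^\star) + \mathbf{Pr}_{T_X}(\mu \neq \eta^\star)$. Because $x \mapsto 1 - \mathbf{1}_{\{\mu(x)\}}\{\eta^\star(x)\}$ is a member of $\mu\Delta\mathcal{H}$, the definition of $\mathbf{d}_{\mu\Delta\mathcal{H}}$ yields $\mathbf{Pr}_{T_X}(\mu \neq \eta^\star) \leq \mathbf{Pr}_{S_X}(\mu \neq \eta^\star) + \mathbf{d}_{\mu\Delta\mathcal{H}}(S_X, T_X) \leq \mathbf{R}_S(\mu) + \mathbf{R}_S(\eta^\star) + \mathbf{d}_{\mu\Delta\mathcal{H}}(S_X, T_X)$. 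Combining, $\mathbf{R}_T(\mu) \leq \tilde\lambda_{S,T} + \mathbf{R}_S(\mu) + \mathbf{d}_{\mu\Delta\mathcal{H}}(S_X, T_X)$. Finally, $\rho_S$-flatness supplies $\mathbf{R}_S(\mu) \leq \mathbf{R}_S(\mathbb{Q}) + \rho_S$, and stitching every inequality together with $\rho = \rho_S + \rho_T$ produces the claimed bound.

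The proof is a careful concatenation rather than one deep step, so the main obstacle is bookkeeping: applying Maurer's inequality to the target sample $T$ (rather than the source) so that the concentration term acquires the denominator $m$ matching Thm.~\ref{thm:pb-bound}, and invoking flatness at exactly the right moment so that the divergence can be evaluated at the single hypothesis $\mu$. The latter works only because $\mu\Delta\mathcal{H}$ is defined so that the indicator of disagreement between $\mu$ and any $\eta \in \mathcal{H}$ lies in the class over which the supremum is taken; swapping $\mathcal{C}_\mu$ for, say, $h\Delta\mathcal{H}$ with $h \neq \mu$ would break the triangle step.
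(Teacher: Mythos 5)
Your proof is correct and takes essentially the same route as the paper's: the paper chains the error-gap $\Delta_{\mathbb{Q}}(S,\mathbb{T})$ through the summary $\mu$ and the sample $T$, absorbing $\Delta_{\mathbb{Q},\mu}(S,S)$ and $\Delta_{\mathbb{Q},\mu}(T,T)$ into $\rho_S+\rho_T$ via flatness, bounding $\Delta_\mu(S,T)$ with the sample-level Ben-David lemma (which you re-derive inline via the 01-loss triangle inequality and the definition of $\mathbf{d}_{\mu\Delta\mathcal{H}}$), and bounding $\Delta_{\mathbb{Q}}(T,\mathbb{T})$ with Maurer's PAC-Bayes inequality. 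Your version merely unfolds the absolute-value gaps into one-sided inequalities applied in a different order, which changes nothing of substance.
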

\begin{corollary}\label{cor:pb-bound-efficient}
To study algorithms like DANN, we can instead choose $\mathcal{C}_\mu = [\mathcal{H}\Delta\mathcal{H}]_\mu$ or $\mathcal{C}_\mu = [\mu\Delta\mathcal{H}]_\mu$ in Thm.~\ref{thm:pb-bound-efficient}. The adaptability $\tilde{\lambda}_{S, T}$ is then dependent on $\mu$ as below 
\begin{equation}\small
    \small \tilde{\lambda}_{S, T}^\mu = \min\nolimits_{g \in \mathcal{H}} \Big \{ \mathbf{R}_S(c_g \circ f_\mu) + \mathbf{R}_T(c_g \circ f_\mu) \Big \}.
\end{equation}
% \lambda <= 
\end{corollary}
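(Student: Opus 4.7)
The plan is to follow the proof of Thm.~\ref{thm:pb-bound-efficient} essentially verbatim and intervene only where the adaptability term enters. That proof should decompose into three pieces: (i) use $\rho_S$-flatness on $S$ and $\rho_T$-flatness on $T$ to pass from the Gibbs risks $\mathbf{R}_\mathbb{T}(\mathbb{Q})$ and $\mathbf{R}_S(\mathbb{Q})$ to the deterministic risks of the summary $\mu = \mu(\mathbb{Q})$, paying an additive cost $\rho = \rho_S + \rho_T$; (ii) apply a Ben-David-style triangle argument through an auxiliary hypothesis $\eta^\star \in \mathcal{H}$ attaining the adaptability minimum, absorbing the cross-domain disagreement between $\mu$ and $\eta^\star$ into $\mathbf{d}_{\mathcal{C}_\mu}(S_X, T_X)$; and (iii) invoke the Maurer PAC-Bayes bound to swap empirical risks for their population counterparts, yielding the $\mathrm{KL}$ and $\sqrt{\cdot/2m}$ terms. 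Steps (i) and (iii) do not use the structure of $\mathcal{C}_\mu$, so they transfer directly.

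The substantive change lies in step (ii). For either choice $\mathcal{C}_\mu \in \{[\mu\Delta\mathcal{H}]_\mu, [\mathcal{H}\Delta\mathcal{H}]_\mu\}$, every indicator inside the supremum defining $\mathbf{d}_{\mathcal{C}_\mu}$ has the form $x \mapsto 1 - \mathbf{1}_{\{c_p \circ f_\mu(x)\}}\{c_q \circ f_\mu(x)\}$, which factors through the frozen feature extractor $f_\mu$ of $\mu = c_\mu \circ f_\mu$. For the triangle step to actually bound the $S_X$-vs-$T_X$ discrepancy of $x \mapsto 1 - \mathbf{1}_{\{\mu(x)\}}\{\eta^\star(x)\}$ by $\mathbf{d}_{\mathcal{C}_\mu}$, this disagreement indicator must itself belong to $\mathcal{C}_\mu$; writing $\mu = c_\mu \circ f_\mu$, this forces $\eta^\star = c_g \circ f_\mu$ for some $g \in \mathcal{H}$. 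Hence the infimum of $\mathbf{R}_S(\eta) + \mathbf{R}_T(\eta)$ can only be taken over the feature-sharing family $\{c_g \circ f_\mu : g \in \mathcal{H}\}$, which is exactly the stated $\tilde{\lambda}_{S,T}^\mu$. The choices $[\mu\Delta\mathcal{H}]_\mu \subseteq [\mathcal{H}\Delta\mathcal{H}]_\mu$ yield the same adaptability, since both contain the disagreement of $\mu$ with every $c_g \circ f_\mu$; the larger class only possibly inflates the divergence itself.

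The step requiring the most care is verifying unambiguously that $\mu$ admits a decomposition $\mu = c_\mu \circ f_\mu$ compatible with the restricted class, so that $\mu$ really does appear in the supremum defining $[\mu\Delta\mathcal{H}]_\mu$ as the hypothesis on the left of the indicator. This is ensured by the paper's standing assumption that every $h \in \mathcal{H}$ has a classifier/feature-extractor decomposition, together with Definition~\ref{def:flatness}, which guarantees $\mu(\mathbb{Q}) \in \mathcal{H}$. Once this bookkeeping is settled, the flatness accounting, Maurer deviation, and final algebra are inherited without change from Thm.~\ref{thm:pb-bound-efficient}, and the corollary follows by substituting $\tilde{\lambda}_{S,T} \mapsto \tilde{\lambda}_{S,T}^\mu$.
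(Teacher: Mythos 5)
Your proposal is correct, and it reaches the stated adaptability term by the right mechanism: the only place the structure of $\mathcal{C}_\mu$ enters is the Ben-David-style step (Lemma~\ref{lem:ben-david} / Lemma~\ref{lem:simple-da-bendavid}), where the disagreement indicator $x \mapsto 1-\mathbf{1}_{\{\mu(x)\}}\{\eta^\star(x)\}$ must lie in $\mathcal{C}_\mu$ for the divergence to absorb the cross-domain gap, and since every element of $[\mathcal{H}\Delta\mathcal{H}]_\mu$ and $[\mu\Delta\mathcal{H}]_\mu$ factors through the frozen $f_\mu$, the auxiliary hypothesis is forced into the family $\{c_g \circ f_\mu \mid g \in \mathcal{H}\}$, yielding $\tilde{\lambda}_{S,T}^\mu$. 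Your observation that both class choices admit the needed indicators (with the larger class only inflating the divergence) is also right, as is the bookkeeping that $\mu(\mathbb{Q}) \in \mathcal{H}$ by Definition~\ref{def:flatness} so that $\mu$ decomposes as $c_\mu \circ f_\mu$. Where you differ from the paper is in packaging: the paper does not re-open the triangle argument on $\mathcal{X}$ with a constrained auxiliary hypothesis; instead it rewrites $\Delta_\mu(S,T) = \Delta_{c_\mu}(S\circ f_\mu^{-1}, T\circ f_\mu^{-1})$ via pushforward distributions, applies Lemma~\ref{lem:simple-da-bendavid} verbatim to the transformed adaptation problem over the classifier space $\mathcal{W} = \{c_h \mid h \in \mathcal{H}\} \subseteq \mathcal{Y}^\mathcal{Z}$, and then translates risks and divergences back to $\mathcal{X}$ by a change of variables, which is how $[\mathcal{H}\Delta\mathcal{H}]_\mu$ and $[\mu\Delta\mathcal{H}]_\mu$ emerge as $\mathcal{Q}\circ p^{-1}$ for the appropriate $\mathcal{Q}$. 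The paper's route buys modularity (the lemma is reused as a black box, at the cost of some measure-theoretic scaffolding), whereas your direct argument is more elementary and makes explicit \emph{why} the adaptability must be restricted to feature-sharing hypotheses; the two are mathematically equivalent.
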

The main bound is identical to Thm.~\ref{thm:pb-bound} except that we assume $\mathbb{Q}$ is flat on both $S$ and $T$, then use this assumption to introduce a deterministic summary $\mu$ in the divergence. This deterministic summary replaces the expectation over $\mathbb{Q}$ whose estimation was inefficient, but the new cost is inflation of the bound by $\rho$. Unfortunately, similar to adaptability terms, we cannot expect to compute $\rho$ outside of controlled research contexts, since labels are required to estimate the flatness of $\mathbb{Q}$ on $T$ (according to Def.~\ref{def:flatness}). Instead, for the bound to be practically useful, we propose to assume $\rho$ is small. This, for example, is often the suggestion when it comes to adaptability as well. Albeit, the caveats of carelessly making assumptions on adaptation problems should be noted \citep{zhao2019learning, johansson2019support}. 

We argue the assumption of small $\rho$ is not an overly strong (or careless) assumption to make. To begin with, PAC-Bayes bounds and flat-minima are already related. The only addition we make to the usual connection (see Section~\ref{sec:method_efficiency_bg}) is that flat regions \textit{remain flat} when we transfer across data distributions (or, samples). Note, we do not even require the transferred region to remain a minimum, since the size of $\rho$ is only dictated by the \textit{difference} in the Gibbs risk and the summary risk: the Gibbs risk can be high on $T$ as long as the summary risk is as well. Thus, if one is willing to accept the usual assumptions, then our additional assumption merely begs the question: \textit{Do flat regions transfer?} 

In the next section, empirically, we test this question along with the other proposals given in this text.
%%%%%%%%%%%%%%%%%%%%%%%%%%%%%%%%%%%%%%%%%%%%%%%%
%%%%%%%%%%%%%%%%%%%%%%%%%%%%%%%%%%%%%%%%%%%%%%%%
\section{Experiments}
\label{sec:experiments}
\begin{figure*}
    \centering
    \includegraphics[width=\linewidth]{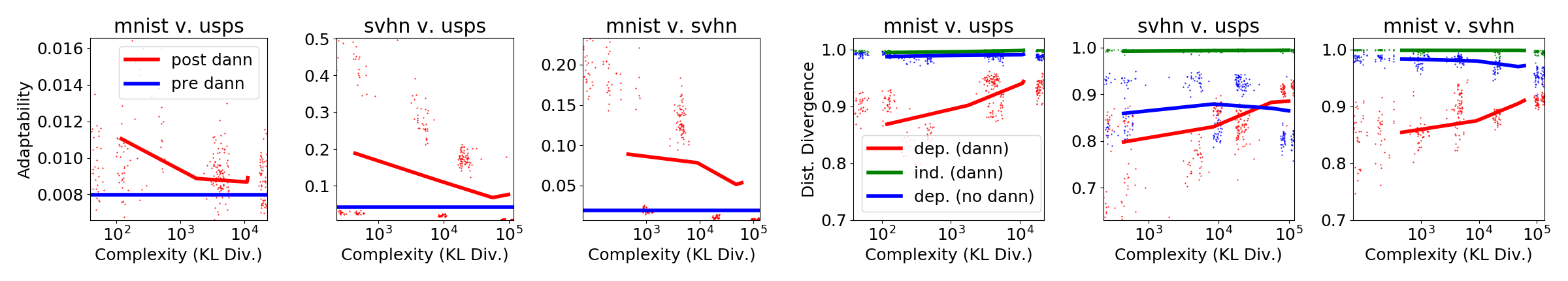}
    \caption{\small Adaptability (left) and dependent/independent divergences (right) for \textbf{DANN} on \textbf{Digits}. Solid line is median. Scatter describes unique $(S, T, \mathbb{Q})$, limited to $95\%$ or more data to filter extreme values. $\mathbb{Q}$ is a multivariate Gaussian and $\mu$ is its mean.}
    \label{fig:dann-all}
\end{figure*}
\begin{figure}
    \centering
    \includegraphics[width=0.8\columnwidth]{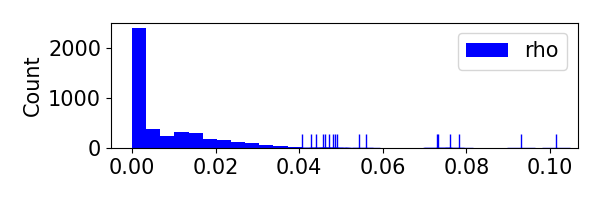}
    \caption{\small Histogram of $\rho$ estimates. Rug plot above $0.04$ displays infrequent occurrences. Each datum describes unique $(S, T, \mathbb{Q})$. $\mathbb{Q}, \mu$ are defined as in Figure~\ref{fig:dann-all}. See Appendix \ref{sec:exp_details_flatness} for details.}
    \label{fig:rho-hist}
\end{figure}
\begin{figure}
    \centering
    \includegraphics[width=.8\columnwidth]{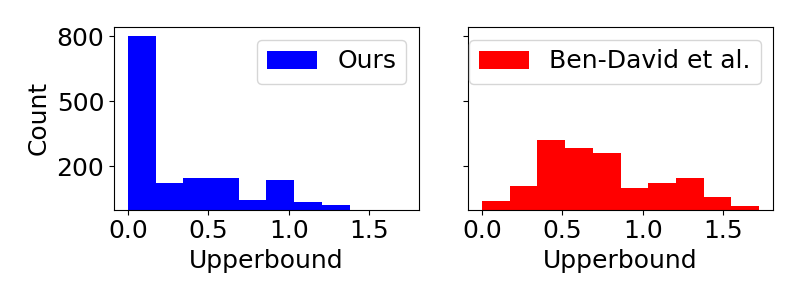}
    \caption{\small Sample-dependent (left) and independent adaptability. Each datum is for unique $(S, T, \mathcal{H})$. See Appendix~\ref{sec:exp_details_ada} for details.}
    \label{fig:lambda}
\end{figure}
\subsection{Setup}
%\textbf{Datasets}\hspace{.5em}
\paragraph{Datasets}
We use a wide-array of datasets from vision and NLP: \textbf{Digits} \citep{ganin2015unsupervised}, \textbf{PACS} \citep{li2017deeper_PACS}, \textbf{Office-Home} \citep{venkateswara2017deep}, \textbf{Amazon Reviews} \citep{blitzer2007biographies}, and \textbf{Discourse} sense classification datasets \citep{prasad2008penn, ramesh2010identifying, zeyrek2020ted}. 
%The \textbf{Discourse} datasets, in particular, pose challenging real-world problems \citep{atwell-etal-2021-discourse}.  
For \textbf{Digits}, we use the image as feature, while for \textbf{Amazon Reviews}, we use uni-gram and bi-gram features. For other datasets, we use pre-trained ResNet-50 \citep{he2016deep} or BERT \citep{devlin2018bert} features.
% On these fixed features, we learn linear models or dense 4-layer neural networks. See Appendix~\textcolor{red}{X} for details.

% \textbf{Models}\hspace{.5em} 
\paragraph{Models}
For \textbf{Digits}, we use a 4-layer CNN. For all other datasets, we use both a linear model and a 4-layer fully-connected network. For simplicity, our larger scale experiments use a simple adaptation algorithm (\textbf{SA}) which optimizes models to minimize risk on $S$. On \textbf{Digits}, we also study the \textbf{DANN} algorithm proposed by \citet{ganin2015unsupervised}, modified to train Gibbs predictors with varied regularization using \textbf{PBB} \citep{perez2021tighter}. 
%While training is more time-consuming and involved, this allows us to apply our bounds to study sample complexity.
We pick \textbf{Digits}, specifically, because it exhibits shift in the marginal label distributions, which can cause \textbf{DANN} to fail \citep{zhao2019learning}.
More training details are given in Appendix~\ref{sec:exp_details}.

% \textbf{Experiments}\hspace{.5em}
\paragraph{Experiments}
The data points in our results are each individual experiments done on a source dataset $S$ and target dataset $T$ using a classifier $h$ or Gibbs predictor $\mathbb{Q}$. The pair $S$ and $T$ are taken from a set of data splits using the datasets discussed above (details in Appendix~\ref{sec:exp_details}). Across these splits, we consider various scenarios including: \textbf{single-source}, \textbf{multi-source}, and \textbf{within-distribution} adaptation (i.e., $\mathbb{S} = \mathbb{T}$) using multiple random data splits. On \textbf{Digits}, we also consider \textbf{natural shifts}  (i.e., noise and rotation) and \textbf{unnatural shifts} (i.e., transfer to random data). In general, we restrict the pair $(S, T$) to have a common label space. 
\subsection{Results}
\paragraph{Sample-Dependent Adaptability}
As mentioned, estimation of sample-independent adaptability (e.g., $\lambda$ in Thm.~\ref{thm:ben2010theory}) requires verification of generalization. In particular, to estimate $\lambda$ one can learn $\eta \in \mathcal{H}$ which has small sum of risks over the observed samples $S$ and $T$. In our experiments, we do so using batch SGD on a weighted NLL loss -- a common surrogate. Because $\lambda$ is a population statistic for $\mathbb{S}$ and $\mathbb{T}$, we cannot directly report the errors on $S$ and $T$ -- this is incorrect, like using training error as a validation metric. Instead, we should check the performance of $\eta$ on a heldout data subset (for example). This is the strategy we take in Figure~\ref{fig:lambda}, using Hoeffding's Inequality to produce a valid upperbound on $\lambda$. Comparably, estimating sample-dependent adaptability is much easier. By design, we \textit{can} report error on the samples $S$ and $T$ used for training $\eta$. Doing so, produces a valid upperbound:
\begin{equation}\small\label{eqn:lambda_ub}
    \forall \eta \in \mathcal{H} \ : \ \tilde{\lambda}_{S,T} \leq \mathbf{R}_S(\eta) + \mathbf{R}_T(\eta) \quad\text{(by definition).}
\end{equation}
As is visible in Figure~\ref{fig:lambda}, this strategy for estimating $\tilde{\lambda}$ is much more effective than the sample-independent strategy in revealing important information. We see from the histogram of upperbounds on $\tilde{\lambda}$ that adaptability is very often small and concentrated near 0, although this is not always the case. Comparatively, upperbounds for $\lambda$ are spread out with notable mass at large values; we miss out on the interpretation that adaptability very often is small (as we might like to assume, in practice). In the rest of our discussion, all adaptability will be sample-dependent. Note, additional experiments on adaptability are available in Appendix~\ref{sec:adaptability_boxp}.
\begin{table}\small
    \centering
    \caption{\small Correlation of $h$-dependent and -independent divergence with $|\mathbf{R}_h(S) - \mathbf{R}_h(T)|$. Columns delineate data subsets. Each datum describes unique $(S, T, h)$. See Appendix~\ref{sec:exp_details_divapprox} for details.}
    \begin{tabular}{c|c|c|c|c|c}
    &  \textbf{All} & \textbf{Digi.} & \textbf{Disc.} & \textbf{PACS+OH} & \textbf{Amaz.}\\\hline
    \textbf{model-ind.} & 0.54 & 0.15 & 0.70 & 0.41 & -0.05 \\\hline
    \textbf{model-dep.} & 0.58 & 0.23 & 0.78 & 0.14 & 0.41  \\% \hline
    % \textbf{bbsd} & 0.29 & 0.63 & 0.40 & 0.43 & 0.48
    \end{tabular}
    \label{tab:divapprox}
\end{table}

% \textbf{Divergence and Approximation}\hspace{.5em}
\paragraph{Divergence and Approximation}
In Table~\ref{tab:divapprox}, we give results for our approximation techniques applied to the model-dependent $h\Delta\mathcal{H}$-divergence and the model-independent $\mathcal{H}\Delta\mathcal{H}$-divergence. The models used in these experiments are trained using \textbf{SA}. Since there is actually no ground-truth to compare too, we report performance of our approximations on a ranking task. That is, we compare our approximations to absolute difference in risks on the source and target and compute the Spearman rank correlation. According to our adaptation bounds, smaller divergence should predict smaller difference in risk and larger divergence should predict larger difference in risk as in the ranking task we study. Any effective approximation of divergence should also mimic this behavior, allowing us to conduct an indirect evaluation. In aggregate, we observe both divergences are capable of ranking performance similarity on the source and target, which validates our approximations to some extent. For reference, a recent statistic designed for shift-detection \citep{rabanser2018failing} achieves correlation \textbf{0.29} on all data. We also observe the model-dependent divergence typically ranks ``better'' than the model-independent divergence. This, also, is to be expected according to our theory, since the model-independent divergence does not account for variation in $h$ and should thus perform worse. Overall, the nuanced agreement of our approximations with our theoretical expectations is suggestive that these techniques are effective. 
% Note, our approximations would be \textit{exact} if efficient (non-approximate) ERM was a solved problem for neural-networks.

%\textbf{Do Flat Regions Transfer?}\hspace{.5em}
\paragraph{Do Flat Regions Transfer?}
As noted, one stipulation of practical use for Thm.~\ref{thm:pb-bound-efficient} is a small flatness value $\rho$. This is not unlike the common assumption that $\lambda$ is small and, as discussed, is related to the flat-minma hypothesis. To estimate $\rho$ and test our assumption, we select $\rho_S$ and $\rho_T$ to be the smallest values so that Def.~\ref{def:flatness} is satisfied on $S$ and $T$ using a Monte-Carlo estimate for the Gibbs Risk.\footnote{A penalty based on Hoeffding Inequality could be added to this estimate to create a valid upperbound. We do not consider this since a penalty is also added if we use the strategy in Eq.~\eqref{eqn:montecarlo}.} We train $\mathbb{Q}$ using a variant of \textbf{SA} based on the technique of \citet{perez2021tighter}. Our results indicate $\rho$ is typically small as desired with mean \textbf{0.007} and SD \textbf{0.01} across 4K+ experiments. See Figure~\ref{fig:rho-hist} for a visualization.

%\textbf{Analysis of Assumptions after DANN}\hspace{.5em}
\paragraph{Analysis of Assumptions after DANN}
Our results in Figure~\ref{fig:dann-all} show an interesting relationship between the sample complexity of $\mathbb{Q}$ -- as measured by $\mathrm{KL}(\mathbb{Q} \mid\mid \mathbb{P})$ -- and our assumption on adaptability. Namely, we can be more confident in the assumption $\tilde{\lambda}$ is small when the sample complexity of our solution increases. A similar observation holds for the flatness term $\rho$ (see Appendix~\ref{sec:ext_dann_res} Figure~\ref{fig:rho-dann}). Our analysis suggests \textbf{DANN} may be a data-hungry algorithm, since solutions with properties we desire have large sample complexity. The practical suggestion is to use large quantities of unlabeled data when applying \textbf{DANN}, which is reasonable since unlabeled data can be ``cheap'' to acquire.

% \textbf{Analysis of Divergence after DANN}\hspace{.5em}
\paragraph{Analysis of Divergence after DANN}
% To understand our interpretation, it is import to note that the point of the \textbf{DANN} algorithm is reduce the divergence between the source and target distributions, since this commonly appears in upperbounds on the target risk. 
In Figure~\ref{fig:dann-all}, according to the (more sensitive) model-dependent divergence, \textbf{DANN} reduces data-distribution divergence as it is designed to do. Still, it does not reduce divergence to the degree one might expect and, as the sample complexity of the solution increases, the gap between divergences -- before and after \textbf{DANN} -- begins to wane.
%\footnote{This may feel counter-intuitive, since an unconstrained solution should better optimize divergence; it is import to remember a second objective of \textbf{DANN} is to reduce risk on the source, so other factors are contributing to the complexity of a solution.} 
This is interesting because it shows reduction of divergence and reduction of adaptability / flatness may be competing objectives. Further, this finding echoes theoretical hypotheses in recent literature \citep{zhao2019learning, wu2019domain, johansson2019support}, while also revealing the role of sample complexity in this story. To meet our assumptions when using \textbf{DANN}, we should use large amounts of unlabeled data and allow an unconstrained solution, but to ensure \textbf{DANN} reduces distribution divergence significantly, we should instead constrain our solution to lower complexity (e.g., via regularization). Depending on problem context, there may be some optimum between these extremes, but in any case, these opposing relationships are an interesting take-away from the application of our theory.
%%%%%%%%%%%%%%%%%%%%%%%%%%%%%%%%%%%%%%%%%%%%%%%%
%%%%%%%%%%%%%%%%%%%%%%%%%%%%%%%%%%%%%%%%%%%%%%%%
\section{Conclusion}
\label{sec:conclusion}
In this work, we proposed the first adaptation bounds capable of studying the non-uniform sample complexity of adaptation algorithms using multiclass neural networks. Empirically, we validated the novel design-concepts in our adaptation bounds and showed our approximation techniques for some multiclass divergences were effective. In culmination, we applied our bounds to study sample complexity of a common domain-invariant learning algorithm. Our findings revealed unexpected relationships between sample complexity and important properties of the algorithm we studied. Code for reproducing our experiments is publicly available at \url{https://github.com/anthonysicilia/pacbayes-adaptation-UAI2022}.

Besides what has been done in this work, we also identify some areas of potential future work:
\paragraph{Assumptions and Heuristics}
As with previous adaptation bounds, the nature of the adaptation problem requires us to be imprecise in some cases. For one, we make a number of assumptions on adaptability and flatness. Also, our divergence computation does require some heuristics. While we study these imperfections empirically with promising results, we anticipate both shortcomings can be improved. In particular, restriction of scope to specific domains or hypothesis classes should reveal exploitable problem structure. 
\paragraph{Generalized Loss}
While we have focused on multiclass learners in this work, a PAC-Bayesian adaptation bound for general learners (e.g., with bounded loss functions) remains an open-problem. Possibly, applying our strategies to the more general framework of \citet{mansour2009domain} would be fruitful. Albeit, since algorithms for computing divergence have traditionally been loss-specific, we expect additional theoretical derivation to be required for each new loss.
%%%%%%%%%%%%%%%%%%%%%%%%%%%%%%%%%%%%%%%%%%%%%%%%
%%%%%%%%%%%%%%%%%%%%%%%%%%%%%%%%%%%%%%%%%%%%%%%%
\begin{acknowledgements} We thank the anonymous reviewers for helpful feedback. 

S. Hwang was supported by Institute of Information \& communications Technology Planning \& Evaluation (IITP) grant funded by the Korea government (MSIT), Artificial Intelligence Graduate Program, Yonsei University (2020-0-01361-003), and the Yonsei University Research Fund of 2022 (2022-22-0131).
\end{acknowledgements}
% \clearpage
\bibliography{sicilia_277}

\begin{thebibliography}{115}
\providecommand{\natexlab}[1]{#1}
\providecommand{\url}[1]{\texttt{#1}}
\expandafter\ifx\csname urlstyle\endcsname\relax
  \providecommand{\doi}[1]{doi: #1}\else
  \providecommand{\doi}{doi: \begingroup \urlstyle{rm}\Url}\fi

\bibitem[Albuquerque et~al.(2020{\natexlab{a}})Albuquerque, Monteiro, Falk, and
  Mitliagkas]{albuquerque2019adversarial}
Isabela Albuquerque, Jo{\~a}o Monteiro, Tiago~H Falk, and Ioannis Mitliagkas.
\newblock Adversarial target-invariant representation learning for domain
  generalization.
\newblock \emph{arXiv:1911.00804}, 2020{\natexlab{a}}.

\bibitem[Albuquerque et~al.(2020{\natexlab{b}})Albuquerque, Monteiro, Falk, and
  Mitliagkas]{apx_albuquerque2019adversarial}
Isabela Albuquerque, Jo{\~a}o Monteiro, Tiago~H Falk, and Ioannis Mitliagkas.
\newblock Adversarial target-invariant representation learning for domain
  generalization.
\newblock \emph{arXiv:1911.00804}, 2020{\natexlab{b}}.

\bibitem[Ambroladze et~al.(2007)Ambroladze, Parrado-Hern{\'a}ndez, and
  Shawe-Taylor]{apx_ambroladze2007tighter}
Amiran Ambroladze, Emilio Parrado-Hern{\'a}ndez, and John Shawe-Taylor.
\newblock Tighter pac-bayes bounds.
\newblock \emph{NeurIPS}, 19:\penalty0 9, 2007.

\bibitem[Ben-David et~al.(2007{\natexlab{a}})Ben-David, Blitzer, Crammer,
  Pereira, et~al.]{apx_ben2007analysis}
Shai Ben-David, John Blitzer, Koby Crammer, Fernando Pereira, et~al.
\newblock Analysis of representations for domain adaptation.
\newblock \emph{NeurIPS}, 19:\penalty0 137, 2007{\natexlab{a}}.

\bibitem[Ben-David et~al.(2007{\natexlab{b}})Ben-David, Blitzer, Crammer,
  Pereira, et~al.]{ben2007analysis}
Shai Ben-David, John Blitzer, Koby Crammer, Fernando Pereira, et~al.
\newblock Analysis of representations for domain adaptation.
\newblock \emph{NeurIPS}, 19:\penalty0 137, 2007{\natexlab{b}}.

\bibitem[Ben-David et~al.(2010{\natexlab{a}})Ben-David, Blitzer, Crammer,
  Kulesza, Pereira, and Vaughan]{apx_ben2010theory}
Shai Ben-David, John Blitzer, Koby Crammer, Alex Kulesza, Fernando Pereira, and
  Jennifer~Wortman Vaughan.
\newblock A theory of learning from different domains.
\newblock \emph{Machine learning}, 79\penalty0 (1):\penalty0 151--175,
  2010{\natexlab{a}}.

\bibitem[Ben-David et~al.(2010{\natexlab{b}})Ben-David, Blitzer, Crammer,
  Kulesza, Pereira, and Vaughan]{ben2010theory}
Shai Ben-David, John Blitzer, Koby Crammer, Alex Kulesza, Fernando Pereira, and
  Jennifer~Wortman Vaughan.
\newblock A theory of learning from different domains.
\newblock \emph{Machine learning}, 79\penalty0 (1):\penalty0 151--175,
  2010{\natexlab{b}}.

\bibitem[Ben-David et~al.(2010{\natexlab{c}})Ben-David, Lu, Luu, and
  P{\'a}l]{apx_david2010impossibility}
Shai Ben-David, Tyler Lu, Teresa Luu, and D{\'a}vid P{\'a}l.
\newblock Impossibility theorems for domain adaptation.
\newblock In \emph{AISTATS}, pages 129--136. JMLR Workshop and Conference
  Proceedings, 2010{\natexlab{c}}.

\bibitem[Blanchard et~al.(2021{\natexlab{a}})Blanchard, Deshmukh, Dogan, Lee,
  and Scott]{apx_blanchard2021domain}
Gilles Blanchard, Aniket~Anand Deshmukh, {\"U}r{\"u}n Dogan, Gyemin Lee, and
  Clayton Scott.
\newblock Domain generalization by marginal transfer learning.
\newblock \emph{JMLR}, 22:\penalty0 2--1, 2021{\natexlab{a}}.

\bibitem[Blanchard et~al.(2021{\natexlab{b}})Blanchard, Deshmukh, Dogan, Lee,
  and Scott]{blanchard2021domain}
Gilles Blanchard, Aniket~Anand Deshmukh, {\"U}r{\"u}n Dogan, Gyemin Lee, and
  Clayton Scott.
\newblock Domain generalization by marginal transfer learning.
\newblock \emph{JMLR}, 22:\penalty0 2--1, 2021{\natexlab{b}}.

\bibitem[Blitzer et~al.(2007{\natexlab{a}})Blitzer, Dredze, and
  Pereira]{apx_blitzer2007biographies}
John Blitzer, Mark Dredze, and Fernando Pereira.
\newblock Biographies, bollywood, boom-boxes and blenders: Domain adaptation
  for sentiment classification.
\newblock In \emph{ACL}, pages 440--447, 2007{\natexlab{a}}.

\bibitem[Blitzer et~al.(2007{\natexlab{b}})Blitzer, Dredze, and
  Pereira]{blitzer2007biographies}
John Blitzer, Mark Dredze, and Fernando Pereira.
\newblock Biographies, bollywood, boom-boxes and blenders: Domain adaptation
  for sentiment classification.
\newblock In \emph{ACL}, pages 440--447, 2007{\natexlab{b}}.

\bibitem[Blundell et~al.(2015)Blundell, Cornebise, Kavukcuoglu, and
  Wierstra]{blundell2015weight}
Charles Blundell, Julien Cornebise, Koray Kavukcuoglu, and Daan Wierstra.
\newblock Weight uncertainty in neural network.
\newblock In \emph{ICML}, pages 1613--1622. PMLR, 2015.

\bibitem[Braud et~al.(2017)Braud, Coavoux, and S{\o}gaard]{apx_braud2017cross}
Chlo{\'e} Braud, Maximin Coavoux, and Anders S{\o}gaard.
\newblock Cross-lingual rst discourse parsing.
\newblock \emph{arXiv:1701.02946}, 2017.

\bibitem[Carlson et~al.(2003)Carlson, Marcu, and
  Okurowski]{apx_carlson2003building}
Lynn Carlson, Daniel Marcu, and Mary~Ellen Okurowski.
\newblock Building a discourse-tagged corpus in the framework of rhetorical
  structure theory.
\newblock In \emph{Current and new directions in discourse and dialogue}, pages
  85--112. Springer, 2003.

\bibitem[Catoni(2007)]{apx_catoni2007pac}
Olivier Catoni.
\newblock {PAC}-{B}ayesian supervised classification: the thermodynamics of
  statistical learning.
\newblock \emph{arXiv:0712.0248v1}, 2007.

\bibitem[Crammer et~al.(2007)Crammer, Kearns, and
  Wortman]{apx_crammer2007learning}
Koby Crammer, Michael Kearns, and Jennifer Wortman.
\newblock Learning from multiple sources.
\newblock In \emph{NeurIPS}, pages 321--328, 2007.

\bibitem[Deng et~al.(2020{\natexlab{a}})Deng, Ding, Dwork, Hong, Parmigiani,
  Patil, and Sur]{apx_deng2020representation}
Zhun Deng, Frances Ding, Cynthia Dwork, Rachel Hong, Giovanni Parmigiani,
  Prasad Patil, and Pragya Sur.
\newblock Representation via representations: Domain generalization via
  adversarially learned invariant representations.
\newblock \emph{arXiv:2006.11478}, 2020{\natexlab{a}}.

\bibitem[Deng et~al.(2020{\natexlab{b}})Deng, Ding, Dwork, Hong, Parmigiani,
  Patil, and Sur]{deng2020representation}
Zhun Deng, Frances Ding, Cynthia Dwork, Rachel Hong, Giovanni Parmigiani,
  Prasad Patil, and Pragya Sur.
\newblock Representation via representations: Domain generalization via
  adversarially learned invariant representations.
\newblock \emph{arXiv:2006.11478}, 2020{\natexlab{b}}.

\bibitem[Devlin et~al.(2019{\natexlab{a}})Devlin, Chang, Lee, and
  Toutanova]{apx_devlin2018bert}
Jacob Devlin, Ming-Wei Chang, Kenton Lee, and Kristina Toutanova.
\newblock {BERT}: Pre-training of deep bidirectional transformers for language
  understanding.
\newblock In \emph{NAACL-HLT}, pages 4171--4186. ACL, 2019{\natexlab{a}}.

\bibitem[Devlin et~al.(2019{\natexlab{b}})Devlin, Chang, Lee, and
  Toutanova]{devlin2018bert}
Jacob Devlin, Ming-Wei Chang, Kenton Lee, and Kristina Toutanova.
\newblock {BERT}: Pre-training of deep bidirectional transformers for language
  understanding.
\newblock In \emph{NAACL-HLT}, pages 4171--4186. ACL, 2019{\natexlab{b}}.

\bibitem[Dziugaite and Roy(2017{\natexlab{a}})]{apx_dziugaite2017computing}
Gintare~Karolina Dziugaite and Daniel~M Roy.
\newblock Computing nonvacuous generalization bounds for deep (stochastic)
  neural networks with many more parameters than training data.
\newblock \emph{arXiv:1703.11008}, 2017{\natexlab{a}}.

\bibitem[Dziugaite and Roy(2017{\natexlab{b}})]{dziugaite2017computing}
Gintare~Karolina Dziugaite and Daniel~M Roy.
\newblock Computing nonvacuous generalization bounds for deep (stochastic)
  neural networks with many more parameters than training data.
\newblock \emph{arXiv:1703.11008}, 2017{\natexlab{b}}.

\bibitem[Dziugaite et~al.(2020)Dziugaite, Drouin, Neal, Rajkumar, Caballero,
  Wang, Mitliagkas, and Roy]{dziugaite2020search}
Gintare~Karolina Dziugaite, Alexandre Drouin, Brady Neal, Nitarshan Rajkumar,
  Ethan Caballero, Linbo Wang, Ioannis Mitliagkas, and Daniel~M Roy.
\newblock In search of robust measures of generalization.
\newblock \emph{NeurIPS}, 33, 2020.

\bibitem[Dziugaite et~al.(2021{\natexlab{a}})Dziugaite, Hsu, Gharbieh, Arpino,
  and Roy]{apx_dziugaite2021role}
Gintare~Karolina Dziugaite, Kyle Hsu, Waseem Gharbieh, Gabriel Arpino, and
  Daniel Roy.
\newblock On the role of data in pac-bayes.
\newblock In \emph{AISTATS}, pages 604--612. PMLR, 2021{\natexlab{a}}.

\bibitem[Dziugaite et~al.(2021{\natexlab{b}})Dziugaite, Hsu, Gharbieh, Arpino,
  and Roy]{dziugaite2021role}
Gintare~Karolina Dziugaite, Kyle Hsu, Waseem Gharbieh, Gabriel Arpino, and
  Daniel Roy.
\newblock On the role of data in pac-bayes.
\newblock In \emph{AISTATS}, pages 604--612. PMLR, 2021{\natexlab{b}}.

\bibitem[Ganin and Lempitsky(2015{\natexlab{a}})]{apx_ganin2015unsupervised}
Yaroslav Ganin and Victor Lempitsky.
\newblock Unsupervised domain adaptation by backpropagation.
\newblock In \emph{ICML}, pages 1180--1189. PMLR, 2015{\natexlab{a}}.

\bibitem[Ganin and Lempitsky(2015{\natexlab{b}})]{ganin2015unsupervised}
Yaroslav Ganin and Victor Lempitsky.
\newblock Unsupervised domain adaptation by backpropagation.
\newblock In \emph{ICML}, pages 1180--1189. PMLR, 2015{\natexlab{b}}.

\bibitem[Germain et~al.(2009)Germain, Lacasse, Laviolette, and
  Marchand]{apx_germain2009pac}
Pascal Germain, Alexandre Lacasse, Fran{\c{c}}ois Laviolette, and Mario
  Marchand.
\newblock {PAC}-{B}ayesian learning of linear classifiers.
\newblock In \emph{ICML}, 2009.

\bibitem[Germain et~al.(2013)Germain, Habrard, Laviolette, and
  Morvant]{germain2013pac}
Pascal Germain, Amaury Habrard, François Laviolette, and Emilie Morvant.
\newblock A pac-bayesian approach for domain adaptation with specialization to
  linear classifiers.
\newblock In \emph{ICML}, pages 738--746. PMLR, 2013.

\bibitem[Germain et~al.(2015)Germain, Lacasse, Laviolette, March, and
  Roy]{apx_germain2015risk}
Pascal Germain, Alexandre Lacasse, Francois Laviolette, Mario March, and
  Jean-Francis Roy.
\newblock Risk {B}ounds for the {M}ajority {V}ote: From a {PAC}-{B}ayesian
  {A}nalysis to a {L}earning {A}lgorithm.
\newblock \emph{JMLR}, 16:\penalty0 787--860, 2015.

\bibitem[Germain et~al.(2016)Germain, Habrard, Laviolette, and
  Morvant]{germain2016new}
Pascal Germain, Amaury Habrard, François Laviolette, and Emilie Morvant.
\newblock A new pac-bayesian perspective on domain adaptation.
\newblock In \emph{ICML}, pages 859--868. PMLR, 2016.

\bibitem[Germain et~al.(2020{\natexlab{a}})Germain, Habrard, Laviolette, and
  Morvant]{apx_germain2020pac}
Pascal Germain, Amaury Habrard, Fran{\c{c}}ois Laviolette, and Emilie Morvant.
\newblock Pac-bayes and domain adaptation.
\newblock \emph{Neurocomputing}, 379:\penalty0 379--397, 2020{\natexlab{a}}.

\bibitem[Germain et~al.(2020{\natexlab{b}})Germain, Habrard, Laviolette, and
  Morvant]{germain2020pac}
Pascal Germain, Amaury Habrard, Fran{\c{c}}ois Laviolette, and Emilie Morvant.
\newblock Pac-bayes and domain adaptation.
\newblock \emph{Neurocomputing}, 379:\penalty0 379--397, 2020{\natexlab{b}}.

\bibitem[Gretton et~al.(2012)Gretton, Borgwardt, Rasch, Sch{\"o}lkopf, and
  Smola]{apx_gretton2012kernel}
Arthur Gretton, Karsten~M Borgwardt, Malte~J Rasch, Bernhard Sch{\"o}lkopf, and
  Alexander Smola.
\newblock A kernel two-sample test.
\newblock \emph{JMLR}, 13\penalty0 (1):\penalty0 723--773, 2012.

\bibitem[Guedj(2019)]{apx_guedj2019primer}
Benjamin Guedj.
\newblock A primer on {PAC}-{B}ayesian learning.
\newblock \emph{arXiv:1901.05353v3}, 2019.

\bibitem[He et~al.(2016{\natexlab{a}})He, Zhang, Ren, and Sun]{apx_he2016deep}
Kaiming He, Xiangyu Zhang, Shaoqing Ren, and Jian Sun.
\newblock Deep residual learning for image recognition.
\newblock In \emph{CVPR}, 2016{\natexlab{a}}.

\bibitem[He et~al.(2016{\natexlab{b}})He, Zhang, Ren, and Sun]{he2016deep}
Kaiming He, Xiangyu Zhang, Shaoqing Ren, and Jian Sun.
\newblock Deep residual learning for image recognition.
\newblock In \emph{CVPR}, 2016{\natexlab{b}}.

\bibitem[Hochreiter and Schmidhuber(1997)]{hochreiter1997flat}
Sepp Hochreiter and J{\"u}rgen Schmidhuber.
\newblock Flat minima.
\newblock \emph{Neural computation}, 9\penalty0 (1):\penalty0 1--42, 1997.

\bibitem[{Hull}(1994)]{apx_uspsdataset}
J.~J. {Hull}.
\newblock A database for handwritten text recognition research.
\newblock \emph{IEEE Transactions on Pattern Analysis and Machine
  Intelligence}, 16\penalty0 (5):\penalty0 550--554, 1994.
\newblock \doi{10.1109/34.291440}.

\bibitem[Jiang et~al.(2019)Jiang, Neyshabur, Mobahi, Krishnan, and
  Bengio]{jiang2019fantastic}
Yiding Jiang, Behnam Neyshabur, Hossein Mobahi, Dilip Krishnan, and Samy
  Bengio.
\newblock Fantastic generalization measures and where to find them.
\newblock In \emph{ICLR}, 2019.

\bibitem[Johansson et~al.(2019{\natexlab{a}})Johansson, Sontag, and
  Ranganath]{apx_johansson2019support}
Fredrik~D Johansson, David Sontag, and Rajesh Ranganath.
\newblock Support and invertibility in domain-invariant representations.
\newblock In \emph{AISTATS}, pages 527--536. PMLR, 2019{\natexlab{a}}.

\bibitem[Johansson et~al.(2019{\natexlab{b}})Johansson, Sontag, and
  Ranganath]{johansson2019support}
Fredrik~D Johansson, David Sontag, and Rajesh Ranganath.
\newblock Support and invertibility in domain-invariant representations.
\newblock In \emph{AISTATS}, pages 527--536. PMLR, 2019{\natexlab{b}}.

\bibitem[Keskar et~al.(2017)Keskar, Nocedal, Tang, Mudigere, and
  Smelyanskiy]{keskar2017large}
Nitish~Shirish Keskar, Jorge Nocedal, Ping Tak~Peter Tang, Dheevatsa Mudigere,
  and Mikhail Smelyanskiy.
\newblock On large-batch training for deep learning: Generalization gap and
  sharp minima.
\newblock In \emph{ICLR 2017}, 2017.

\bibitem[Kifer et~al.(2004)Kifer, Ben-David, and Gehrke]{kifer2004detecting}
Daniel Kifer, Shai Ben-David, and Johannes Gehrke.
\newblock Detecting change in data streams.
\newblock In \emph{VLDB}, volume~4, pages 180--191, 2004.

\bibitem[Kishimoto et~al.(2020)Kishimoto, Murawaki, and
  Kurohashi]{apx_kishimoto-etal-2020-adapting}
Yudai Kishimoto, Yugo Murawaki, and Sadao Kurohashi.
\newblock Adapting {BERT} to implicit discourse relation classification with a
  focus on discourse connectives.
\newblock In \emph{LREC}, pages 1152--1158, Marseille, France, May 2020.
  European Language Resources Association.
\newblock ISBN 979-10-95546-34-4.
\newblock URL \url{https://aclanthology.org/2020.lrec-1.145}.

\bibitem[Kuroki et~al.(2019{\natexlab{a}})Kuroki, Charoenphakdee, Bao, Honda,
  Sato, and Sugiyama]{apx_kuroki2019unsupervised}
Seiichi Kuroki, Nontawat Charoenphakdee, Han Bao, Junya Honda, Issei Sato, and
  Masashi Sugiyama.
\newblock Unsupervised domain adaptation based on source-guided discrepancy.
\newblock In \emph{AAAI}, volume~33, pages 4122--4129, 2019{\natexlab{a}}.

\bibitem[Kuroki et~al.(2019{\natexlab{b}})Kuroki, Charoenphakdee, Bao, Honda,
  Sato, and Sugiyama]{kuroki2019unsupervised}
Seiichi Kuroki, Nontawat Charoenphakdee, Han Bao, Junya Honda, Issei Sato, and
  Masashi Sugiyama.
\newblock Unsupervised domain adaptation based on source-guided discrepancy.
\newblock In \emph{AAAI}, volume~33, pages 4122--4129, 2019{\natexlab{b}}.

\bibitem[Langford and Caruana(2001)]{langford2001not}
John Langford and Rich Caruana.
\newblock (not) bounding the true error.
\newblock \emph{NeurIPS}, 14, 2001.

\bibitem[Langford and Seeger(2001)]{apx_langford2001bounds}
John Langford and Matthias Seeger.
\newblock Bounds for averaging classifiers.
\newblock \emph{CMU Technical Report}, 2001.

\bibitem[LeCun and Cortes(2010)]{apx_lecun-mnisthandwrittendigit-2010}
Yann LeCun and Corinna Cortes.
\newblock {MNIST} handwritten digit database.
\newblock \emph{web}, 2010.
\newblock URL \url{http://yann.lecun.com/exdb/mnist/}.

\bibitem[Li et~al.(2017{\natexlab{a}})Li, Yang, Song, and
  Hospedales]{apx_li2017deeper_PACS}
Da~Li, Yongxin Yang, Yi-Zhe Song, and Timothy~M Hospedales.
\newblock Deeper, broader and artier domain generalization.
\newblock In \emph{IEEE ICCV}, pages 5542--5550, 2017{\natexlab{a}}.

\bibitem[Li et~al.(2017{\natexlab{b}})Li, Yang, Song, and
  Hospedales]{li2017deeper_PACS}
Da~Li, Yongxin Yang, Yi-Zhe Song, and Timothy~M Hospedales.
\newblock Deeper, broader and artier domain generalization.
\newblock In \emph{IEEE ICCV}, pages 5542--5550, 2017{\natexlab{b}}.

\bibitem[Li and Bilmes(2007{\natexlab{a}})]{apx_li2007bayesian}
Xiao Li and Jeff Bilmes.
\newblock A bayesian divergence prior for classiffier adaptation.
\newblock In \emph{AISTATS}, pages 275--282. PMLR, 2007{\natexlab{a}}.

\bibitem[Li and Bilmes(2007{\natexlab{b}})]{li2007bayesian}
Xiao Li and Jeff Bilmes.
\newblock A bayesian divergence prior for classiffier adaptation.
\newblock In \emph{AISTATS}, pages 275--282. PMLR, 2007{\natexlab{b}}.

\bibitem[Lipton et~al.(2018{\natexlab{a}})Lipton, Wang, and
  Smola]{apx_lipton2018detecting}
Zachary Lipton, Yu-Xiang Wang, and Alexander Smola.
\newblock Detecting and correcting for label shift with black box predictors.
\newblock In \emph{ICML}, pages 3122--3130. PMLR, 2018{\natexlab{a}}.

\bibitem[Lipton et~al.(2018{\natexlab{b}})Lipton, Wang, and
  Smola]{lipton2018detecting}
Zachary Lipton, Yu-Xiang Wang, and Alexander Smola.
\newblock Detecting and correcting for label shift with black box predictors.
\newblock In \emph{ICML}, pages 3122--3130. PMLR, 2018{\natexlab{b}}.

\bibitem[Long et~al.(2017)Long, Zhu, Wang, and Jordan]{long2017deep}
Mingsheng Long, Han Zhu, Jianmin Wang, and Michael~I Jordan.
\newblock Deep transfer learning with joint adaptation networks.
\newblock In \emph{ICML}, pages 2208--2217. PMLR, 2017.

\bibitem[Long et~al.(2018)Long, Cao, Wang, and Jordan]{long2018conditional}
Mingsheng Long, Zhangjie Cao, Jianmin Wang, and Michael~I Jordan.
\newblock Conditional adversarial domain adaptation.
\newblock In \emph{NeurIPS}, pages 1647--1657, 2018.

\bibitem[Magliacane et~al.(2018{\natexlab{a}})Magliacane, van Ommen, Claassen,
  Bongers, Versteeg, and Mooij]{apx_magliacane2018domain}
Sara Magliacane, Thijs van Ommen, Tom Claassen, Stephan Bongers, Philip
  Versteeg, and Joris~M Mooij.
\newblock Domain adaptation by using causal inference to predict invariant
  conditional distributions.
\newblock In \emph{NeurIPS}, 2018{\natexlab{a}}.

\bibitem[Magliacane et~al.(2018{\natexlab{b}})Magliacane, van Ommen, Claassen,
  Bongers, Versteeg, and Mooij]{magliacane2018domain}
Sara Magliacane, Thijs van Ommen, Tom Claassen, Stephan Bongers, Philip
  Versteeg, and Joris~M Mooij.
\newblock Domain adaptation by using causal inference to predict invariant
  conditional distributions.
\newblock In \emph{NeurIPS}, 2018{\natexlab{b}}.

\bibitem[Mann and Thompson(1987)]{apx_mann1987rhetorical}
William~C Mann and Sandra~A Thompson.
\newblock \emph{Rhetorical structure theory: A theory of text organization}.
\newblock University of Southern California, Information Sciences Institute Los
  Angeles, 1987.

\bibitem[Mansour et~al.(2009{\natexlab{a}})Mansour, Mohri, and
  Rostamizadeh]{apx_mansour2009domain}
Yishay Mansour, Mehryar Mohri, and Afshin Rostamizadeh.
\newblock Domain adaptation: Learning bounds and algorithms.
\newblock \emph{arXiv:0902.3430}, 2009{\natexlab{a}}.

\bibitem[Mansour et~al.(2009{\natexlab{b}})Mansour, Mohri, and
  Rostamizadeh]{mansour2009domain}
Yishay Mansour, Mehryar Mohri, and Afshin Rostamizadeh.
\newblock Domain adaptation: Learning bounds and algorithms.
\newblock \emph{arXiv:0902.3430}, 2009{\natexlab{b}}.

\bibitem[Marcus et~al.(1993)Marcus, Santorini, and
  Marcinkiewicz]{apx_marcus1993building}
Mitchell Marcus, Beatrice Santorini, and Mary~Ann Marcinkiewicz.
\newblock Building a large annotated corpus of english: The penn treebank.
\newblock \emph{Computational Linguistics}, 1993.

\bibitem[Maurer(2004{\natexlab{a}})]{apx_maurer2004note}
Andreas Maurer.
\newblock A note on the pac bayesian theorem.
\newblock \emph{arXiv cs/0411099}, 2004{\natexlab{a}}.

\bibitem[Maurer(2004{\natexlab{b}})]{maurer2004note}
Andreas Maurer.
\newblock A note on the pac bayesian theorem.
\newblock \emph{arXiv cs/0411099}, 2004{\natexlab{b}}.

\bibitem[McAllester(2013)]{apx_mcallester2013pac}
David McAllester.
\newblock A {PAC}-{B}ayesian tutorial with a dropout bound.
\newblock \emph{arXiv:1307.2118v1}, 2013.

\bibitem[McAllester(1999)]{apx_mcallester1999some}
David~A McAllester.
\newblock Some {PAC}-{B}ayesian theorems.
\newblock \emph{Machine Learning}, 37:\penalty0 355--363, 1999.

\bibitem[McNamara and Balcan(2017{\natexlab{a}})]{apx_mcnamara2017risk}
Daniel McNamara and Maria-Florina Balcan.
\newblock Risk bounds for transferring representations with and without
  fine-tuning.
\newblock In \emph{ICML}, pages 2373--2381. PMLR, 2017{\natexlab{a}}.

\bibitem[McNamara and Balcan(2017{\natexlab{b}})]{mcnamara2017risk}
Daniel McNamara and Maria-Florina Balcan.
\newblock Risk bounds for transferring representations with and without
  fine-tuning.
\newblock In \emph{ICML}, pages 2373--2381. PMLR, 2017{\natexlab{b}}.

\bibitem[Nagarajan and Kolter(2019)]{nagarajan2019uniform}
Vaishnavh Nagarajan and J~Zico Kolter.
\newblock Uniform convergence may be unable to explain generalization in deep
  learning.
\newblock \emph{NeurIPS}, 32, 2019.

\bibitem[Netzer et~al.(2011)Netzer, Wang, Coates, Bissacco, Wu, and
  Ng]{apx_netzer2011reading}
Yuval Netzer, Tao Wang, Adam Coates, Alessandro Bissacco, Bo~Wu, and Andrew~Y
  Ng.
\newblock Reading digits in natural images with unsupervised feature learning.
\newblock \emph{NeurIPS Workshop on Deep Learning and Unsupervised Feature
  Learning 2011}, 2011.

\bibitem[Neyshabur(2017)]{neyshabur2017implicit}
Behnam Neyshabur.
\newblock Implicit regularization in deep learning.
\newblock \emph{arXiv:1709.01953}, 2017.

\bibitem[Neyshabur et~al.(2014)Neyshabur, Tomioka, and
  Srebro]{neyshabur2014search}
Behnam Neyshabur, Ryota Tomioka, and Nathan Srebro.
\newblock In search of the real inductive bias: On the role of implicit
  regularization in deep learning.
\newblock \emph{arXiv:1412.6614}, 2014.

\bibitem[Neyshabur et~al.(2017)Neyshabur, Bhojanapalli, Mcallester, and
  Srebro]{neyshabur2017exploring}
Behnam Neyshabur, Srinadh Bhojanapalli, David Mcallester, and Nati Srebro.
\newblock Exploring generalization in deep learning.
\newblock \emph{NeurIPS}, 30:\penalty0 5947--5956, 2017.

\bibitem[Parrado-Hern{\'a}ndez et~al.(2012)Parrado-Hern{\'a}ndez, Ambroladze,
  Shawe-Taylor, and Sun]{apx_parrado2012pac}
Emilio Parrado-Hern{\'a}ndez, Amiran Ambroladze, John Shawe-Taylor, and
  Shiliang Sun.
\newblock {PAC}-{B}ayes bounds with data dependent priors.
\newblock \emph{JMLR}, 13:\penalty0 3507--3531, 2012.

\bibitem[P{\'e}rez-Ortiz et~al.(2021{\natexlab{a}})P{\'e}rez-Ortiz, Rivasplata,
  Shawe-Taylor, and Szepesv{\'a}ri]{apx_perez2021tighter}
Mar{\i}a P{\'e}rez-Ortiz, Omar Rivasplata, John Shawe-Taylor, and Csaba
  Szepesv{\'a}ri.
\newblock Tighter risk certificates for neural networks.
\newblock \emph{JMLR}, 22, 2021{\natexlab{a}}.

\bibitem[P{\'e}rez-Ortiz et~al.(2021{\natexlab{b}})P{\'e}rez-Ortiz, Rivasplata,
  Shawe-Taylor, and Szepesv{\'a}ri]{perez2021tighter}
Mar{\i}a P{\'e}rez-Ortiz, Omar Rivasplata, John Shawe-Taylor, and Csaba
  Szepesv{\'a}ri.
\newblock Tighter risk certificates for neural networks.
\newblock \emph{JMLR}, 22, 2021{\natexlab{b}}.

\bibitem[Prasad et~al.(2008{\natexlab{a}})Prasad, Dinesh, Lee, Miltsakaki,
  Robaldo, Joshi, and Webber]{apx_prasad2008penn}
Rashmi Prasad, Nikhil Dinesh, Alan Lee, Eleni Miltsakaki, Livio Robaldo,
  Aravind~K Joshi, and Bonnie~L Webber.
\newblock The penn discourse treebank 2.0.
\newblock In \emph{LREC}. Citeseer, 2008{\natexlab{a}}.

\bibitem[Prasad et~al.(2008{\natexlab{b}})Prasad, Dinesh, Lee, Miltsakaki,
  Robaldo, Joshi, and Webber]{prasad2008penn}
Rashmi Prasad, Nikhil Dinesh, Alan Lee, Eleni Miltsakaki, Livio Robaldo,
  Aravind~K Joshi, and Bonnie~L Webber.
\newblock The penn discourse treebank 2.0.
\newblock In \emph{LREC}. Citeseer, 2008{\natexlab{b}}.

\bibitem[Rabanser et~al.(2019)Rabanser, G{\"u}nnemann, and
  Lipton]{rabanser2018failing}
Stephan Rabanser, Stephan G{\"u}nnemann, and Zachary~C Lipton.
\newblock Failing loudly: an empirical study of methods for detecting dataset
  shift.
\newblock In \emph{NeurIPS}, pages 1396--1408, 2019.

\bibitem[Ramesh and Yu(2010{\natexlab{a}})]{apx_ramesh2010identifying}
Balaji~Polepalli Ramesh and Hong Yu.
\newblock Identifying discourse connectives in biomedical text.
\newblock In \emph{AMIA Annual Symposium Proceedings}, volume 2010, page 657.
  American Medical Informatics Association, 2010{\natexlab{a}}.

\bibitem[Ramesh and Yu(2010{\natexlab{b}})]{ramesh2010identifying}
Balaji~Polepalli Ramesh and Hong Yu.
\newblock Identifying discourse connectives in biomedical text.
\newblock In \emph{AMIA Annual Symposium Proceedings}, volume 2010, page 657.
  American Medical Informatics Association, 2010{\natexlab{b}}.

\bibitem[Redko et~al.(2017{\natexlab{a}})Redko, Habrard, and
  Sebban]{apx_redko2017theoretical}
Ievgen Redko, Amaury Habrard, and Marc Sebban.
\newblock Theoretical analysis of domain adaptation with optimal transport.
\newblock In \emph{ECML PKDD}, pages 737--753. Springer, 2017{\natexlab{a}}.

\bibitem[Redko et~al.(2017{\natexlab{b}})Redko, Habrard, and
  Sebban]{redko2017theoretical}
Ievgen Redko, Amaury Habrard, and Marc Sebban.
\newblock Theoretical analysis of domain adaptation with optimal transport.
\newblock In \emph{ECML PKDD}, pages 737--753. Springer, 2017{\natexlab{b}}.

\bibitem[Redko et~al.(2020{\natexlab{a}})Redko, Morvant, Habrard, Sebban, and
  Bennani]{apx_redko2020ASO}
Ievgen Redko, Emilie Morvant, Amaury Habrard, Marc Sebban, and Youn{\`e}s
  Bennani.
\newblock A survey on domain adaptation theory.
\newblock \emph{ArXiv}, abs/2004.11829, 2020{\natexlab{a}}.

\bibitem[Redko et~al.(2020{\natexlab{b}})Redko, Morvant, Habrard, Sebban, and
  Bennani]{redko2020ASO}
Ievgen Redko, Emilie Morvant, Amaury Habrard, Marc Sebban, and Youn{\`e}s
  Bennani.
\newblock A survey on domain adaptation theory.
\newblock \emph{ArXiv}, abs/2004.11829, 2020{\natexlab{b}}.

\bibitem[Reimers and Gurevych(2019)]{apx_reimers2019sentence}
Nils Reimers and Iryna Gurevych.
\newblock Sentence-bert: Sentence embeddings using siamese bert-networks.
\newblock \emph{arXiv:1908.10084}, 2019.

\bibitem[Shalev-Shwartz and
  Ben-David(2014{\natexlab{a}})]{apx_shalev2014understanding}
Shai Shalev-Shwartz and Shai Ben-David.
\newblock \emph{Understanding machine learning: From theory to algorithms}.
\newblock Cambridge university press, 2014{\natexlab{a}}.

\bibitem[Shalev-Shwartz and
  Ben-David(2014{\natexlab{b}})]{shalev2014understanding}
Shai Shalev-Shwartz and Shai Ben-David.
\newblock \emph{Understanding machine learning: From theory to algorithms}.
\newblock Cambridge university press, 2014{\natexlab{b}}.

\bibitem[Shawe-Taylor and Williamson(1997)]{apx_shawe1997pac}
John Shawe-Taylor and Robert~C Williamson.
\newblock A {PAC} analysis of a {B}ayesian estimator.
\newblock In \emph{COLT}, 1997.

\bibitem[Shen et~al.(2018{\natexlab{a}})Shen, Qu, Zhang, and
  Yu]{apx_shen2018wasserstein}
Jian Shen, Yanru Qu, Weinan Zhang, and Yong Yu.
\newblock Wasserstein distance guided representation learning for domain
  adaptation.
\newblock In \emph{AAAI}, 2018{\natexlab{a}}.

\bibitem[Shen et~al.(2018{\natexlab{b}})Shen, Qu, Zhang, and
  Yu]{shen2018wasserstein}
Jian Shen, Yanru Qu, Weinan Zhang, and Yong Yu.
\newblock Wasserstein distance guided representation learning for domain
  adaptation.
\newblock In \emph{AAAI}, 2018{\natexlab{b}}.

\bibitem[Sugiyama et~al.(2007{\natexlab{a}})Sugiyama, Krauledat, and
  M{\"u}ller]{apx_sugiyama2007covariate}
Masashi Sugiyama, Matthias Krauledat, and Klaus-Robert M{\"u}ller.
\newblock Covariate shift adaptation by importance weighted cross validation.
\newblock \emph{JMLR}, 8\penalty0 (5), 2007{\natexlab{a}}.

\bibitem[Sugiyama et~al.(2007{\natexlab{b}})Sugiyama, Krauledat, and
  M{\"u}ller]{sugiyama2007covariate}
Masashi Sugiyama, Matthias Krauledat, and Klaus-Robert M{\"u}ller.
\newblock Covariate shift adaptation by importance weighted cross validation.
\newblock \emph{JMLR}, 8\penalty0 (5), 2007{\natexlab{b}}.

\bibitem[Tachet~des Combes et~al.(2020{\natexlab{a}})Tachet~des Combes, Zhao,
  Wang, and Gordon]{apx_tachet2020domain}
Remi Tachet~des Combes, Han Zhao, Yu-Xiang Wang, and Geoffrey~J Gordon.
\newblock Domain adaptation with conditional distribution matching and
  generalized label shift.
\newblock \emph{NeurIPS}, 33, 2020{\natexlab{a}}.

\bibitem[Tachet~des Combes et~al.(2020{\natexlab{b}})Tachet~des Combes, Zhao,
  Wang, and Gordon]{tachet2020domain}
Remi Tachet~des Combes, Han Zhao, Yu-Xiang Wang, and Geoffrey~J Gordon.
\newblock Domain adaptation with conditional distribution matching and
  generalized label shift.
\newblock \emph{NeurIPS}, 33, 2020{\natexlab{b}}.

\bibitem[Venkateswara et~al.(2017{\natexlab{a}})Venkateswara, Eusebio,
  Chakraborty, and Panchanathan]{apx_venkateswara2017deep}
Hemanth Venkateswara, Jose Eusebio, Shayok Chakraborty, and Sethuraman
  Panchanathan.
\newblock Deep hashing network for unsupervised domain adaptation.
\newblock In \emph{IEEE CVPR}, pages 5018--5027, 2017{\natexlab{a}}.

\bibitem[Venkateswara et~al.(2017{\natexlab{b}})Venkateswara, Eusebio,
  Chakraborty, and Panchanathan]{venkateswara2017deep}
Hemanth Venkateswara, Jose Eusebio, Shayok Chakraborty, and Sethuraman
  Panchanathan.
\newblock Deep hashing network for unsupervised domain adaptation.
\newblock In \emph{IEEE CVPR}, pages 5018--5027, 2017{\natexlab{b}}.

\bibitem[Wu et~al.(2019)Wu, Winston, Kaushik, and Lipton]{wu2019domain}
Yifan Wu, Ezra Winston, Divyansh Kaushik, and Zachary Lipton.
\newblock Domain adaptation with asymmetrically-relaxed distribution alignment.
\newblock In \emph{ICML}, pages 6872--6881. PMLR, 2019.

\bibitem[You et~al.(2019{\natexlab{a}})You, Wang, Long, and
  Jordan]{apx_you2019towards}
Kaichao You, Ximei Wang, Mingsheng Long, and Michael Jordan.
\newblock Towards accurate model selection in deep unsupervised domain
  adaptation.
\newblock In \emph{ICML}, pages 7124--7133. PMLR, 2019{\natexlab{a}}.

\bibitem[You et~al.(2019{\natexlab{b}})You, Wang, Long, and
  Jordan]{you2019towards}
Kaichao You, Ximei Wang, Mingsheng Long, and Michael Jordan.
\newblock Towards accurate model selection in deep unsupervised domain
  adaptation.
\newblock In \emph{ICML}, pages 7124--7133. PMLR, 2019{\natexlab{b}}.

\bibitem[Zeldes(2017)]{apx_zeldes2017gum}
Amir Zeldes.
\newblock The gum corpus: Creating multilayer resources in the classroom.
\newblock \emph{LREC}, 51\penalty0 (3):\penalty0 581--612, 2017.

\bibitem[Zeyrek et~al.(2020{\natexlab{a}})Zeyrek, Mendes, Grishina, Kurfal{\i},
  Gibbon, and Ogrodniczuk]{apx_zeyrek2020ted}
Deniz Zeyrek, Am{\'a}lia Mendes, Yulia Grishina, Murathan Kurfal{\i}, Samuel
  Gibbon, and Maciej Ogrodniczuk.
\newblock Ted multilingual discourse bank (ted-mdb): a parallel corpus
  annotated in the pdtb style.
\newblock \emph{LREC}, 54\penalty0 (2):\penalty0 587--613, 2020{\natexlab{a}}.

\bibitem[Zeyrek et~al.(2020{\natexlab{b}})Zeyrek, Mendes, Grishina, Kurfal{\i},
  Gibbon, and Ogrodniczuk]{zeyrek2020ted}
Deniz Zeyrek, Am{\'a}lia Mendes, Yulia Grishina, Murathan Kurfal{\i}, Samuel
  Gibbon, and Maciej Ogrodniczuk.
\newblock Ted multilingual discourse bank (ted-mdb): a parallel corpus
  annotated in the pdtb style.
\newblock \emph{LREC}, 54\penalty0 (2):\penalty0 587--613, 2020{\natexlab{b}}.

\bibitem[Zhang et~al.(2017)Zhang, Bengio, Hardt, Recht, and
  Vinyals]{zhang2017understanding}
Chiyuan Zhang, Samy Bengio, Moritz Hardt, Benjamin Recht, and Oriol Vinyals.
\newblock Understanding deep learning requires rethinking generalization.
\newblock In \emph{{ICLR} 2017}. OpenReview.net, 2017.

\bibitem[Zhang et~al.(2015{\natexlab{a}})Zhang, Gong, and
  Sch{\"o}lkopf]{apx_zhang2015multi}
Kun Zhang, Mingming Gong, and Bernhard Sch{\"o}lkopf.
\newblock Multi-source domain adaptation: A causal view.
\newblock In \emph{AAAI}, 2015{\natexlab{a}}.

\bibitem[Zhang et~al.(2015{\natexlab{b}})Zhang, Gong, and
  Sch{\"o}lkopf]{zhang2015multi}
Kun Zhang, Mingming Gong, and Bernhard Sch{\"o}lkopf.
\newblock Multi-source domain adaptation: A causal view.
\newblock In \emph{AAAI}, 2015{\natexlab{b}}.

\bibitem[Zhang et~al.(2019{\natexlab{a}})Zhang, Liu, Long, and
  Jordan]{apx_zhang2019bridging}
Yuchen Zhang, Tianle Liu, Mingsheng Long, and Michael Jordan.
\newblock Bridging theory and algorithm for domain adaptation.
\newblock In \emph{ICML}, pages 7404--7413. PMLR, 2019{\natexlab{a}}.

\bibitem[Zhang et~al.(2019{\natexlab{b}})Zhang, Liu, Long, and
  Jordan]{zhang2019bridging}
Yuchen Zhang, Tianle Liu, Mingsheng Long, and Michael Jordan.
\newblock Bridging theory and algorithm for domain adaptation.
\newblock In \emph{ICML}, pages 7404--7413. PMLR, 2019{\natexlab{b}}.

\bibitem[Zhao et~al.(2019{\natexlab{a}})Zhao, Des~Combes, Zhang, and
  Gordon]{apx_zhao2019learning}
Han Zhao, Remi~Tachet Des~Combes, Kun Zhang, and Geoffrey Gordon.
\newblock On learning invariant representations for domain adaptation.
\newblock In \emph{ICML}, pages 7523--7532. PMLR, 2019{\natexlab{a}}.

\bibitem[Zhao et~al.(2019{\natexlab{b}})Zhao, Des~Combes, Zhang, and
  Gordon]{zhao2019learning}
Han Zhao, Remi~Tachet Des~Combes, Kun Zhang, and Geoffrey Gordon.
\newblock On learning invariant representations for domain adaptation.
\newblock In \emph{ICML}, pages 7523--7532. PMLR, 2019{\natexlab{b}}.

\bibitem[Zhou et~al.(2020)Zhou, Yang, Hospedales, and Xiang]{apx_zhou2020deep}
Kaiyang Zhou, Yongxin Yang, Timothy Hospedales, and Tao Xiang.
\newblock Deep domain-adversarial image generation for domain generalisation.
\newblock \emph{arXiv:2003.06054}, 2020.

\bibitem[Zhou et~al.(2018)Zhou, Veitch, Austern, Adams, and
  Orbanz]{zhou2018non}
Wenda Zhou, Victor Veitch, Morgane Austern, Ryan~P Adams, and Peter Orbanz.
\newblock Non-vacuous generalization bounds at the imagenet scale: a
  pac-bayesian compression approach.
\newblock In \emph{ICLR}, 2018.

\end{thebibliography}
\clearpage
\onecolumn
\appendix
% \setcounter{equation}{24}
% \setcounter{figure}{4}
%%%%%%%%%%%%%%%%%%%%%%%%%%%%%%%%%%%%%%%%%%%%%%%%
%%%%%%%%%%%%%%%%%%%%%%%%%%%%%%%%%%%%%%%%%%%%%%%%
\section{Proofs}
\label{sec:proofs}
\subsection{Theorem~\ref{thm:ben2010theory}}
\begin{proof}
This is Thm. 2 of \citet{apx_ben2010theory} with added bound on $\mathbf{R}_S(h) - \mathbf{R}_\mathbb{S}(h)$ by standard uniform convergence arguments; e.g., Ch. 28.1 of \citet{apx_shalev2014understanding}. Boole's Inequality is used to combine bounds. 
\end{proof}
\subsection{Theorem~\ref{thm:germain2020pac} (Theorem~7 of Germain et al. [2020])}
\label{sec:germain2020pac}
\begin{theorem}\label{thm:germain2020pac}
\citep{apx_germain2020pac} Let $\mathcal{Y}$ be binary, $\mathbb{P}$ any distribution over $\mathcal{H}$, and $\omega > 0$. For all $\delta > 0$, w.p. at least $1-\delta$, for all distributions $\mathbb{Q}$ over $\mathcal{H}$,
\begin{equation}\small
\begin{split}
    & \mathbf{R}_\mathbb{T}(\mathbb{Q}) \leq \omega' ( \mathbf{R}_S(\mathbb{Q}) + |\mathrm{d}_S(\mathbb{Q}) - \mathrm{d}_T(\mathbb{Q})| )
    + |\mathrm{e}_\mathbb{S}(\mathbb{Q}) - \mathrm{e}_\mathbb{T}(\mathbb{Q})| + 2\omega \tfrac{\mathrm{KL}(\mathbb{Q} \mid \mid \mathbb{P})  - \ln ( \delta / 3) }{m\omega'} + 2(\omega' - 1)
\end{split}
\end{equation}
where $\omega' = 2\omega / (1 - \exp(-2\omega))$ and for $H_i \sim (\mathbb{Q})_i$, $(X,Y) \sim \mathbb{S}$ we have
\begin{equation}\small
\begin{split}
    & \mathrm{e}_\mathbb{S}(\mathbb{Q}) \defn \mathbf{E}[(1-\mathbf{1}_{\{H_1(X)\}}\{Y\}) (1-\mathbf{1}_{\{H_2(X)\}}\{Y\})], \\
    & \mathrm{d}_{\mathbb{S}}(\mathbb{Q}) \defn \mathbf{E} [1 - \mathbf{1}_{\{H_1(X)\}}\{H_2(X)\}].
\end{split}
\end{equation}
\end{theorem}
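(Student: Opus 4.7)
The plan is to exploit a disagreement/joint-error identity that holds only in the binary case, combine it with a target-source telescoping argument, and close by applying a Catoni-style PAC-Bayes concentration to the remaining population quantities.

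First, I would establish the binary-label identity $\mathbf{R}_\mathbb{D}(\mathbb{Q}) = \tfrac{1}{2}\mathrm{d}_\mathbb{D}(\mathbb{Q}) + \mathrm{e}_\mathbb{D}(\mathbb{Q})$. This follows by examining two i.i.d. draws $H_1, H_2 \sim \mathbb{Q}$: for any $(x,y)$ with $|\mathcal{Y}|=2$, the sum $\mathbf{1}\{H_1(x) \neq y\} + \mathbf{1}\{H_2(x) \neq y\}$ equals $2$ when both err, equals $1$ exactly when they disagree (since in binary a disagreement \emph{forces} exactly one to err), and equals $0$ when both are correct; taking expectation and dividing by two yields the identity. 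This is precisely the step that collapses in multiclass settings, as noted in the main body. Applying the identity to both $\mathbb{S}$ and $\mathbb{T}$ and subtracting gives
\begin{equation*}\small
\mathbf{R}_\mathbb{T}(\mathbb{Q}) \leq \mathbf{R}_\mathbb{S}(\mathbb{Q}) + \tfrac{1}{2}|\mathrm{d}_\mathbb{T}(\mathbb{Q}) - \mathrm{d}_\mathbb{S}(\mathbb{Q})| + |\mathrm{e}_\mathbb{T}(\mathbb{Q}) - \mathrm{e}_\mathbb{S}(\mathbb{Q})|.
\end{equation*}
The last term must remain a population statistic in the final bound because $\mathrm{e}_\mathbb{T}$ requires target labels, but the first two admit concentration to empirical analogs.

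Next, I would apply a Catoni-form PAC-Bayes inequality to transfer $\mathbf{R}_\mathbb{S}(\mathbb{Q})$, $\mathrm{d}_\mathbb{S}(\mathbb{Q})$ and $\mathrm{d}_\mathbb{T}(\mathbb{Q})$ to their empirical counterparts $\mathbf{R}_S$, $\mathrm{d}_S$, $\mathrm{d}_T$. For a fixed tilt $\omega>0$, Catoni's linearized bound has the form $p \leq \omega' \hat{p} + \tfrac{\omega}{m\omega'}(\mathrm{KL}(\mathbb{Q}\|\mathbb{P}) + \ln(1/\delta')) + (\omega' - 1)$ with $\omega' = 2\omega/(1-e^{-2\omega})$; the slack $(\omega'-1)$ captures the gap between the linear upper envelope and the exact (convex) PAC-Bayes concentration. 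Each empirical quantity above can be written as the mean of a $[0,1]$-bounded loss over either a single hypothesis drawn from $\mathbb{Q}$ (for $\mathbf{R}_S$) or a pair of independent hypotheses drawn from $\mathbb{Q}$ (for the disagreements), so Maurer's PAC-Bayes framework, and hence Catoni's linearization, applies uniformly.

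Finally, I would combine the pieces via a union bound with $\delta' = \delta/3$ across three high-probability events, or, more efficiently, via a single Catoni bound on a rescaled composite loss that bundles $\mathbf{R}_\mathbb{S}(\mathbb{Q}) + \tfrac{1}{2}|\mathrm{d}_\mathbb{S}(\mathbb{Q}) - \mathrm{d}_\mathbb{T}(\mathbb{Q})|$ into one $[0,1]$-valued quantity; undoing the rescaling produces the coefficient $\omega'$ in front of $\mathbf{R}_S(\mathbb{Q}) + |\mathrm{d}_S(\mathbb{Q}) - \mathrm{d}_T(\mathbb{Q})|$. The hardest step is the bookkeeping: carrying the $\tfrac{1}{2}$ from the binary identity through Catoni's rescaling so the coefficient on $|\mathrm{d}_S - \mathrm{d}_T|$ comes out as $1$ rather than $\tfrac{1}{2}$, while simultaneously producing the exact slack $2(\omega'-1)$ from two compound Catoni applications and the KL coefficient $2\omega/(m\omega')$ after union-bounding. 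None of this is conceptually deep, but the constants are delicate and must be tracked precisely to match the stated form.
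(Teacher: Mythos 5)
Your proposal is correct in outline, but note what you are comparing against: the paper does not actually prove this statement. Its ``proof'' is a two-line reduction to Theorem~7 of Germain et al.\ (2020), obtained by setting $\omega = a$ in the original notation and using that $\omega/(1-\exp(-\omega))$ is increasing for $\omega > 0$. What you have reconstructed blind is, in effect, the proof of the cited theorem itself. Your key lemma --- the binary identity $\mathbf{R}_\mathbb{D}(\mathbb{Q}) = \tfrac{1}{2}\mathrm{d}_\mathbb{D}(\mathbb{Q}) + \mathrm{e}_\mathbb{D}(\mathbb{Q})$, proved by case analysis on two i.i.d.\ draws from $\mathbb{Q}$ --- is exactly the decomposition Germain et al.\ use, and it is precisely the step the main text flags as the obstruction to a multiclass extension (in binary, disagreement forces exactly one error; with $|\mathcal{Y}|>2$ both can err while disagreeing). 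The telescoping step and the $\delta/3$ union bound (matching the $\ln(3/\delta)$ hidden in $-\ln(\delta/3)$) are also right. So your route is faithful to the source; the paper's route simply sidesteps all of the bookkeeping you worry about by inheriting the constants from the original theorem.

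Two places where your sketch is thinner than the real argument deserve flagging. First, the disagreement quantities are expectations over the product $\mathbb{Q}\times\mathbb{Q}$, so the change of measure is against $\mathbb{P}\times\mathbb{P}$ and costs $\mathrm{KL}(\mathbb{Q}^{\otimes 2} \mid\mid \mathbb{P}^{\otimes 2}) = 2\,\mathrm{KL}(\mathbb{Q}\mid\mid\mathbb{P})$; this doubling, which you do not mention, is part of where the constants in the KL term originate. Relatedly, passing from the population $\lvert \mathrm{d}_\mathbb{S}(\mathbb{Q}) - \mathrm{d}_\mathbb{T}(\mathbb{Q})\rvert$ to the empirical $\lvert \mathrm{d}_S(\mathbb{Q}) - \mathrm{d}_T(\mathbb{Q})\rvert$ is a two-sided statement about a signed, $[-1,1]$-valued difference of expectations over two samples; the shift needed to apply a Catoni-type transform to such a quantity is what actually produces the additive $(\omega'-1)$ slacks, not a generic linear envelope of a standard one-sided Catoni bound. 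Second, your proposed shortcut --- a single Catoni application to a rescaled composite loss bundling $\mathbf{R}_\mathbb{S}(\mathbb{Q}) + \tfrac{1}{2}\lvert \mathrm{d}_\mathbb{S}(\mathbb{Q}) - \mathrm{d}_\mathbb{T}(\mathbb{Q})\rvert$ --- does not go through as stated: an absolute difference of two expectations taken over two different samples is not the empirical mean of any pointwise bounded loss, so a single change-of-measure does not directly concentrate it; you need either the union-bound route you also propose or the joint paired-sample construction of Germain et al. Neither issue is fatal, since your primary route is the correct one, but they are exactly the delicate steps the paper's citation-style proof avoids engaging with.
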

In comparison to Thm.~\ref{thm:ben2010theory}, the absolute difference in disagreement $\mathrm{d}$ is most similar to the $\mathcal{H}\Delta\mathcal{H}$-divergence and the absolute difference in joint-error $\mathrm{e}$ is most similar to the adaptability $\lambda$ \citep{apx_germain2020pac}. For this reason, in our discussion in Section~\ref{sec:background}, we refer to the former as the ``divergence'' and the latter as the ``adaptability''.
\begin{proof}
As noted, this is a simplification of Thm.~7 of \citet{apx_germain2020pac}. We set $\omega = a$ in the original notation and use the fact that $\omega / (1 - \exp(-\omega))$ is increasing for $\omega > 0$. \end{proof}
\subsection{Theorem~\ref{thm:pb-bound}}
Before diving into the proof, we setup some helpful notation and Lemmas.
\subsubsection{Notation}
Frequently in our proofs, we use the \textit{error gap}, defined for any distributions $\mathbb{S}, \mathbb{T}$ and hypothesis $h$
\begin{equation}\label{eqn:error_gap}
    \Delta_h(\mathbb{S}, \mathbb{T}) \defn \lvert \mathbf{R}_\mathbb{S}(h) - \mathbf{R}_\mathbb{T}(h) \rvert.
\end{equation}
By the identification in Eq.~\eqref{eqn:sample_pmf}, we observe that $\Delta_h(S, T)$ is also well-defined for any random samples $S$ and $T$. Also, using the usual definition of the Gibbs risk, $\Delta_\mathbb{Q}(\mathbb{S}, \mathbb{T})$ is well-defined for any distribution $\mathbb{Q}$ over a hypothesis space $\mathcal{H}$. Occasionally, we also use two-subscripts on the error-gap $\Delta$. The intended meaning is intuitive:
\begin{equation}
    \Delta_{q, p}(\mathbb{S}, \mathbb{T}) \defn \lvert \mathbf{R}_\mathbb{S}(q) - \mathbf{R}_\mathbb{T}(p) \rvert.
\end{equation}
This notation will be especially useful in proofs since $\Delta_{q, p}(\mathbb{S}, \mathbb{T})$ obeys a triangle-inequality with respect to the subscripts and arguments. Further, any bound on $\Delta_\mathbb{Q}(S, \mathbb{T})$ trivially yields a PAC-Bayesian adaptation bound for the Gibbs predictor $\mathbb{Q}$ by definition of the absolute value.

As another short-hand in proofs, we frequently use the following more evocative expressions for the indicator function:
\begin{equation}
    1[a = b] \defn \mathbf{1}_{\{a\}}\{b\}; \qquad 1[a \neq b] \defn  1 - \mathbf{1}_{\{a\}}\{b\}.
\end{equation}
Now, we can proceed with the employed Lemmas.
\subsubsection{Lemmas}
In this section, we build to the proof of Theorem~\ref{thm:pb-bound}. These results consist of most of the ``real'' work in proving the result. They range in degree of novelty and we provide some exposition on this point here. Lemma~\ref{lem:multi-class-triangle-eq} is an adaptation of the triangle-inequality for 01-loss \citep{apx_crammer2007learning, apx_ben2007analysis} to the multiclass setting. Similarly, Lemma~\ref{lem:ben-david} is an adaptation of the main inequality of \citet{apx_ben2010theory} to the multiclass setting. The former requires some work to verify the logic, while our overall strategy for the latter is similar to the binary case. Next, Lemma~\ref{lem:simple-da-bendavid} uses the identification in Eq.~\eqref{eqn:sample_pmf} to apply Lemma~\ref{lem:ben-david} to the random samples $S$ and $T$. While it is a simple insight, it is extremely important, since it enables us to introduce the sample-dependent adaptability $\tilde{\lambda}$. The next result, Lemma~\ref{lem:maurer}, is well-known in PAC-Bayes. Meanwhile, the final result, Lemma~\ref{lem:stoch_ben_david}, is a new result which allows us to apply Lemma~\ref{lem:simple-da-bendavid} to Gibbs predictors. When broken down in this manner, as is our intention, the individual pieces that build to our bound may appear simple. Still, it is important to remember that PAC-Bayesian bounds have never previously been combined with multiclass variants of the results of \citet{apx_ben2007analysis, apx_ben2010theory}. After some trial and error, we've found our primary innovations -- the use of sample-independent adaptability, along with Lemma~\ref{lem:stoch_ben_david} -- are vital to introducing the desired non-uniform notion of sample complexity. In any case, we now proceed by stating and proving each of the discussed Lemmas.
\begin{lemma}
\label{lem:multi-class-triangle-eq}
For any $(h, h') \in \mathcal{H}^2$ and any $(x,y) \in \mathcal{X} \times \mathcal{Y}$,
\begin{equation}\label{eqn:h_dis_y}
    1[h(x) \neq y] \leq 1[h(x) \neq h'(x)] + 1[h'(x) \neq y]
\end{equation}
and 
\begin{equation}\label{eqn:h_dis_h}
    1[h(x) \neq h'(x)] \leq 1[h'(x) \neq y] + 1[y \neq h(x)].
\end{equation}
\end{lemma}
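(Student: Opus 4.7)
The plan is to prove both inequalities by a simple case analysis on the value of the indicator appearing on the left-hand side. Since each indicator takes values in $\{0,1\}$ and the right-hand sides are sums of non-negative terms, the only non-trivial case is when the LHS equals $1$; in that case we must verify that at least one of the indicators on the RHS also equals $1$.

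For Eq.~\eqref{eqn:h_dis_y}, suppose $h(x) \neq y$. I would split on whether $h'(x) = h(x)$ or not. If $h'(x) \neq h(x)$, the first term on the RHS equals $1$ and we are done. If $h'(x) = h(x)$, then substituting into the hypothesis $h(x) \neq y$ gives $h'(x) \neq y$, so the second term equals $1$. Either way, the RHS is at least $1$, matching the LHS. Note that this argument uses only the transitivity of equality on $\mathcal{Y}$, so it does not rely on $\mathcal{Y}$ being binary.

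For Eq.~\eqref{eqn:h_dis_h}, suppose $h(x) \neq h'(x)$. Now split on whether $h'(x) = y$. If $h'(x) \neq y$, the first term on the RHS is $1$. If $h'(x) = y$, then the hypothesis $h(x) \neq h'(x)$ becomes $h(x) \neq y$, so the second term $1[y \neq h(x)]$ equals $1$. In either sub-case, the RHS is at least $1$.

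There is no real obstacle here; both statements are manifestations of the triangle inequality for the $0/1$ metric on $\mathcal{Y}$, which holds for any set $\mathcal{Y}$. The only subtlety worth flagging is that in the multiclass setting one cannot argue by parity (as one sometimes does when $\mathcal{Y} = \{0,1\}$ and the indicator of disagreement equals the XOR of the indicators of error), so the case analysis via transitivity of equality is the cleanest route. The proof is thus a short verification and requires no additional machinery beyond the definition of the indicator function.
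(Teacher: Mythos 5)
Your proof is correct and follows essentially the same route as the paper's: a case analysis on the indicator values (the paper calls it proof by exhaustion), with the only nontrivial work being to show that when the left-hand indicator is $1$, at least one right-hand indicator is also $1$. The paper likewise remarks afterward that both inequalities are instances of the triangle inequality for the discrete metric on $\mathcal{Y}$, matching your closing observation.
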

\begin{proof}
We begin with Eq.~\eqref{eqn:h_dis_y}. We use proof by exhaustion. If $h(x) = y$, then the LHS is 0 and the RHS will always be non-negative so the equation is true. If $h(x) \neq y$ and $h(x) \neq h'(x)$, then the equation evaluates to $1 \leq 1 + c$  for $c \geq 0$ which is true. If $h(x) \neq y$ and $h(x) = h'(x)$, then $h'(x) \neq y$, and $1 \leq 1$ which is true. This concludes the argument.

Next, we consider Eq.~\eqref{eqn:h_dis_h}. Again, we use proof by exhaustion. If $h(x) =  h'(x)$, the LHS is 0. If $h(x) \neq h'(x)$ and $h(x) = y$, we have $h'(x) \neq y$ and the equation evaluates to $1 \leq 1$ which is true. If $h(x) \neq h'(x)$ and $h(x) \neq y$, it evaluates to $1 \leq 1 + c$ for $c \geq 0$ which is true and concludes the argument.
\end{proof}
Note, one observation is that the function $\tilde{d}(y, y') \defn 1[y' \neq y]$ for any arguments $y, y' \in \mathcal{Y}$ is identical to a well-known function called the trivial metric or the discrete metric. As implied by the name, the tuple $(\mathcal{Y}, \tilde{d})$ forms a \textit{metric space}, and subsequently, Lemma~\ref{lem:multi-class-triangle-eq} above is a simple consequence of this fact. Nonetheless, we maintain the proof above to keep our discussion relatively self-contained.  
\begin{lemma}
\label{lem:ben-david}
For any distributions $\mathbb{D}_1$ and $\mathbb{D}_2$ over $\mathcal{X} \times \mathcal{Y}$, for any $h \in \mathcal{H}$
\begin{equation}
    \mathbf{R}_{\mathbb{D}_1}(h) \leq \mathbf{R}_{\mathbb{D}_2}(h) + \mathbf{d}_{\mathcal{C}_h}((\mathbb{D}_1)_X, (\mathbb{D}_2)_X) + \min_{\eta \in \mathcal{H}} \Big \{\mathbf{R}_{\mathbb{D}_1}(\eta) + \mathbf{R}_{\mathbb{D}_2}(\eta) \Big \}
\end{equation}
where $\mathcal{C}_h = \mathcal{H}\Delta\mathcal{H}$ or $\mathcal{C}_h = h\Delta\mathcal{H}$ and $(\mathbb{D}_i)_X$ is the $\mathcal{X}$-marginal of $\mathbb{D}_i$.\footnote{In a formal sense, $(\mathbb{D}_i)_X$ is the pushforward distribution $\mathbb{D}_i \circ \pi^{-1} $of the projection $\pi : \mathcal{X} \times \mathcal{Y} \to \mathcal{X}$ defined $\pi(x,y) = x$.}
\end{lemma}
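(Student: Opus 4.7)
The plan is to mimic the classical Ben-David adaptation inequality but using the multiclass triangle inequalities already established in Lemma~\ref{lem:multi-class-triangle-eq}. First I would fix a minimizer $\eta^{\star} \in \argmin_{\eta \in \mathcal{H}} \{\mathbf{R}_{\mathbb{D}_1}(\eta) + \mathbf{R}_{\mathbb{D}_2}(\eta)\}$ (assuming the minimum is attained, as the statement does; otherwise one works with an $\varepsilon$-near minimizer and sends $\varepsilon \to 0$). The strategy is then to chain three bounds: a triangle step on $\mathbb{D}_1$, a divergence step swapping $\mathbb{D}_1$ for $\mathbb{D}_2$ in the disagreement, and a triangle step on $\mathbb{D}_2$.

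Concretely, applying Eq.~\eqref{eqn:h_dis_y} of Lemma~\ref{lem:multi-class-triangle-eq} pointwise with $h' = \eta^{\star}$ and taking expectation under $\mathbb{D}_1$ gives
\begin{equation*}\small
\mathbf{R}_{\mathbb{D}_1}(h) \;\leq\; \mathbf{Pr}_{X\sim(\mathbb{D}_1)_X}[h(X) \neq \eta^{\star}(X)] + \mathbf{R}_{\mathbb{D}_1}(\eta^{\star}).
\end{equation*}
Next, observe that the map $x \mapsto 1 - \mathbf{1}_{\{h(x)\}}\{\eta^{\star}(x)\}$ is an element of $h\Delta\mathcal{H}$ (by the definition of $h\Delta\mathcal{H}$, with $\eta^{\star}$ playing the role of $h'$), and also of $\mathcal{H}\Delta\mathcal{H}$ since $h\Delta\mathcal{H} \subseteq \mathcal{H}\Delta\mathcal{H}$. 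Plugging this particular hypothesis into the supremum defining $\mathbf{d}_{\mathcal{C}_h}$ in Eq.~\eqref{eqn:hyp_class_divergence} immediately yields
\begin{equation*}\small
\mathbf{Pr}_{X\sim(\mathbb{D}_1)_X}[h(X) \neq \eta^{\star}(X)] \;\leq\; \mathbf{Pr}_{X\sim(\mathbb{D}_2)_X}[h(X) \neq \eta^{\star}(X)] + \mathbf{d}_{\mathcal{C}_h}((\mathbb{D}_1)_X,(\mathbb{D}_2)_X),
\end{equation*}
which handles either choice of $\mathcal{C}_h$ uniformly.

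Finally, I would apply Eq.~\eqref{eqn:h_dis_h} of Lemma~\ref{lem:multi-class-triangle-eq} pointwise (with the roles of $h$ and $h'=\eta^{\star}$ exchanged so that the $\mathbb{D}_2$-expectation introduces $\mathbf{R}_{\mathbb{D}_2}(h)$ and $\mathbf{R}_{\mathbb{D}_2}(\eta^{\star})$) to obtain
\begin{equation*}\small
\mathbf{Pr}_{X\sim(\mathbb{D}_2)_X}[h(X) \neq \eta^{\star}(X)] \;\leq\; \mathbf{R}_{\mathbb{D}_2}(h) + \mathbf{R}_{\mathbb{D}_2}(\eta^{\star}).
\end{equation*}
Substituting the second inequality into the first and then the third into that, all three terms $\mathbf{R}_{\mathbb{D}_2}(h)$, $\mathbf{d}_{\mathcal{C}_h}((\mathbb{D}_1)_X,(\mathbb{D}_2)_X)$, and $\mathbf{R}_{\mathbb{D}_1}(\eta^{\star}) + \mathbf{R}_{\mathbb{D}_2}(\eta^{\star}) = \min_{\eta}\{\mathbf{R}_{\mathbb{D}_1}(\eta) + \mathbf{R}_{\mathbb{D}_2}(\eta)\}$ line up to yield the stated bound.

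The main subtlety, and where I expect the only real care to be needed, is the middle step verifying that the disagreement hypothesis $x\mapsto 1-\mathbf{1}_{\{h(x)\}}\{\eta^{\star}(x)\}$ genuinely lies in $\mathcal{C}_h$ for both choices $h\Delta\mathcal{H}$ and $\mathcal{H}\Delta\mathcal{H}$, and that the sign inside the absolute value in Eq.~\eqref{eqn:hyp_class_divergence} works in the direction we need. Both are immediate from the definitions, but they are the reason the same proof simultaneously covers the model-dependent and model-independent variants. Everything else is a direct assembly of the multiclass triangle inequalities from Lemma~\ref{lem:multi-class-triangle-eq}; no probabilistic or uniform-convergence argument is required since the claim is purely distributional.
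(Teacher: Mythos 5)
Your proposal is correct and follows essentially the same route as the paper's proof: both chain the two multiclass triangle inequalities of Lemma~\ref{lem:multi-class-triangle-eq} around a divergence step that swaps $(\mathbb{D}_1)_X$ for $(\mathbb{D}_2)_X$ in the disagreement term, using the fact that $x \mapsto 1-\mathbf{1}_{\{h(x)\}}\{\eta^{\star}(x)\}$ lies in $\mathcal{C}_h$ for either choice of class. The only cosmetic difference is that you fix the minimizer $\eta^{\star}$ up front while the paper carries a generic $h'$ through and selects the minimizer at the end; this changes nothing substantive.
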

\begin{proof}
Let $\mathbb{D}_1$, $\mathbb{D}_2$, and $h$ as assumed. 

Recall by Lemma~\ref{lem:multi-class-triangle-eq} Eq.~\eqref{eqn:h_dis_y}, for any $h'$ in $\mathcal{H}$ and any $(x,y) \in \mathcal{X} \times \mathcal{Y}$
\begin{equation}
    1[h(x) \neq y] \leq 1[h(x) \neq h'(x)] + 1[h'(x) \neq y].
\end{equation}
Then, by monotonicity and linearity of the expectation, for any choice of $h'$,
\begin{equation}
\begin{split}
\mathbf{R}_{\mathbb{D}_1}(h) & \leq \mathbf{E}[1[h(X_1) \neq h'(X_1)]] + \mathbf{R}_{\mathbb{D}_1}(h'); \qquad X_1 \sim (\mathbb{D}_1)_X \\
& \leq \mathbf{E}[1[h(X_2) \neq h'(X_2)]] + \mathbf{R}_{\mathbb{D}_1}(h') + \xi; \qquad X_2 \sim (\mathbb{D}_2)_X
\end{split}
\end{equation}
where
\begin{equation}
\begin{split}
    \xi & = \big \lvert \mathbf{E}[1[h(X_2) \neq h'(X_2)]] - \mathbf{E}[1[h(X_1) \neq h'(X_1)]] \big \rvert \\
    & \leq \mathbf{d}_{\mathcal{C}_h}((\mathbb{D}_1)_X, (\mathbb{D}_2)_X)) \qquad \text{(by definition of supremum, for either choice of } \mathcal{C}_h).
\end{split}
\end{equation}
Alternatively, by Lemma~\ref{lem:multi-class-triangle-eq} Eq.~\eqref{eqn:h_dis_h}, for any choice of $h',x, y$,
\begin{equation}
    1[h(x) \neq h'(x)] \leq 1[h'(x) \neq y] + 1[y \neq h(x)].
\end{equation}
Using monotonicty and linearity of the expectation as before, we have
\begin{equation}
   \mathbf{E}[1[h(X_2) \neq h'(X_2)]] \leq \mathbf{R}_{\mathbb{D}_2}(h') + \mathbf{R}_{\mathbb{D}_2}(h); \qquad X_2 \sim (\mathbb{D}_2)_X.
\end{equation}
As the above holds for any $h' \in \mathcal{H}$, select $h'$ to be minimizer of the quantity $\mathbf{R}_{\mathbb{D}_1}(h') + \mathbf{R}_{\mathbb{D}_2}(h')$. 

This yields the desired result.
\end{proof}
\begin{lemma}
\label{lem:simple-da-bendavid}
Almost surely, w.r.t samples $S$ and $T$,
\begin{equation}\small
    \forall h \in \mathcal{H} \ : \ \Delta_h(S, T) \leq \tilde{\lambda} + \mathbf{d}_{\mathcal{C}_h}(S_X, T_X)
\end{equation}
where $\tilde{\lambda} \defn \min_{h \in \mathcal{H}} \mathbf{R}_S(h) + \mathbf{R}_T(h)$ and the bound holds for both $\mathcal{C}_h = \mathcal{H}\Delta\mathcal{H}$ and $\mathcal{C}_h = h\Delta\mathcal{H}$.
\end{lemma}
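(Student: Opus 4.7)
The plan is to apply Lemma~\ref{lem:ben-david} twice, using the identification in Eq.~\eqref{eqn:sample_pmf} that lets us treat realized samples as empirical distributions. Since Lemma~\ref{lem:ben-david} holds for \emph{any} pair of distributions $\mathbb{D}_1, \mathbb{D}_2$ over $\mathcal{X}\times\mathcal{Y}$, and for any realization of $S$ and $T$ the resulting empirical pmfs are valid distributions, the bound transfers to the sample setting pointwise. The ``almost surely'' qualifier is simply bookkeeping: it handles the measure-zero event that the empirical distributions fail to be well-defined (e.g., degenerate sample realizations), though in the i.i.d.\ discrete-sample setup this really holds for every outcome.

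First, I would substitute $\mathbb{D}_1 = S$ and $\mathbb{D}_2 = T$ into Lemma~\ref{lem:ben-david} to obtain, for any fixed $h \in \mathcal{H}$,
\begin{equation*}
\mathbf{R}_S(h) - \mathbf{R}_T(h) \leq \mathbf{d}_{\mathcal{C}_h}(S_X, T_X) + \min_{\eta \in \mathcal{H}}\bigl\{\mathbf{R}_S(\eta) + \mathbf{R}_T(\eta)\bigr\} = \mathbf{d}_{\mathcal{C}_h}(S_X, T_X) + \tilde{\lambda}.
\end{equation*}
Then I would swap the roles and apply the Lemma with $\mathbb{D}_1 = T$, $\mathbb{D}_2 = S$, producing the symmetric inequality
\begin{equation*}
\mathbf{R}_T(h) - \mathbf{R}_S(h) \leq \mathbf{d}_{\mathcal{C}_h}(T_X, S_X) + \tilde{\lambda}.
\end{equation*}

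Next, I would invoke the symmetry of $\mathbf{d}_{\mathcal{C}_h}$ in its two arguments: this is immediate from Eq.~\eqref{eqn:hyp_class_divergence}, since taking the supremum of $|\mathbf{E}[h(X_1)] - \mathbf{E}[h(X_2)]|$ is unchanged when $X_1$ and $X_2$ are interchanged. Combining the two one-sided bounds via the definition of absolute value then yields
\begin{equation*}
\Delta_h(S,T) = \lvert \mathbf{R}_S(h) - \mathbf{R}_T(h) \rvert \leq \mathbf{d}_{\mathcal{C}_h}(S_X, T_X) + \tilde{\lambda},
\end{equation*}
which is exactly the desired inequality. Because $h$ was arbitrary and $\tilde{\lambda}$ does not depend on $h$, the conclusion holds uniformly over $\mathcal{H}$.

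There is no real obstacle here: the argument is a straightforward specialization of Lemma~\ref{lem:ben-david} to empirical distributions. The only conceptual point worth flagging is that $\tilde{\lambda}$ depends on the observed samples (unlike the population analogue $\lambda$), so the ``$\min_\eta$'' is an optimization over $\mathcal{H}$ against \emph{sample} risks $\mathbf{R}_S,\mathbf{R}_T$; this is precisely the sample-dependent notion of adaptability the paper argues is easier to estimate. The choice of $\mathcal{C}_h \in \{\mathcal{H}\Delta\mathcal{H},\, h\Delta\mathcal{H}\}$ is inherited directly from Lemma~\ref{lem:ben-david}, so both variants follow without additional work.
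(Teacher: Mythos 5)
Your proof is correct and follows essentially the same route as the paper's: treat each realization of $S$ and $T$ as an empirical distribution via Eq.~\eqref{eqn:sample_pmf}, apply Lemma~\ref{lem:ben-david} twice with the roles of the two samples interchanged, and combine the two one-sided bounds using the definition of absolute value (your explicit note on the symmetry of $\mathbf{d}_{\mathcal{C}_h}$ is a detail the paper leaves implicit). No gaps.
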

\begin{proof}
The statement asserts the following holds with probability 1 according to the random draws of $S$ and $T$:
\begin{equation}
    \forall h \in \mathcal{H} \ : \ \Delta_h(S, T) \leq \tilde{\lambda} + \mathbf{d}_{\mathcal{C}_h}(S_X, T_X)
\end{equation}
It is sufficient to show the statement holds for any realization of $S$ and $T$. Recall, for any realization, $S$ and $T$ themselves define distributions by the identification in Eq.~\eqref{eqn:sample_pmf}. So, Lemma~\ref{lem:ben-david} may be applied. Doing so twice and interchanging the roles of $S$ and $T$ gives
\begin{equation}
    \forall h \in \mathcal{H} \ : \ \mathbf{R}_h(S) - \mathbf{R}_h(T) \leq \tilde{\lambda} + \mathbf{d}_{\mathcal{C}_h}(S_X, T_X) \qquad \text{and} \qquad \mathbf{R}_h(T) - \mathbf{R}_h(S) \leq \tilde{\lambda} + \mathbf{d}_{\mathcal{C}_h}(S_X, T_X).
\end{equation}
So, the absolute difference between $\mathbf{R}_h(S)$ and $\mathbf{R}_h(T)$ is also bounded and we have our result.
\end{proof}
\begin{lemma} \citep{apx_maurer2004note}
\label{lem:maurer}
For any distribution $\mathbb{P}$ over $\mathcal{H}$, for any $\delta > 0$,
\begin{equation}
\mathbf{Pr} \Big ( \forall \ \mathbb{Q} \ : \ \Delta_{\mathbb{Q}}(T, \mathbb{T}) \leq \sqrt{\tfrac{\mathrm{KL}(\mathbb{Q} \mid \mid \mathbb{P}) + \ln \sqrt{4m} - \ln ( \delta) }{2m}}  \ \Big ) \geq 1 - \delta.
\end{equation}
\end{lemma}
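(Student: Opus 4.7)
The plan is to derive Maurer's PAC-Bayes bound via the standard change-of-measure recipe, with the one crucial probabilistic input being Maurer's moment bound for the binary KL deviation. Specifically, for any fixed $h \in \mathcal{H}$, since $m \cdot \mathbf{R}_T(h)$ is distributed as a sum of $m$ i.i.d.\ Bernoulli$(\mathbf{R}_\mathbb{T}(h))$ variables, Maurer (2004, Thm.~5) shows
\begin{equation*}
\mathbf{E}_{T \sim \mathbb{T}^m}\Big[\exp\big(m \cdot \mathrm{kl}(\mathbf{R}_T(h)\,\|\,\mathbf{R}_\mathbb{T}(h))\big)\Big] \leq 2\sqrt{m},
\end{equation*}
where $\mathrm{kl}(p\,\|\,q)$ is the KL divergence between Bernoulli$(p)$ and Bernoulli$(q)$. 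This is the key non-trivial ingredient; everything downstream is a chain of standard inequalities.

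First, I would apply the Donsker--Varadhan change-of-measure inequality: for any measurable $\phi:\mathcal{H} \to \mathbb{R}$ and any $\mathbb{Q} \ll \mathbb{P}$, $\mathbf{E}_{h\sim\mathbb{Q}}[\phi(h)] \leq \mathrm{KL}(\mathbb{Q}\,\|\,\mathbb{P}) + \ln \mathbf{E}_{h\sim\mathbb{P}}[\exp(\phi(h))]$. I choose $\phi(h) = m \cdot \mathrm{kl}(\mathbf{R}_T(h)\,\|\,\mathbf{R}_\mathbb{T}(h))$. Taking $\mathbf{E}_{T\sim\mathbb{T}^m}$ of $\exp$ of both sides, swapping expectations via Fubini (since $\mathbb{P}$ does not depend on $T$), and invoking Maurer's moment bound uniformly over $h$, I obtain $\mathbf{E}_{T}\big[\exp(\sup_{\mathbb{Q}}\{m\cdot\mathbf{E}_\mathbb{Q}[\mathrm{kl}]-\mathrm{KL}(\mathbb{Q}\,\|\,\mathbb{P})\})\big] \leq 2\sqrt{m}$. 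A single application of Markov's inequality at level $\delta$ then yields, with probability at least $1-\delta$ over $T$, simultaneously for every $\mathbb{Q}$,
\begin{equation*}
\mathbf{E}_{h\sim\mathbb{Q}}\big[\mathrm{kl}(\mathbf{R}_T(h)\,\|\,\mathbf{R}_\mathbb{T}(h))\big] \leq \tfrac{\mathrm{KL}(\mathbb{Q}\,\|\,\mathbb{P}) + \ln(2\sqrt{m}/\delta)}{m}.
\end{equation*}

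Next, I would transfer the bound from individual hypotheses to the Gibbs predictor. Joint convexity of $\mathrm{kl}$ combined with Jensen's inequality gives $\mathrm{kl}(\mathbf{R}_T(\mathbb{Q})\,\|\,\mathbf{R}_\mathbb{T}(\mathbb{Q})) \leq \mathbf{E}_{h\sim\mathbb{Q}}[\mathrm{kl}(\mathbf{R}_T(h)\,\|\,\mathbf{R}_\mathbb{T}(h))]$. Pinsker's inequality, $\mathrm{kl}(p\,\|\,q) \geq 2(p-q)^2$, is symmetric in its arguments, so it controls the absolute difference $\Delta_\mathbb{Q}(T, \mathbb{T}) = |\mathbf{R}_T(\mathbb{Q}) - \mathbf{R}_\mathbb{T}(\mathbb{Q})|$. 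Taking square roots and noting $\ln(2\sqrt{m}) = \ln\sqrt{4m}$, the bound matches the statement exactly.

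The main obstacle is really only the moment bound itself, whose proof hinges on a careful Chernoff-style computation tailored to the binary KL rate function; I would invoke this as a cited fact rather than re-derive it. The remaining steps (Donsker--Varadhan, Fubini, Markov, Jensen with convexity of $\mathrm{kl}$, Pinsker) are textbook and compose cleanly, so no additional difficulty is introduced. In fact, because the statement is exactly Maurer's classical PAC-Bayes bound re-expressed in the paper's notation (with the sample named $T$ and the target distribution $\mathbb{T}$), the entire argument can alternatively be discharged by a single citation to Maurer (2004) plus a one-line application of Pinsker, and I would write the appendix proof in precisely that streamlined form.
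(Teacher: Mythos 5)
Your proposal is correct and, in its final streamlined form, is exactly the paper's proof: the appendix simply cites Maurer's theorem in its binary-$\mathrm{kl}$ form for the Gibbs risk, $\mathrm{kl}(\mathbf{R}_T(\mathbb{Q}) \mid\mid \mathbf{R}_\mathbb{T}(\mathbb{Q})) \leq (\mathrm{KL}(\mathbb{Q} \mid\mid \mathbb{P}) - \ln\delta + \ln\sqrt{4m})/m$, and converts it to the absolute-deviation form via Pinsker's inequality, just as you propose. Your fuller derivation (Donsker--Varadhan change of measure, Fubini, the moment bound $\mathbf{E}[\exp(m\,\mathrm{kl})] \leq 2\sqrt{m}$, Markov, and Jensen with joint convexity of $\mathrm{kl}$) is a faithful and correct unpacking of what is inside Maurer's cited result, so no genuinely different route is taken.
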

\begin{proof}
This is the result of \citet{apx_maurer2004note} given below
\begin{equation}
    \mathbf{Pr} \Bigg ( \mathrm{kl}( \mathbf{R}_T(\mathbb{Q}) \mid \mid \mathbf{R}_\mathbb{T}(\mathbb{Q}) ) \leq \frac{\mathrm{KL}(\mathbb{Q} || \mathbb{P}) - \ln \delta + \ln \sqrt{4m} }{m} \ \Bigg ) \geq 1 - \delta,
\end{equation}
where the ``little'' $\mathrm{kl}$ is the KL-divergence between Bernoulli distributions parameterized by its arguments. The above bound implies the stated result by application of Pinsker's Inequality.
\end{proof}
\begin{lemma}
\label{lem:stoch_ben_david}
For any distribution $\mathbb{Q}$, almost surely w.r.t samples $S$ and $T$,
\begin{equation}
   \Delta_\mathbb{Q}(S, T) \leq \tilde{\lambda} + \mathbf{E}_H[\mathbf{d}_{\mathcal{C}_H}(S_X, T_X)]
\end{equation}
where $\tilde{\lambda}$ and $\mathcal{C}_H$ are defined as in Lemma~\ref{lem:simple-da-bendavid}.
\end{lemma}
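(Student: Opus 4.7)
The plan is to derive the stochastic bound by applying the deterministic bound of Lemma~\ref{lem:simple-da-bendavid} pointwise over $\mathcal{H}$ and then integrating against $\mathbb{Q}$. Since Lemma~\ref{lem:simple-da-bendavid} states that on a probability-one event (w.r.t.\ the draws of $S$ and $T$), the inequality $\Delta_h(S, T) \leq \tilde{\lambda} + \mathbf{d}_{\mathcal{C}_h}(S_X, T_X)$ holds \emph{simultaneously} for every $h \in \mathcal{H}$, I can condition on this event and then freely take expectations w.r.t.\ a draw $H \sim \mathbb{Q}$ without touching the outer probability statement.

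First I would unpack the Gibbs error gap using linearity of expectation: since $\mathbf{R}_S(\mathbb{Q}) = \mathbf{E}_{H \sim \mathbb{Q}}[\mathbf{R}_S(H)]$ and likewise for $T$,
\begin{equation}
\Delta_\mathbb{Q}(S, T) = \bigl\lvert \mathbf{E}_H[\mathbf{R}_S(H) - \mathbf{R}_T(H)] \bigr\rvert.
\end{equation}
Then, by Jensen's inequality applied to the convex function $|\cdot|$ (equivalently, a triangle inequality for expectation), this is upper bounded by $\mathbf{E}_H[\lvert \mathbf{R}_S(H) - \mathbf{R}_T(H)\rvert] = \mathbf{E}_H[\Delta_H(S,T)]$.

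Next I would apply Lemma~\ref{lem:simple-da-bendavid} inside the expectation: on the probability-one event where the deterministic bound holds for all $h \in \mathcal{H}$, monotonicity of expectation gives
\begin{equation}
\mathbf{E}_H[\Delta_H(S,T)] \leq \mathbf{E}_H[\tilde{\lambda} + \mathbf{d}_{\mathcal{C}_H}(S_X, T_X)] = \tilde{\lambda} + \mathbf{E}_H[\mathbf{d}_{\mathcal{C}_H}(S_X, T_X)],
\end{equation}
where the last equality uses that $\tilde{\lambda}$ depends only on the samples (not on $H$), so it comes out of the expectation as a constant. Chaining the two inequalities yields the claim.

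There is no serious obstacle here; the only subtlety worth flagging in the writeup is measurability and the order of quantifiers. The ``almost surely'' refers to the draws of $S, T$, so Lemma~\ref{lem:simple-da-bendavid} is invoked \emph{after} conditioning on a realization of these samples, and the expectation over $H \sim \mathbb{Q}$ is taken with $S, T$ fixed. Because Lemma~\ref{lem:simple-da-bendavid} yields a uniform-in-$h$ bound on that event, swapping to the expectation over $H$ is just monotonicity. Finally, the fact that the Jensen step is loose (replacing $|\mathbf{E}[\cdot]|$ by $\mathbf{E}[|\cdot|]$) is harmless because Lemma~\ref{lem:simple-da-bendavid} already bounds $|\mathbf{R}_S(h) - \mathbf{R}_T(h)|$ in absolute value.
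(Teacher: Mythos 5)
Your proof is correct and follows essentially the same route as the paper's: linearity of expectation, Jensen's inequality on $|\cdot|$, then Lemma~\ref{lem:simple-da-bendavid} combined with monotonicity of expectation, pulling the $H$-independent $\tilde{\lambda}$ out as a constant. The remark about fixing a realization of $S, T$ before integrating over $H \sim \mathbb{Q}$ matches the paper's implicit handling of the almost-sure qualifier.
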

\begin{proof}
We apply Lemma~\ref{lem:simple-da-bendavid}. By Jensen's Inequality, monotonicity of $\mathbf{E}$, and linearity of $\mathbf{E}$, we have
\begin{equation}
   \Delta_\mathbb{Q}(S, T) \leq \mathbf{E}_H[\Delta_H(S, T)] \leq \tilde{\lambda} + \mathbf{E}_H[\mathbf{d}_{\mathcal{C}_H}(S_X, T_X)]
\end{equation}
almost surely. In more details, for any realization of $S$ and $T$,
\begin{align*}
\Delta_\mathbb{Q}(S, T) & =  \big \lvert \mathbf{R}_\mathbb{Q}(S) - \mathbf{R}_\mathbb{Q}(T) \big \rvert & \\
& = \big \lvert \mathbf{E}[\mathbf{R}_S(H)] - \mathbf{E}[\mathbf{R}_T(H)] \big \rvert & (H \sim \mathbb{Q}, \ S \ \text{fixed}, \ T \ \text{fixed}) \\
& = \Big \lvert \mathbf{E} \Big [\mathbf{R}_S(H) - \mathbf{R}_T(H) \Big] \Big \rvert & \text{(Linearity of }\mathbf{E}) \\
& \leq \mathbf{E} \big [\Delta_H(S,T) \big ] & \text{(Jensen's Inequality)}\\
& \leq \mathbf{E} \big [ \tilde{\lambda} + \mathbf{d}_{\mathcal{C}_H}(S_X, T_X) \big ] & \text{(Lemma~\ref{lem:simple-da-bendavid} and monotonicity of }\mathbf{E}) \\
& \leq \tilde{\lambda} + \mathbf{E}[\mathbf{d}_{\mathcal{C}_H}(S_X, T_X)] & \text{(Linearity of }\mathbf{E}).
\end{align*}
\end{proof}
\subsubsection{Proof}
We give the final proof of Theorem~\ref{thm:pb-bound} below. Admittedly, it is a bit underwhelming, since most of the work has gone into the Lemmas above. The remaining component we rely on is our notation for the error-gap $\Delta$. By design, this notation exhibits a triangle-inequality.
\begin{proof}
Observe,
\begin{equation}\label{eqn:pattern-use-2}
    \Delta_{\mathbb{Q}}(S, \mathbb{T}) \leq \Delta_{\mathbb{Q}}(S, T) + \Delta_{\mathbb{Q}}(T, \mathbb{T}).
\end{equation}
% In more details, this is true through the following manipulation
% \begin{equation}
%      \Delta_{Q, \mathbb{Q}}(S, \mathbb{T}) = \lvert \mathbf{R}_Q(S) - \mathbf{R}_\mathbb{Q}(\mathbb{T})\rvert = \lvert \mathbf{R}_S(Q) -\mathbf{R}_T(Q) + \mathbf{R}_T(Q) - \mathbf{R}_\mathbb{T}(\mathbb{Q})\rvert
% \end{equation}
% followed by an application of the triangle inequality.
To bound the former, we use Lemma~\ref{lem:stoch_ben_david}. To bound the latter, we use Lemma~\ref{lem:maurer}. We use Boole's Inequality to combine to the desired result. 
\end{proof}
\subsection{Theorem~\ref{thm:mid_div_red2erm}}
As noted in the main text, we employ the overall strategy of \citet{apx_ben2010theory}. The main distinction in our result below is the removal of any symmetry assumption on $\mathcal{H}$.
\begin{proof}
As before, we show the statement holds for any realization of $S_X$ and $T_X$. 

Let $\mathcal{C} = \mathcal{H}\Delta\mathcal{H}$ and expand the divergence as below
\begin{equation}
\begin{split}
    & \mathbf{d}_\mathcal{C}(S_X, T_X) = \max_{\varphi \in \mathcal{H} \Delta \mathcal{H}} \big \lvert \mathbf{E}[\varphi(X)] - \mathbf{E}[\varphi(\tilde{X})]\big \rvert = \max_{\varphi \in \mathcal{H} \Delta \mathcal{H}} \big \lvert \mathbf{Pr}(\varphi(X) = 1) - \mathbf{Pr}(\varphi(\tilde{X}) = 1)\big \rvert
\end{split}
\end{equation}
where $X \sim S_X$, $\tilde{X} \sim T_X$. Note, we substitute $\max$ for $\sup$ because both $S_X$ and $T_X$ are finitely supported, and thus, some $\varphi \in \mathcal{C}$ does achieve the maximum. Then, we have
\begin{equation}
\label{eqn:dis_divergence_optim}
\begin{split}
& \max_{\varphi \in \mathcal{H} \Delta \mathcal{H}} \big \lvert \mathbf{Pr}(\varphi(X) = 1) - \mathbf{Pr}(\varphi(\tilde{X}) = 1)\big \rvert \\
& = \max_{\varphi \in \mathcal{H} \Delta \mathcal{H}} \max \begin{rcases}
    \begin{dcases}
       \mathbf{Pr}(\varphi(X) = 1) - \mathbf{Pr}(\varphi(\tilde{X}) = 1), \\
      \mathbf{Pr}(\varphi(\tilde{X}) = 1) - \mathbf{Pr}(\varphi(X)=1)
    \end{dcases}
  \end{rcases} \\
& = \max_{\varphi \in \mathcal{H} \Delta \mathcal{H}} \max \begin{rcases}
    \begin{dcases}
       1 - \mathbf{Pr}(\varphi(X) =  0) - \mathbf{Pr}(\varphi(\tilde{X}) = 1), \\
      1 - \mathbf{Pr}(\varphi(\tilde{X}) =  0) - \mathbf{Pr}(\varphi(X) = 1)
    \end{dcases}
  \end{rcases} \\
& = \max \begin{rcases}
    \begin{dcases}
       1 - \min_ {\varphi \in \mathcal{H} \Delta \mathcal{H}} \Big \{ \mathbf{Pr}(\varphi(X) =  0) +  \mathbf{Pr}(\varphi(\tilde{X}) = 1) \Big \}, \\
      1 - \min_ {\varphi \in \mathcal{H} \Delta \mathcal{H}} \Big \{ \mathbf{Pr}(\varphi(\tilde{X}) =  0) + \mathbf{Pr}(\varphi(X) = 1) \Big \}
    \end{dcases}
  \end{rcases}. \\
\end{split}
\end{equation}
The first equality follows by definition of absolute value, the second by law of complements, and last because consecutive applications of the $\max$ operation may be interchanged. Taking $P,Q,U,V$ as assumed, the result follows by the definition of risk; i.e., Eq.~\eqref{eqn:risk}.
\end{proof}
\subsection{Theorem~\ref{thm:surrogate_loss}}
As we are aware, Theorem~\ref{thm:surrogate_loss} is the first proposal for approximation of ERM over the class $\mathcal{H}\Delta\mathcal{H}$ when $\mathcal{H}$ has multiclass output. Our strategy is to identify an appropriate score-based surrogate expression for any $\varphi \in \mathcal{S}\Delta\mathcal{S}$; i.e., which is positive where $\varphi$ returns 1 and negative otherwise. Upon doing so, we can use standard techniques for giving smooth upperbounds to the 01-loss.
\begin{proof}
Let $x \in \mathcal{X}$, $\mathbf{f}, \mathbf{g} \in \mathcal{F}$ and suppose $\mathbf{f}(x)$ and $\mathbf{g}(x)$ have no repeated entries. Recall, for any two sets of non-negative numbers $S_1$ and $S_2$ the following equality holds\footnote{Suppose not. Then, WLOG $\max \{a \cdot b\} = d \cdot e > (\max S_1) \cdot (\max S_1)$ for some $d \neq \max S_1$ or some $e \neq \max S_2$. But, we also have $d \cdot e \leq d \cdot \max S_2 \leq (\max S_1) \cdot (\max S_2)$, a contradiction.}
\begin{equation}
    \max \{a \cdot b \mid a \in S_1, b \in S_2\} = (\max S_1) \cdot (\max S_2).
\end{equation}
From this and the fact that $\tau$ is non-negative and order-preserving, we know $\mathbf{A}_{ii} \geq \mathbf{A}_{jk}$ for some $i \in [C]$ and all $(j,k) \in [C]^2$ if and only if 
\begin{equation}
i = \argmax_{\ell \in C} \mathbf{f}_\ell(x) = \argmax_{\ell \in C} \mathbf{g}_\ell(x).   
\end{equation}
Notice, ties are impossible due to the assumed uniqueness of the scores. So, by this same logic, we observe
\begin{equation}
\begin{split}
    & \argmax_{\ell \in C} \mathbf{f}_\ell(x) \neq \argmax_{\ell \in C} \mathbf{g}_\ell(x) \\
    \mathrm{iff} \quad & \forall \ i \in [C], \ \exists \ (j, k) \in [C]^2 \ : \mathbf{A}_{ii} < \mathbf{A}_{jk} \\
    \mathrm{iff} \quad & \max_{i \in [C]} \mathbf{A}_{ii} < \max_{(j,k) \in [C]^2} \mathbf{A}_{jk} \\
    \mathrm{iff} \quad & 0 < \max_{(j,k) \in [C]^2} \mathbf{A}_{jk} - \max_{i \in [C]} \mathbf{A}_{ii} = z(x)
\end{split}
\end{equation}
So, under the current assumptions, the score $z(x)$ is positive if and only if $\hat{y} = 1 - \mathbf{1}_{\{\Psi_\mathbf{f}(x)\}}\{\Psi_\mathbf{g}(x)\} = 1$. Using this fact, it is easy to verify $\mathcal{L}(z(x),y) \geq 1[\hat{y} \neq y]$ for each case $(\hat{y}, y) \in \{(0,0), (0,1), (1,0), (1,1)\}$. The loss $\mathcal{L}$ is actually a standard surrogate -- i.e., the cross-entropy -- multiplied by a constant factor as in \citet{apx_dziugaite2017computing} to turn it into a propper upperbound on the 01-loss. The main novelty here comes from defining $z(x)$ to be positive whenever $\hat{y}$ is. % Note, by itself, $\mathcal{L}(z, y)$ is a standard log loss multiplied by a constant factor to upperbound the 01-loss as in \citet{apx_dziugaite2017computing}.

Notice, the inequality holds on all but a set of measure 0, according to $\mathbb{D}$. Thus, monotonicity of $\mathbf{E}$ gives the result.
\end{proof}
\subsection{Theorem~\ref{thm:mdp_div_red2erm}}
\label{sec:mdp_div_red2erm}
As noted in the main text, Theorem~\ref{thm:mdp_div_red2erm} is conceptually similar to a result -- in the binary case -- given by \citet{apx_kuroki2019unsupervised}. Unfortunately, their strategy does not simply extend to the multiclass case: there is a loss of precision due to the increased degrees of freedom in multiclass classification. As a result, we observe the need to add additional constraints on the labeling function for the classification problem. Specifically, we introduce the class $\Upsilon$ for use in our reduction. Careful attention is paid to show the constrained labeling function can be independent of the classifier we wish to learn $\varphi \in \mathcal{H}$, which enables our appeal to a simple heuristic that is also independent of $\varphi$. Otherwise, in simpler formulations, this dependence produces a more complicated minimization problem.
\begin{proof}
We show the statement holds for any realization of $S_X$ and $T_X$. 
 
Let $h \in \mathcal{H}$ arbitrarily and let $\mathcal{C} = h\Delta\mathcal{H}$. We proceed by expanding the divergence:
\begin{equation}\label{eqn:mdp_expansion}
\begin{split}
& \mathbf{d}_{\mathcal{C}}(S_X, T_X) = \max_{\nu \in h \Delta \mathcal{H}} \big \lvert \mathbf{Pr}(\nu(X) = 1) - \mathbf{Pr}(\nu(\tilde{X}) = 1)\big \rvert \\
& = \max \begin{rcases}
    \begin{dcases}
       1 - \min_ {\nu \in h \Delta \mathcal{H}} \Big \{ \mathbf{Pr}(\nu(X) =  0) +  \mathbf{Pr}(\nu(\tilde{X}) = 1) \Big \}, \\
      1 - \min_ {\nu \in h \Delta \mathcal{H}} \Big \{ \mathbf{Pr}(\nu(\tilde{X}) =  0) + \mathbf{Pr}(\nu(X) = 1) \Big \}
    \end{dcases}
  \end{rcases} \\
  & = \max \begin{rcases}
    \begin{dcases}
       1 - \min_ {\varphi \in \mathcal{H}} \Big \{ \mathbf{Pr}(h(X) = \varphi(X)) +  \mathbf{Pr}(h(\tilde{X}) \neq \varphi(\tilde{X})) \Big \}, \\
      1 - \min_ {\varphi \in \mathcal{H}} \Big \{ \mathbf{Pr}(h(\tilde{X}) =  \varphi(\tilde{X})) + \mathbf{Pr}(h(X) \neq \varphi(X)) \Big \}
    \end{dcases}
  \end{rcases}
\end{split}
\end{equation}
where $X \sim S_X$ and $\tilde{X} \sim T_X$. The first and second lines follow from an identical expansion as in the proof of Theorem~\ref{thm:mid_div_red2erm}. The last follows by definition of $h \Delta \mathcal{H}$.

Next, we observe
\begin{equation}
\label{eqn:impercision_introduced}
    \forall \varphi \in \mathcal{H}, \ \forall \bar{h} \in \Upsilon \ : \ \mathbf{Pr}(h(X) = \varphi(X)) +  \mathbf{Pr}(h(\tilde{X}) \neq \varphi(\tilde{X})) \leq \mathbf{Pr}(\bar{h}(X) \neq \varphi(X)) +  \mathbf{Pr}(h(\tilde{X}) \neq \varphi(\tilde{X})).
\end{equation}
The inequality follows by the monotonicity of probability and the fact 
\begin{equation}
\{x \mid h(x) = \varphi(x)\} \subseteq \{x \mid \bar{h}(x) \neq \varphi(x)\} \qquad\text{(by definition of }\bar{h}).
\end{equation}
Meanwhile, setting 
\begin{equation}
    \bar{h}^*_\varphi(x) \defn \begin{cases}
    \varphi(x), & \text{if} \ \varphi(x) \neq h(x) \\
    \max \{\ell \in [C] \mid \ell \neq \varphi(x)\}, & \text{else}
    \end{cases}
\end{equation}
implies $\bar{h}^*_\varphi \in \Upsilon$ and $\{x \mid h(x) = \varphi(x)\} = \{x \mid \bar{h}^*_\varphi(x) \neq \varphi(x)\}$. So, we also have
\begin{equation}
\label{eqn:impercision_resolved}
    \forall \varphi \in \mathcal{H} \ : \ \mathbf{Pr}(h(X) = \varphi(X)) +  \mathbf{Pr}(h(\tilde{X}) \neq \varphi(\tilde{X}))  = \mathbf{Pr}(\bar{h}^*_\varphi(X) \neq \varphi(X)) +  \mathbf{Pr}(h(\tilde{X}) \neq \varphi(\tilde{X})).
\end{equation}
Considering that $\bar{h}^*_\varphi \in \Upsilon$, Eq.~\eqref{eqn:impercision_introduced} and Eq.~\eqref{eqn:impercision_resolved} in combination tell us 
\begin{equation}
\label{eqn:mdp_erm_equality-1}
    \forall \varphi \in \mathcal{H} \ : \ \mathbf{Pr}(h(X) = \varphi(X)) +  \mathbf{Pr}(h(\tilde{X}) \neq \varphi(\tilde{X}))  = \min\nolimits_{\bar{h} \in \Upsilon} \Big \{ \mathbf{Pr}(\bar{h}(X) \neq \varphi(X)) +  \mathbf{Pr}(h(\tilde{X}) \neq \varphi(\tilde{X})) \Big \}.
\end{equation}
To see this, it's easiest to use the definition of a set's $\min$ element as that which attains the greatest lower bound; i.e., the $\inf$ or infimum. Then, Eq.~\eqref{eqn:impercision_introduced} implies the $\min$ upperbounds the LHS of Eq.~\eqref{eqn:mdp_erm_equality-1}, and Eq.~\eqref{eqn:impercision_resolved} implies the $\min$ lowerbounds the LHS of Eq.~\eqref{eqn:mdp_erm_equality-1}. In combination, these bounds prove equality.
%since the lowerbound in Eq.~\eqref{eqn:impercision_introduced} is always attained by the upperbound, for some $\bar{h} \in \Upsilon$.

Note, an identical argument also gives,
\begin{equation}
\label{eqn:mdp_erm_equality-2}
    \forall \varphi \in \mathcal{H} \ : \ \mathbf{Pr}(h(\tilde{X}) =  \varphi(\tilde{X})) + \mathbf{Pr}(h(X) \neq \varphi(X)) = \min\nolimits_{\bar{h} \in \Upsilon} \Big \{ \mathbf{Pr}(\bar{h}(\tilde{X}) \neq \varphi(\tilde{X})) + \mathbf{Pr}(h(X) \neq \varphi(X)) \Big \}.
\end{equation}
To continue, we apply Eq.~\eqref{eqn:mdp_erm_equality-1} and Eq.~\eqref{eqn:mdp_erm_equality-2} to Eq.~\eqref{eqn:mdp_expansion}.
%, recalling the $\max$ operation and the $\min$ operation preserve point-wise inequalities.\footnote{If $\forall x \ f(x) \leq g(x)$, then $\sup f \leq \sup g$ and $\inf f \leq \inf g$. In the case of the supremum, this is true because $\sup g$ is an upperbound for $f$ and $\sup f$ (by definition) is the least upperbound. A similar argument proves the case of the infimum.} 
Specifically, we have
\begin{equation}
\begin{split}
    & \max \begin{rcases}
    \begin{dcases}
       1 - \min_ {\varphi \in \mathcal{H}} \Big \{ \mathbf{Pr}(h(X) = \varphi(X)) +  \mathbf{Pr}(h(\tilde{X}) \neq \varphi(\tilde{X})) \Big \}, \\
      1 - \min_ {\varphi \in \mathcal{H}} \Big \{ \mathbf{Pr}(h(\tilde{X}) =  \varphi(\tilde{X})) + \mathbf{Pr}(h(X) \neq \varphi(X)) \Big \}
    \end{dcases}
  \end{rcases} \\
  & = \max \begin{rcases}
    \begin{dcases}
       1 - \underset{\bar{h} \in \Upsilon}{\min_ {\varphi \in \mathcal{H},}} \Big \{ \mathbf{Pr}(\bar{h}(X) \neq \varphi(X)) +  \mathbf{Pr}(h(\tilde{X}) \neq \varphi(\tilde{X})) \Big \}, \\
      1 - \underset{\bar{h} \in \Upsilon}{\min_ {\varphi \in \mathcal{H},}} \Big \{ \mathbf{Pr}(\bar{h}(\tilde{X}) \neq \varphi(\tilde{X})) + \mathbf{Pr}(h(X) \neq \varphi(X)) \Big \}
    \end{dcases}
  \end{rcases}.
\end{split}
\end{equation}
Taking $P, Q, U, V$ as assumed, the desired result follows by the definition of risk in Eq.~\eqref{eqn:risk}.
\end{proof}
\subsection{Theorem~\ref{thm:pb-bound-efficient}}
As noted in the main text, this result introduces a deterministic reference to avoid costly Monte-Carlo estimation. It is the consequence of a series of triangle-inequalities and some of the Lemmas disucssed in proof of Thm.~\ref{thm:pb-bound}. 
\begin{proof}
Observe, for any $h_*$,
\begin{equation}
\begin{split}\label{eqn:pattern-use-3}
    \Delta_{\mathbb{Q}}(S, \mathbb{T}) & \leq \Delta_{\mathbb{Q}, h_*}(S, T) + \Delta_{h_*,\mathbb{Q}}(T, \mathbb{T}) \\
    & \leq \Delta_{h_*}(S, T) + \Delta_{\mathbb{Q}, h_*}(S, S) + \Delta_{\mathbb{Q}}(T, \mathbb{T}) + \Delta_{\mathbb{Q}, h_*}(T, T) \\
    & \leq \rho + \Delta_{h_*}(S, T) + \Delta_{\mathbb{Q}}(T, \mathbb{T})
\end{split}
\end{equation}
Use Lemma~\ref{lem:simple-da-bendavid} and Lemma~\ref{lem:maurer}, respectively, to bound the latter two terms. Application of Boole's Inequality and selection of $h_* = \mu$ gives the result.
\end{proof}
\subsection{Corollary~\ref{cor:pb-bound-efficient}}
Conceptually, this result relies on the same proof-technique as Theorem~\ref{thm:pb-bound-efficient}, but the proof is still a bit more technically involved than a typical ``Corollary'' because it requires the measure-theoretic notion of a pushforward. We consider pushfowards of empirical distributions, which are finitely supported, so there is no need to discuss issues of measurability.
\begin{proof}
Following the proof of Theorem~\ref{thm:pb-bound-efficient}, we have $\Delta_{\mathbb{Q}}(S, \mathbb{T}) \leq \rho + \Delta_{\mu}(S, T) + \Delta_{\mathbb{Q}}(T, \mathbb{T})$. Now, recalling $\mu$ is the composition of a classifier $c_\mu$ and a feature extractor $f_\mu$, we have
\begin{equation}\label{eqn:rewrite}
    \Delta_\mu(S, T) = \Delta_{c_\mu}(S \circ f_\mu^{-1}, T \circ f_\mu^{-1})
\end{equation}
where we abuse notation and write $\mathbb{D} \circ g^{-1}$ for the pushforward of a distribution $\mathbb{D}$ on $\mathcal{X}\times\mathcal{Y}$ by the function $\Phi_g(x,y) = (g(x), y)$. In details, setting $\ell(h, (x,y)) = 1[h(x) \neq y]$ and assuming $f_\mu : \mathcal{X} \to \mathcal{Z}$ and $c_\mu : \mathcal{Z} \to \mathcal{Y}$, Eq.~\eqref{eqn:rewrite} follows because
\begin{equation}\label{eqn:pf-expl}
    \mathbf{R}_\mathbb{D}(\mu) = \int_{\mathcal{X} \times \mathcal{Y}} \ell(c_\mu \circ f_\mu, v)\mathbb{D}(\mathrm{d}v) = \int \ell(c_\mu, \Phi_{f_\mu}(v))\mathbb{D}(\mathrm{d}v) = \int_{\mathcal{Z} \times\mathcal{Y}} \ell(c_\mu, w)\mathbb{D} \circ f_\mu^{-1}(\mathrm{d}w) = \mathbf{R}_{\mathbb{D} \circ f_\mu^{-1}}(c_\mu)
\end{equation}
for any distribution $\mathbb{D}$ over $\mathcal{X} \times \mathcal{Y}$. After applying the equality in Eq.~\eqref{eqn:rewrite}, we can conclude our argument as in the proof of Theorem~\ref{thm:pb-bound} using Lemma~\ref{lem:simple-da-bendavid}. Although, it should be noted the adaptation problem has changed slightly, since we now consider the hypothesis space $\mathcal{W} = \{c_h \mid h \in \mathcal{H}\} \subseteq \mathcal{Y}^\mathcal{Z}$, the source distribution $\mathbb{S} \circ f_\mu^{-1}$ over $\mathcal{Z} \times \mathcal{Y}$, and the target distribution $\mathbb{T} \circ f_\mu^{-1}$ over $\mathcal{Z} \times \mathcal{Y}$. Of course, Lemma~\ref{lem:simple-da-bendavid} still applies in this case, so this does not present an issue.  

After this, to arrive at the result in the main text, we simplify terms to remove any discussion of pushforward distributions. For any risks, this is accomplished by reversing the steps in Eq.~\eqref{eqn:pf-expl}. For any divergences, a similar equality holds and can be applied. In particular, for any $\mathcal{Q} \subseteq \mathcal{Y}^\mathcal{Z}$, any function $p : \mathcal{X} \to \mathcal{Z}$, and any distributions $\mathbb{S}$ and $\mathbb{T}$ over $\mathcal{X} \times \mathcal{Y}$, we use the expansion below:
\begin{equation}
\begin{split}
    \mathbf{d}_\mathcal{Q}((\mathbb{S} \circ p^{-1})_Z, (\mathbb{T} \circ p^{-1})_Z) & = \sup_{q \in \mathcal{Q}} \lvert \mathbf{E}_{Z \sim (\mathbb{S} \circ p^{-1})_Z} [q(Z)] - \mathbf{E}_{Z \sim (\mathbb{T} \circ p^{-1})_Z} [q(Z)]\rvert \\
    & = \sup_{q \in \mathcal{Q}} \lvert \mathbf{E}_{X \sim \mathbb{S}_X} [(q \circ p) (X)] - \mathbf{E}_{X \sim \mathbb{T}_X} [(q \circ p) (X)] \rvert \qquad (\text{similiar to Eq.~\eqref{eqn:pf-expl}})\\
    & = \sup_{r \in \mathcal{Q} \circ p^{-1}} \lvert \mathbf{E}_{X \sim \mathbb{S}_X} [r (X)] - \mathbf{E}_{X \sim \mathbb{T}_X} [r (X)] \rvert \qquad (\mathcal{Q} \circ p^{-1} \defn \{q \circ p \mid q \in \mathcal{Q}\})\\
    & = \mathbf{d}_{\mathcal{Q} \circ p^{-1}}(S_X, T_X).
\end{split}
\end{equation}
Taking $\mathcal{Q} = \{1 - \mathbf{1}_{\{c(\cdot)\}}\{c'(\cdot)\} \mid (c,c') \in \mathcal{W}\}$ and $p = f_\mu$, we end up with $\mathcal{Q} \circ p^{-1} = [\mathcal{H}\Delta\mathcal{H}]_\mu$ as defined in the main text. Likewise, taking $\mathcal{Q} = \{1 - \mathbf{1}_{\{c_\mu(\cdot)\}}\{c'(\cdot)\} \mid c' \in \mathcal{W}\}$ and $p = f_\mu$, we end up with $\mathcal{Q} \circ p^{-1} = [\mu\Delta\mathcal{H}]_\mu$.
\end{proof}

% \subsection{De-Randomizing Traditional PAC-Bayes Bounds}
% \begin{theorem}
% \label{thm:derandom}
% For any $\mathbb{P}$ over $\mathcal{H}$, any $\delta > 0$, w.p. at least $1-\delta$, for all $\mathbb{Q}$ over $\mathcal{H}$ s.t. $\mathbb{Q}$ is $\rho$-flat on $T$
% \begin{equation}
%     \Delta_{\mu}(\mathbb{T}, T) \leq 2\rho + \sqrt{\tfrac{\mathrm{KL}(\mathbb{Q}\mid \mid \mathbb{P}) + \ln \sqrt{4m} - \ln \delta }{2m}}
% \end{equation}
% \end{theorem}
% \begin{proof}
% Observe,
% \begin{equation}
% \begin{split}
%     \Delta_{\mu}(\mathbb{T}, T) \leq \Delta_{\mu,\mathbb{Q}}(\mathbb{T}, \mathbb{T}) + \Delta_{\mu,\mathbb{Q}}(T, T) \leq \Delta_{\mu,\mathbb{Q}}(\mathbb{T}, \mathbb{T}) + \rho.
% \end{split}
% \end{equation}
% To proceed, we bound the first summand
% \begin{equation}
% \begin{split}
%     \Delta_{\mu,\mathbb{Q}}(\mathbb{T}, \mathbb{T}) & = \lvert \mathbf{R}_{\mathbb{T}}(\mu) - \mathbf{E}[\mathbf{R}_{\mathbb{T}}(H) ] \rvert \qquad H \sim \mathbb{Q} \\
%     & \leq \mathbf{E} [\lvert \mathbf{R}_{\mathbb{T}}(\mu) - \mathbf{R}_{\mathbb{T}}(H) \rvert] \qquad \text{(Jensen's Inequality)} \\
%     & = \mathbf{E} [\lvert \mathbf{E}\{ 1[\mu(X) \neq Y] - 1[H(X) \neq Y] \mid H \} \rvert] \qquad \text{(Linearity and Tower Rule)} \\
%     & \leq \mathbf{E}[\mathbf{E}\{\lvert 1[\mu(X) \neq Y] - 1[H(X) \neq Y] \rvert \mid H \} ] \qquad\text{(Jensen's Inequality)} \\
%     & \leq ??
% \end{split}
% \end{equation}
% \end{proof}
%%%%%%%%%%%%%%%%%%%%%%%%%%%%%%%%%%%%%%%%%%%%%%%%
%%%%%%%%%%%%%%%%%%%%%%%%%%%%%%%%%%%%%%%%%%%%%%%%
\section{Extended Related Works}
\label{sec:ext_related}
Here, we give an extended version of the related works (Section~\ref{sec:related}). First, we discuss theoretical adaptation work. We compartmentalize relevant contributions based on some key-terms common to adaptation bounds. Following this, we discuss related works in PAC-Bayes, in which, we give a more in depth history of these bounds.

\paragraph{Divergence} Many bounds use a modified, or generalized, divergence term. \citet{apx_mansour2009domain} define divergence for \textit{any} loss function (i.e., in addition to the 01-loss we consider). 
% To more correctly approximate divergence in DA algorithms, \citet{apx_zhang2019bridging} build on this work to incorporate model-dependent divergence in a margin-loss bound. Independently, \citet{apx_kuroki2019unsupervised} also give a model-dependent divergence for a bound on 01-loss. 
With some restrictions on hypothesis space, \citet{apx_redko2017theoretical} show a Wasserstein metric may be used to bound error. \citet{apx_shen2018wasserstein} extend this to more general settings. As noted by \citet{apx_redko2020ASO}, bounds based on Wasserstein metric imply bounds based on MMD \citep{apx_gretton2012kernel} due to a general relationship between the two. \citet{apx_johansson2019support} give another bound based on an integral probability metric. Note, none of these works consider approximation of divergences used to bound 01-loss in multiclass settings. In this regard, the closest work to ours is \citet{apx_zhang2019bridging} who approximate a divergence used to bound a multiclass \textit{margin} loss, which in turn, bounds the 01-loss we consider. As noted, the primary difference between our work and the work of \citet{apx_zhang2019bridging} is the use of uniform sample-complexity in the latter. Possibly, bounds in the latter could be extended to PAC-Bayesian contexts as well, but our choice of divergences allows us to work directly with 01-loss and avoid any loosening of the bound via the margin penalty.  

\paragraph{Adaptability} Besides requiring small adaptability term, some theoretical DA works consider other possible assumptions. For example, a covariate shift assumption can be made: the marginal feature distributions disagree, but the feature-conditional label distributions are identical. This assumption is useful, for example, in designing model-selection algorithms \citep{apx_sugiyama2007covariate, apx_you2019towards}, but \citet{apx_david2010impossibility} show this assumption (on its own) is \textit{not} enough for the general DA problem. Another frequent assumption is label-shift: the marginal label distributions disagree, but the label-conditional feature distributions remain the same. As mentioned, \citet{apx_zhao2019learning} show failure-cases in this context, while \citet{apx_lipton2018detecting} propose techniques for detecting and correcting shift in this case. Similarly, \citet{apx_tachet2020domain} propose \textit{generalized} label-shift and motivate new algorithms in this context. The DA problem can also be modeled through causal graphs \citep{apx_zhang2015multi, apx_magliacane2018domain} and some extensions to DA consider a meta-distribution over targets \citep{apx_blanchard2021domain, apx_albuquerque2019adversarial, apx_deng2020representation}. Notably, most assumptions are untestable in practice, but not many works consider this. As we are aware, we are the first work to use a sample-dependent adaptability term, which improves estimation in empirical study.

%\paragraph{Our Work} In light of this, one of the primary contributions of our work is to modify the adaptability term so it is easy to estimate in controlled environments (i.e., with access to target labels). This allows us not only to study the adaptability term itself, but also to lowerbound error when approximating divergence. As noted, we use this lowerbound to empirically study techniques a novel, non-trivial extension of an approximation scheme proposed by \citet{apx_ben2010theory} in the binary case. We also study a slight generalization of an approximation scheme for a model-dependent divergence \citep{apx_kuroki2019unsupervised, apx_zhang2019bridging}.

\paragraph{PAC-Bayes} 
% Here, we cover relevant tools in PAC-Bayes and finish with a discussion of PAC-Bayes in DA. 
%We give an (incomplete) overview of the framework, focusing primarily on the tools we employ and other uses in DA. 
For completeness, besides what is discussed here, readers are directed to the work of \citet{apx_catoni2007pac}, \citet{apx_mcallester2013pac}, \citet{apx_germain2009pac, apx_germain2015risk}, and the primer by \citet{apx_guedj2019primer}. While PAC-Bayes is often attributed to \citet{apx_mcallester1999some} with early ideas by \citet{apx_shawe1997pac}, the particular bound we use is due to \citet{apx_maurer2004note}. A similar result was first shown by \citet{apx_langford2001bounds} for 01-loss. In experiments, we use data-dependent priors, perhaps first conceptualized by \citet{apx_ambroladze2007tighter, apx_parrado2012pac}. Besides the previously discussed work of Germain et al., PAC-Bayes has also been used in theories for transfer learning \citep{apx_li2007bayesian, apx_mcnamara2017risk}. As mentioned, our bounds are the first PAC-Bayesian multiclass adaptation bounds.
%%%%%%%%%%%%%%%%%%%%%%%%%%%%%%%%%%%%%%%%%%%%%%%%
%%%%%%%%%%%%%%%%%%%%%%%%%%%%%%%%%%%%%%%%%%%%%%%%
\section{Experimental Details}
\label{sec:exp_details}
\subsection{Datasets and Models}
\label{sec:datasets_and_models}
As noted in the main text, we consider a collection of common adaptation datasets from both computer vision and NLP. Each dataset consists of a number of component \textit{domains} which are themselves distinct datasets that all share a common label space. In this way, we can simulate transfer of some model from one domain to another. The datasets and models we consider are as follows:
\begin{enumerate}
     \item \textbf{Digits}: Digits consists of collection of digit classification datasets including: USPS \citep{apx_uspsdataset}, MNIST \citep{apx_lecun-mnisthandwrittendigit-2010}, and SVHN \citep{apx_netzer2011reading}. We use only the training sets. The number images in each is about 7K, 60K, and 70K, respectively. We select $\mathcal{X}$ to be the space of 28$\times$28 grayscale images (i.e., the original feature space for MNIST). For USPS and SVHN, this is accomplished through image transformation. The label space $\mathcal{Y}$ consists of the digits 0-9. As we are aware, this collection was first used by \citet{apx_ganin2015unsupervised}. For this task, we consider $\mathcal{H}$ to be a space of CNNs of a fixed 4-layer architecture.
     \item \textbf{PACS}: PACS is an image-classification, domain generalization dataset where each domain has a different style. It was proposed by \citet{apx_li2017deeper_PACS} to be a more challenging task compared to existing generalization datasets. The domains consist of images in style of: Photo, Art Painting, Cartoon, or Sketch. There are about 10K total labeled images with some slight imbalance in the liklihood of each style. The label space $\mathcal{Y}$ consists of 7 common object categories: dog, elephant, giraffe, guitar, horse, house, and person. The feature space $\mathcal{X}$ is selected to be space of real-vectors of dimension $2048$; i.e., $\mathbb{R}^{2048}$. To map to the feature space from an image, we use the hidden-layer output of an image passed through a pre-trained ResNet-50 \citep{apx_he2016deep}. For this task, we consider $\mathcal{H}$ to be either the space of linear classifiers or the space of 4-layer FCNs of a fixed architecture (fully-connected networks).
     \item \textbf{Office-Home}: Office-Home was originally proposed by \citet{apx_venkateswara2017deep}, but we use the smaller preprocessed version given by \citet{apx_zhou2020deep}. The dataset is similar to PACS. It also contains 4 different styles as its component domains across about 15K total images: Art, Clipart, Product, and Real-World. Unlike PACS, it has a much larger number of classes. In particular, the label space $\mathcal{Y}$ contains 65 categories of different daily objects. Like PACS, we use the outputs of ResNet-50 to map to the real vector space $\mathcal{X}$. We let $\mathcal{H}$ be either a linear model or a 4-layer FCN as before.
     \item \textbf{Amazon Reviews}: Amazon Reviews is a text-classification dataset introduced by \citet{apx_blitzer2007biographies}. We use the Books, DVD, Electronics, and Kitchen domains preprocessed as in Blitzer et al. This totals about 4000 reviews which are labeled as having positive or negative sentiment. The feature space $\mathcal{X}$ is the space $\mathbb{N}^{4096}$; i.e., the space of bag-of-word representations. For each review, the non-zero vector components correspond to counts for words found within the review. Implicitly, this limits our vocabulary to the 4095 most frequent words and leaves one special token for out-of-vocabulary words. As noted, the label space $\mathcal{Y}$ is a binary space whose elements denote the sentiment of the review. We let $\mathcal{H}$ be either a linear model or a 4-layer FCN as before.
     \item \textbf{Discourse A (PDTB Labels)}: The Penn Discourse Treebank (PDTB) \citep{apx_prasad2008penn} is an NLP dataset containing a subset of Wall Street Journal articles from the Penn Treebank \citep{apx_marcus1993building} which are tagged with shallow discourse coherence relations (i.e., relations that hold only between the argument pairs and do not have any hierarchy or graph structure). These coherence relations can be explicitly signaled by discourse \emph{connectives} such as \emph{and}, \emph{so}, and \emph{but}, or could require the insertion of an \emph{implicit} connective. In this paper, we focus on the task of implicit discourse sense classification which is the most difficult task for discourse parsers. To form a DA dataset, we also used implicit relations from two parallel corpora: the TED-MDB \citep{apx_zeyrek2020ted} which contains tagged TED talks and the BioDRB \citep{apx_ramesh2010identifying} which contains tagged scientific articles. These three mentioned datasets form our component domains. Our feature space $\mathcal{X}$ is selected to be space of real-vectors of dimension 728. Within this space, we try three different feature representations for the discourse relations.\footnote{We only experiment \textit{within} each representation type and do not attempt to transfer \textit{across} different representations.} In the first two cases, the feature is made up of the argument pairs which have been concatenated and encoded using a BERT model \citep{apx_devlin2018bert}. We use either the pooled output or the average of the hidden states. In the last case, we use Sentence-BERT \citep{apx_reimers2019sentence} to encode our features. Our label space $\mathcal{Y}$ consists of the 4 level 1 discourse sense classes contained in the Penn Discourse Treebank. We let $\mathcal{H}$ be either a linear model or a 4-layer FCN as before.
     \item \textbf{Discourse B (GUM Labels)}: The GUM corpus \citep{apx_zeldes2017gum} contains text documents from 8 different genres: Academic, Biography, Fiction, Interview, News, Reddit, How-To, and Travel. These genres form our component domains. Documents within the corpus are annotated using the discourse framework of Rhetorical Structure Theory \citep{apx_mann1987rhetorical} in which discourse coherence relations are organized in a hierarchical tree structure. The sense hierarchy used for the GUM corpus is similar to that of the RST Discourse Treebank \citep{apx_carlson2003building}. In order to focus on coherence relations only between two argument pairs (without the additional hierarchical structure), we removed all relations where one or both nodes was not a leaf node. To form the label space, we mapped the twenty GUM labels to the conventional RST discourse treebank top-level labels where only three GUM labels did not have an existing mapping encoded in the RST. We mapped these three in the following manner, following \citet{apx_braud2017cross}: \emph{preparation} to BACKGROUND, \emph{justify} and \emph{motivation} to EXPLANATION, and \emph{solutionhood} to TOPIC-COMMENT. Given this mapping, our final label space $\mathcal{Y}$ consists of the 13 different RST discourse sense classes that were mapped to by the GUM corpus classes. Our features are encoded the same way as the PDTB features. We let $\mathcal{H}$ be either a linear model or a 4-layer FCN as before.
\end{enumerate}

\paragraph{Random Data Splits}
To simulate variability due to sampling and also to consider more mild forms of dataset shift, we split each component domain within each dataset into two disjoint sets of (roughly) equal size. So, for a dataset with 4 component domains, the new number of component domains will be 8. Some of these component domains \textit{should} now follow a fairly similiar distribution; i.e., splits coming from the same original component. This process is done randomly and all adaptation scenarios (see Section~\ref{sec:adaptation_pairs}) test 3 different seeds for this split.

\subsection{Model Training for Divergence Approximation, Adaptability Upperbounds, and Simple Algorithm (SA)}\label{sec:simple-training}
We train a number of deterministic models throughout our experimentation; e.g., for divergence approximation, to compute risks for the ranking task, and to compute upperbounds on $\lambda$. To avoid individual parameter selection for more than 12,000 models all trained in our experimentation, we use the optimization parameters given below in most cases. For Gibbs predictors, we use a slightly modified technique which is also discussed below. For the \textbf{DANN} algorithm, we use different parameters which were more carefully selected (details discussed in Section~\ref{sec:dann_details}). While this ``one size fits all'' approach is arguably simplistic, we found these settings worked well for a majority of cases in our preliminary experiments. For divergence approximation and upperbounds on adaptability this is visible in the main text results. In Appendix~\ref{sec:sanity_check}, we also report statistics on the transfer (target) error of some hypotheses trained to minimize error on the source sample (i.e., the Simple Algorithm \textbf{SA}). These provide a sanity check that our simple optimization procedure is indeed selecting non-trivial hypotheses in a majority of cases. Notably, the point of this work is to study global trends rather than to achieve optimal performance on any one dataset. The ``one size fits all'' approach we take is reflective of this; it allows us to use our limited computational resources to study \textit{more} datasets and models, rather than do rigorous parameter search on just a few.

\paragraph{Optimization Parameters} All models are trained using SGD on an NLL loss with momentum set to $0.9$. The NLL loss is sometimes weighted to correctly replicate the importance of multiple risks (e.g., when minimizing a sum of two risks). For example, if we have the objective $\min_h \mathbf{R}_S(h)+ \mathbf{R}_T(h)$ and $S$ has more samples than $T$, the NLL loss will weight examples in $T$ higher to give them equal importance during optimization, as described by the objective.  We start training with a learning of $1 \times 10^{-2}$ for 100 epochs. Then, we train for another 50 epochs using a learning rate of $1 \times 10^{-3}$. If a model ever achieves a training error lower than $5 \times 10^{-4}$, we terminate training. In all cases, we use a batch size of $250$.

\paragraph{Gibbs Predictors} To learn a Gibbs predictor (stochastic model) $\mathbb{Q}$ we need to use a slightly different approach. In all cases, $\mathbb{Q}$ will be a multivariate normal distribution with diagonal covariance and we will minimize Gibbs risk on a source sample $S$ with intention to transfer to a target sample $T$. We use PAC-Bayes-by-Backprop (PBB) to learn the parameters of our normal distribution. PBB is an SGD-based technique proposed by \citet{apx_perez2021tighter} to learn stochastic models that optimize PAC-Bayes bounds. The approach requires specification of a particular PAC-Bayes bound to use as the objective and a particular distribution $\mathbb{P}$ to use as the prior. For the former, we use the variational bound proposed by \citet{apx_dziugaite2021role}. For the latter, we use a multivariate normal distribution: the mean is a (trained) deterministic model (i.e., its parameter vector) and the covariance matrix is $\sigma \mathbf{I}$ where $\sigma = 0.01$ and $\mathbf{I}$ is the identity matrix. We train the deterministic model to minimize the error on $S$ using the same optimization parameters discussed previously. Note, this may seem taboo to one familiar with PAC-Bayes, since the prior $\mathbb{P}$ is typically required to be independent of the data used in the bound. Contrary to this, in our setting, it is \textit{perfectly valid} to select $\mathbb{P}$ based on the data in (only) $S$. This is clear in the proofs of Theorem~\ref{thm:pb-bound} and Theorem~\ref{thm:pb-bound-efficient} because the prior $\mathbb{P}$ is only used to bound the generalization gap between $T$ and $\mathbb{T}$. Thus, this choice is reflective of a realistic scenario where one wishes to compute a PAC-Bayes bound with $\mathbb{Q}$. The approach we describe essentially corresponds to the idea of using a data-dependent prior (see Section~\ref{sec:ext_related}). The prior $\mathbb{P}$ is learned using data that is not used in any part of the bound which depends on $\mathbb{P}$. For optimization parameters of PBB not discussed here (e.g., learning rate), we default to the previously discussed choices.

\subsection{DANN Model Training}
\label{sec:dann_details}
For \textbf{Digits}, we study a PAC-Bayes variant of the invariant feature learning algorithm \textbf{DANN} (Domain Adversarial Neural Network) proposed by \citet{apx_ganin2015unsupervised}. The output of our variation is a Gibbs predictor $\mathbb{Q}$, so we again employ PBB as discussed above in Section~\ref{sec:simple-training}. While many parameters are similar to those used above -- including the prior and the PAC-Bayes bound --, we highlight some differences here. Most notably, we re-weight the KL divergence in the PAC-Bayes bound by a dampening factor to reduce its regularizing impact during training. For example, if the KL divergence was 46K and the dampening factor was 0.1, the effective KL divergence during training is 4.6K. We explore a range of different dampening factors to get a breadth of different ``complexities'' for interpretation; i.e., this variability produces the movement along the horizontal axes of Figure~\ref{fig:dann-all}. In our experiments, we let the dampening factor range in the set $\{0.1, 0.05, 0.01, 0\}$. We did not use any dampening factor to recover the original PAC-Bayes bound (i.e., 1) because we found this setting to be too restrictive in preliminary experiments. Due to the increased training time involved in this parameter sweep, we down-sampled the \textbf{Digits} dataset discussed above so that neither $S$ nor $T$ have more than 5K examples. We selected the learning rate by manual inspection, varying the learning rate (and number of epochs accordingly) until we did not observe frequent gradient explosion / vanishing. We ended up using an initial learning rate of $1 \times 10^{-3}$ for 112 epochs (75\% of 150) and $5 \times 10^{-4}$ for the remaining 38 epochs. We also reduced the batch size to 128, but most other parameters remained the same. It is important to note that the instability we experienced (i.e., related to gradient explosion / vanishing) is somewhat common when training adversarial methods such as DANN. As an additional measure to combat this issue, we slowly eased in the adversarial loss by weighting it using the parameter $\beta_p = 2 / (1 + \exp(-10p))$ where $p$ is the progress ratio of the current epoch in training; e.g., $p = 0.1$ corresponds epoch 15 out of 150. This approach was first proposed by \citet{apx_ganin2015unsupervised} for the same purpose. Besides the adversarial loss, we also multiply the KL divergence dampening factor by this weight to ease in the regularization component as well. As a final measure, we used multiple restarts with a new neural network initialization (up to 25 attempts), which proved to be the most effective measure. Among about 32K statistics computed during these experiments, only 21 statistics were unable to be computed due to instability. These consisted of restricted model-independent divergences (i.e., using class $[\mathcal{H}\Delta\mathcal{H}]_\mu$) and were ignored in plots. Roughly 5\% of the data points still had ``extreme'' values that did not match any other trend, so we removed these in Figure~\ref{fig:dann-all} to help with visual interpretation. 
\subsection{Adaptation Pairs}
\label{sec:adaptation_pairs}
We now discuss the different adaptation scenarios we consider. Instances of each scenario produce the collection of $(S,T)$ pairs we consider in our histograms in the main text. Recall, we randomly split each component domain into two halves (see Section~\ref{sec:datasets_and_models}). This will be important for understanding our adaptation scenarios. 
\paragraph{Single-Source} For all datasets except \textbf{Discourse B}, we consider a single-source adaptation scenario: each component domain in a dataset is paired with each distinct component domain. So, for a dataset with 8 components, this forms 64 $(S,T)$ pairs. For example, one pair might take $S$ to be the first random half of SVHN and $T$ to be the first random half of USPS. Another pair might take $S$ to be the first random half of SVHN and $T$ to be the \textit{second} random half of SVHN. So, as we see from this example, this implies that components derived from the same domain (i.e., through our random splitting procedure) will be paired. Note, these should follow a fairly similar (or identical) distribution. This is purposeful and provides a number of instances of \textbf{within-distribution} shift. These milder forms of shift allow us to test a broader range of realistic scenarios. The random splitting procedure also allows us to test variability in the outcome of a transfer task due to sampling; e.g., the first and second random half of SVHN will both be paired with every other dataset. 
\paragraph{Multi-Source} We also consider multi-source scenarios for all datasets except \textbf{Discourse A}. In these cases, we group all but one component of the dataset into a single pooled sample. The single component which was left out is chosen to be the target. So, for a dataset with 4 components, this forms 4 $(S,T)$ pairs. For example, for PACS, one pair might take $S$ to be the union of Art, Cartoon, and Sketch while $T$ consists of only Photo. \textbf{Importantly}, we only use \textit{one} of the random splits from each component type; e.g., for PACS, although we split photo into two disjoint sets, we only use one of these two sets. Otherwise, in every $(S,T)$ pair, $S$ would contain some data coming from a similar distribution as $T$. Informally speaking, this likely to weaken the adaptation difficulty. Note, the adaptation bounds we give implicitly cover multi-source contexts, since we can view the single source $\mathbb{S}$ as a mixture distribution.
\paragraph{Digits-Specific Scenarios} The \textbf{Digits} dataset is particular interesting because the feature space $\mathcal{X}$ is well-understood by humans. Thus, we can use our experience to design some natural distribution shifts. In particular, we consider the case where $S$ is some component of \textbf{Digits} and $T$ is the same sample except every image is randomly rotated up to $360^\circ$. We also consider the case where $S$ is some component of \textbf{Digits} and $T$ is the same sample except every image is blurred with random white noise. We can also consider a very unnatural shift. In particular, we consider transfer to randomly generated data. Here, $S$ is some component of \textbf{Digits} (as before) and every image in $T$ is a $28 \times 28$ grid of randomly generated pixels which is assigned a random label. For these scenarios, we use the entirety of the components in the \textbf{Digits} sets without doing any random splitting.

\subsection{Details for For Figure~\ref{fig:lambda}}\label{sec:exp_details_ada}
Each dataum in Figure~\ref{fig:lambda} corresponds to an upperbound for one of the adaptation pairs described in Section~\ref{sec:adaptation_pairs} using one of the compatible hypothesis spaces described in Section~\ref{sec:datasets_and_models}. We describe the process for computing each type of upperbound below. In all cases, we compute the upperbound using 3 different random seeds; i.e., this will effect things like the model-training and subsequently the final bound. We report the smallest upperbound of these seeds. This is logical since the smallest upperbound is still a valid upperbound. In case of $\lambda$, this is actually overly optimistic since the confidence parameter should be changed to account for all 3 bounds.  
% The final adaptability term (i.e., proposed by Germain et al.) is model-dependent, so it too will change based on the seed. For consistency in presentation, we also report the smallest upperbound in this case, but remark, this biases our results for this term. In a more correct experimental setup, we expect this term to actually be larger. Thus, our interpretation of the results is unaffected by this induced bias on the adaptability term of Germain et al.

\paragraph{Upperbound for $\tilde{\lambda}$} This bound is computed just as described in the main text. For each adaptation pair $(S,T)$ and each hypothesis space $\mathcal{H}$ which is compatible with $S$ and $T$, we train a model to minimize the summed risks on $S$ and $T$ using the approach described in Section~\ref{sec:simple-training}. If this approach returns the hypothesis $h$, we report $\mathbf{R}_S(h) + \mathbf{R}_T(h)$. As noted in the main text, this is a valid upperbound for $\tilde{\lambda}$.

\paragraph{Upperbound for $\lambda$}
 For each adaptation pair $(S,T)$ and each hypothesis space $\mathcal{H}$ which is compatible with $S$ and $T$, we randomly split $S$ and $T$ using an 80/20 train/test split. Denote train splits for $S$ and $T$ by $S_\mathrm{tr}$ and $T_\mathrm{tr}$, respectively. Denote the test splits for $S$ and $T$ by $S_\mathrm{ho}$ and $T_\mathrm{ho}$, respectively. We train a model to minimize summed risks on $S_\mathrm{tr}$ and  $T_\mathrm{tr}$ using the approach described in Section~\ref{sec:simple-training}. If this approach returns the hypothesis $h$, we then report the quantity
 \begin{equation}
     \mathbf{R}_{S_\mathrm{ho}}(h) + \mathbf{R}_{T_\mathrm{ho}}(h) + \sqrt{\ln (4 / \delta) / (2m)} + \sqrt{\ln (4 / \delta) / (2n)} 
 \end{equation}
where $m = |T_\mathrm{ho}|$, $n = |S_\mathrm{ho}|$, and $\delta = 0.05$. This is a valid upperbound for $\lambda$ which holds (i.e., prior to observing data) with probability $1 - \delta$. It is easily derived using Hoeffding's Inequality to bound both $\Delta_h(S_\mathrm{ho}, \mathbb{S})$ and $\Delta_h(T_\mathrm{ho}, \mathbb{T})$, and then, using Boole's Inequality to combine the bounds.
\subsection{Details for For Table~\ref{tab:divapprox}}\label{sec:exp_details_divapprox}
Each dataum used to compute the correlations in Table~\ref{tab:divapprox} corresponds to a divergence approximation for an $(S,T,h)$ triple. The datasets $S$ and $T$ are given by one of the adaptation pairs described in Section~\ref{sec:adaptation_pairs}, and conceptually, we pick $h$ to be the hypothesis whose error we would like to bound in the simulated transfer from $S$ to $T$. Specifically, for each adaptation pair $(S,T)$ and each compatible hypothesis space $\mathcal{H}$, we select $h$ using \textbf{SA}. We then approximate either the $\mathcal{H}\Delta\mathcal{H}$- or $h\Delta\mathcal{H}$-divergence using the (appropriate) technique described in Section~\ref{sec:method_approx}. The Spearman rank correlation we report compares the $\mathcal{H}\Delta\mathcal{H}$- and the $h\Delta\mathcal{H}$-divergence to the error-gap $\Delta_h(S,T)$ defined in Eq.~\eqref{eqn:error_gap} over all adaptation pairs $(S,T)$ in Section~\ref{sec:adaptation_pairs}, all compatible hypothesis spaces $\mathcal{H}$ discussed in Section~\ref{sec:datasets_and_models}, and all 3 seeds.
\subsection{Details for For Estimation of Flatness}\label{sec:exp_details_flatness}
Each datum discussed in the main text paragraph \textbf{Do Flat Regions Transfer?} corresponds to an estimate for an $(S,T,\mathbb{Q})$ triple. The datasets $S$ and $T$ are given by one of the adaptation pairs described above, and as before, we pick $\mathbb{Q}$ to be the Gibbs predictor whose error we would like to bound in the simulated transfer from $S$ to $T$ (i.e., using \textbf{SA}). We then estimate as described in the main text. For each $(S,T)$ pair and compatible hypothesis space $\mathcal{H}$, we repeat this procedure with 3 seeds to control for variability in the selection of $\mathbb{Q}$. We report all seeds in the histogram in Figure~\ref{fig:rho-hist}. Also, the mean and standard deviation reported in the main text are computed using all data in the histogram.
\subsection{Details for Figure~\ref{fig:dann-all}}
Each datum in Figure~\ref{fig:dann-all} corresponds to statistics computed for an $(S, T, \mathbb{Q})$ triple. As noted, $S$ and $T$ are restricted to be samples from the \textbf{Digits} dataset. Further, we only consider out-of-distribution adaptation scenarios as indicated in each subfigure title. The Gibbs predictor $\mathbb{Q}$ is selected using the training details discussed in Section~\ref{sec:dann_details} and the statistics are reported as described in Thm.~\ref{thm:pb-bound-efficient} and Cor.~\ref{cor:pb-bound-efficient}.
%%%%%%%%%%%%%%%%%%%%%%%%%%%%%%%%%%%%%%%%%%%%%%%%
%%%%%%%%%%%%%%%%%%%%%%%%%%%%%%%%%%%%%%%%%%%%%%%%
\section{Additional Experimental Results}
\label{sec:ext_results}
\begin{figure}
    \centering
    \includegraphics[width=\textwidth]{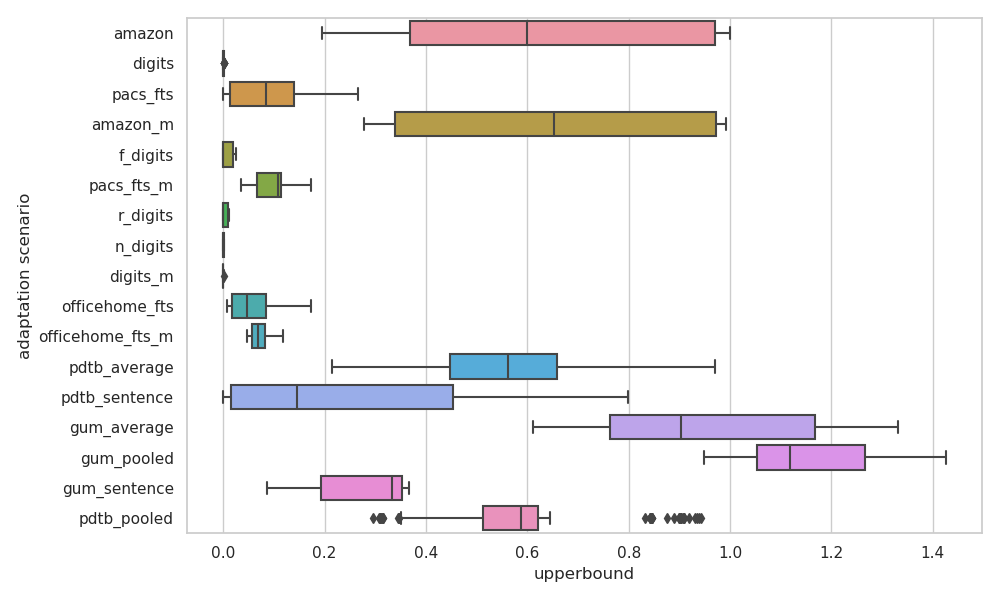}
    \caption{Boxplots for upperbounds on $\tilde{\lambda}$ for individual datasets. The appendage \texttt{\_m} denotes the \textbf{multi-source} setup, otherwise it is \textbf{single-source}. For digits, we prepend \texttt{r\_}, \texttt{n\_}, or \texttt{f\_} to denote transfer to rotated, noisy, or randomly generated (fake) data as discussed in Section~\ref{sec:adaptation_pairs}. As may be inferred, \texttt{pdtb} corresponds to \textbf{Discourse A} and \texttt{gum} corresponds to \textbf{Discourse B}. Appendages for these indicate the type of BERT features used.}
    \label{fig:grouped_ubs}
\end{figure}
\begin{figure}
    \centering
    \includegraphics[width=\textwidth]{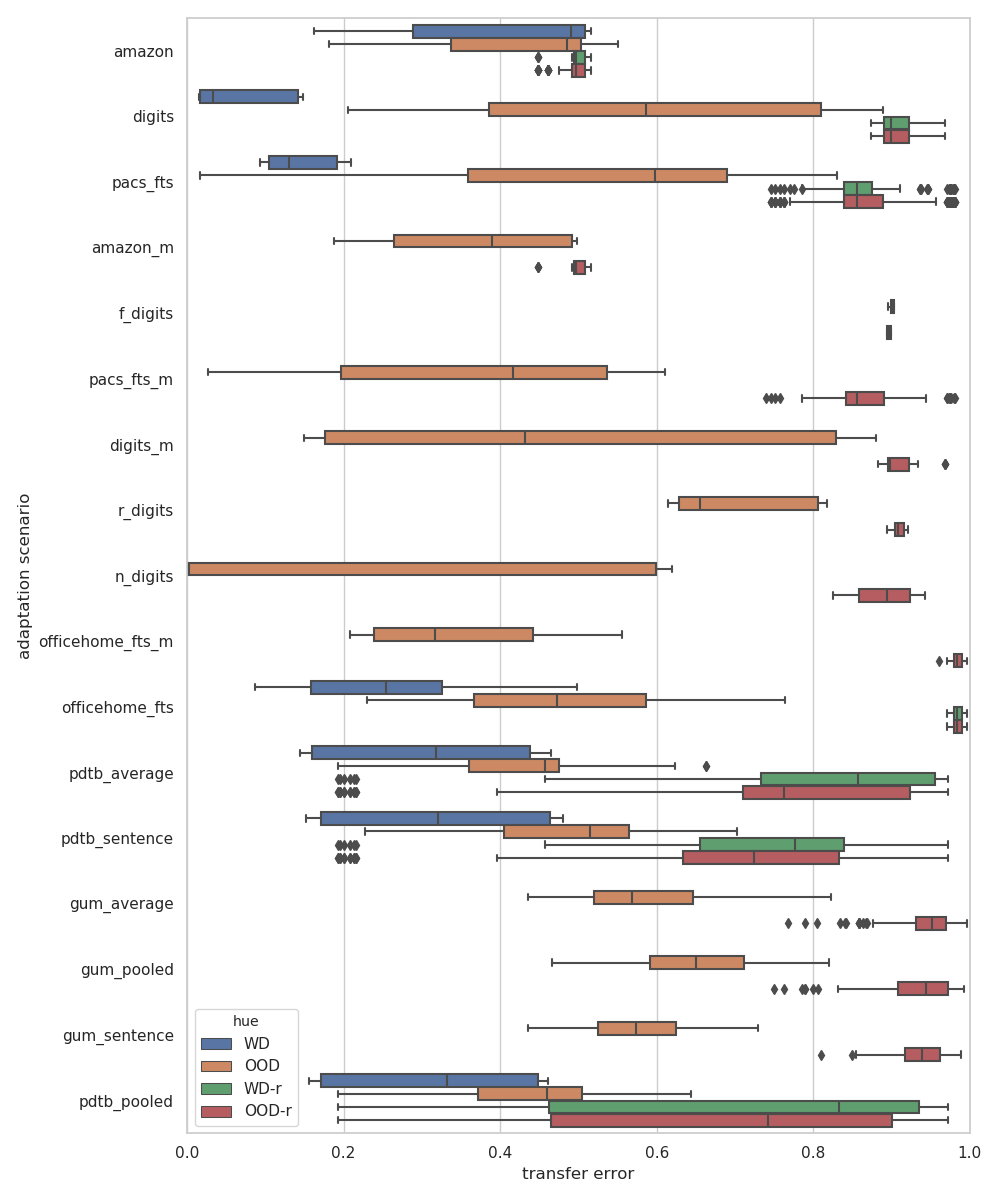}
    \caption{Dataset names are as in Figure~\ref{fig:grouped_ubs}. Hues correspond to within-distribution \textbf{WD} and out-of-distribution \textbf{OOD} based on whether the target $T$ is drawn from the same component domain as the source $S$. Notice, \textbf{WD} is not available for multi-source setups and certain Digits setups. This is simply a property of these adaptation scenarios. In these cases, we still report the \textbf{OOD} error. The appendage \texttt{-r} denotes the hypothesis is randomly initialized and not trained. These experiments provide a point of reference for comparison.}
    \label{fig:grouped_transfer_error}
\end{figure}

\subsection{Upperbounds on Adaptability (Dataset Specific)}\label{sec:adaptability_boxp}
We also found it interesting to consider our sample-dependent adaptability in a more problem-specific context. This reveals to us that \textbf{PACS} and \textbf{Office-Home} have the larger upperbounds of computer vision datasets. It also reveals that most upperbounds above about $0.3$ are due to NLP datasets. Informally speaking, this is sensible as the NLP tasks we consider have higher uncertainty in the labeling functions. Results are shown in Figure~\ref{fig:grouped_ubs}. We use the same aggregate data as Figure~\ref{fig:lambda}.

\subsection{Transfer Error of Trained and Random Hypotheses}\label{sec:sanity_check}
In Figure~\ref{fig:grouped_transfer_error}, we show transfer error (i.e., the error on $T$) of hypotheses trained using \textbf{SA}. These are precisely the hypotheses used to compute the error-gap $\Delta_h(S,T)$ when reporting Spearman rank correlation. As noted, we use a standardized optimization procedure to forego parameter selection on the more than 12,000 models we train. Thus, we are not interested in optimal performance in any case. Instead, Figure~\ref{fig:grouped_transfer_error} primarily serves as a sanity check to make sure the hypotheses we use are somewhat reflective of those which might be used in a practical scenario. That is, we would like to confirm that these hypotheses have learned something non-trivial (at least on the source domain). To illustrate this, for all datasets, we also report the error of randomly initialized hypotheses which have not been trained. This provides a point of reference. It is easy to see our trained hypotheses are typically far more effective than the untrained random initializations.

For the discourse datasets with PDTB labels, we observe the error of the random initializations is somewhat harder to interpret. For this reason we compare to a related work. In particular, \citet{apx_kishimoto-etal-2020-adapting} achieve an error rate of $\approx 0.38$ on a comparable (within-distribution) discourse sense classification task. We observe our within-distribution PDTB results (Discourse A) are frequently better than this.

\subsection{Comparison to Germain et al.}
\label{sec:comp2germain}
% \begin{figure}
%     \centering
%     \includegraphics[width=.5\columnwidth]{figures/rho.png}
%     \caption{Estimates of $\rho$ as described in main text and Section~\ref{sec:exp_details_flatness}. Each datum describes unique $(S, T, \mathbb{Q})$. Visualization confirms that $\rho$ is very often small as discussed in the main text.}
%     \label{fig:rho-hist}
% \end{figure}
\begin{figure}
    \centering
    \includegraphics[width=.7\columnwidth]{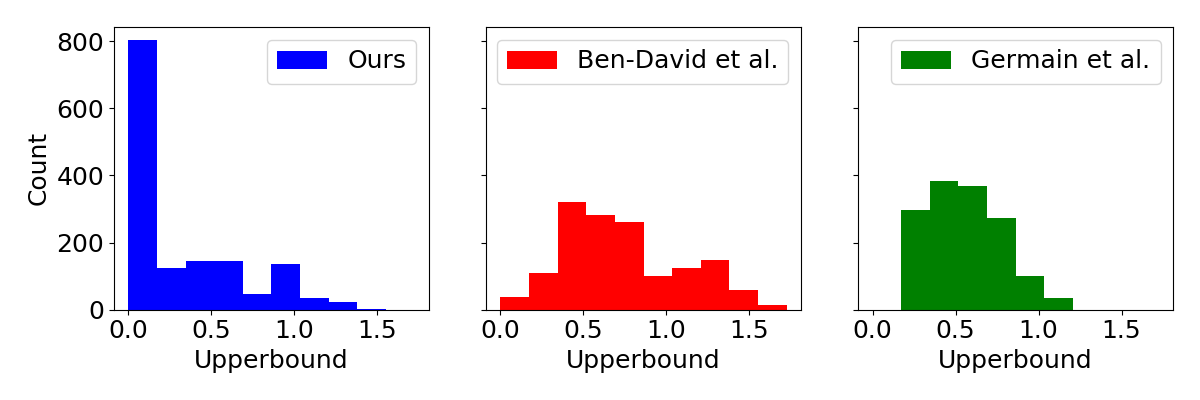}
    \caption{\small Upperbounds for sample-dependent (left), sample-independent (center), and binary PAC-Bayes variant (right) of adaptability. Each datum describes unique $(S, T, \mathcal{H})$.}
    \label{fig:g_lambda}
\end{figure}
% \textbf{Comparison to Germain et al.}\hspace{.5em}
While we know, theoretically, the PAC-Bayes bound of \citet{apx_germain2020pac} is not valid for the multiclass setting, we also study this question empirically. In Figure~\ref{fig:g_lambda}, we present a sample-dependent variation of the adaptability proposed by Germain et al.\footnote{Some proof-techniques we employ in Thm.~\ref{thm:pb-bound} can be applied to derive a sample-dependent variation of Thm.~\ref{thm:germain2020pac}, instead.} because of our previous (positive) results on sample-dependence. Even with this upgrade, the adaptability of Germain et al. is not able to capture the same useful information as our multiclass sample-dependent adaptability. Further, conducting a similar experiment as in Table~\ref{tab:divapprox}, we find the divergence term of Germain et al. has low rank correlation (\textbf{0.27} on all data). These empirical results confirm the hypothesis of \citet{apx_germain2020pac} that their PAC-Bayesian theory of adaptation in binary settings is not easily extend to the multiclass setting. This is especially true in comparison to the positive outcomes observed under our theory. The experimental details for these results are provided below.
\paragraph{Upperbound for Adaptability of \citet{apx_germain2020pac}}
The historgram in Figure~\ref{fig:g_lambda} shows a histogram of upperbounds on a sample-dependent variation of the adaptability of \citet{apx_germain2020pac} given in Thm.~\ref{thm:germain2020pac}. We compute the upperbounds using the same setup as described in Appendix~\ref{sec:exp_details_ada}. Because we do not have full access to $\mathbb{Q}$, we instead estimate this term with a finite sample $Q = (H_i)_i \sim \mathbb{Q}^k$.
Using Linearity of $\mathbf{E}$ and Hoeffding's Inequality, we have the following bound on Germain et al.'s adaptability (our sample-dependent variant) with i.i.d. sample $(H_{i,1}, H_{i,2})_i \sim (\mathbb{Q} \times \mathbb{Q})^k$
\begin{equation}\small\label{eqn:lambda_rho_ub}
\begin{split}
     \Bigg \lvert \ k^{-1}\sum\nolimits_i \underset{{X,Y}}{\mathbf{E}}[1[H_{i,1}(X) \neq Y] \cdot 1[H_{i,2}(X) \neq Y]] - k^{-1}\sum\nolimits_i \underset{{\tilde{X},\tilde{Y}}}{\mathbf{E}}[1[H_{i,1}(\tilde{X}) \neq \tilde{Y}] \cdot 1[H_{i,2}(\tilde{X}) \neq \tilde{Y}]] \ \Bigg \rvert
     + \sqrt{\frac{2 \ln (2 / \delta)}{k}}.
 \end{split}
\end{equation}
Here, we pick $\delta = 0.05$ as before and use $k = 100$. This gives a valid bound for which holds with probability $1 - \delta$ (i.e., prior to seeing the samples from $\mathbb{Q}$). We select the Gibbs predictor $\mathbb{Q}$ using \textbf{SA} as before.
\paragraph{Divergence of \citet{apx_germain2020pac}} We also approximate the divergence of Germain et al. to compare to the $\mathcal{H}\Delta\mathcal{H}$- and $h\Delta\mathcal{H}$-divergence in terms of model selection. As noted, the comparison is made through Spearman rank correlation with error-gap $\Delta$ using the same experimental setup as in Appendix~\ref{sec:exp_details_divapprox}. Since we are not aware of an analytic solution for this divergence (in case of neural networks), we approximate the divergence term of Germain et al. using a random sample $Q \sim \mathbb{Q}^k$ with $k = 100$. Here, $\mathbb{Q}$ is a distribution over $\mathcal{H}$ selected, again, using \textbf{SA}. We do this for each adaptation pair $(S,T)$ and each compatible hypothesis space $\mathcal{H}$. The final reported correlation compares the approximated divergence to $\Delta_Q(S,T)$ over all adaptation pairs, all compatible hypothesis spaces, and all 3 seeds.
\subsection{Additional DANN Results}\label{sec:ext_dann_res}
\begin{figure}
    \centering
    \includegraphics[width=.7\columnwidth]{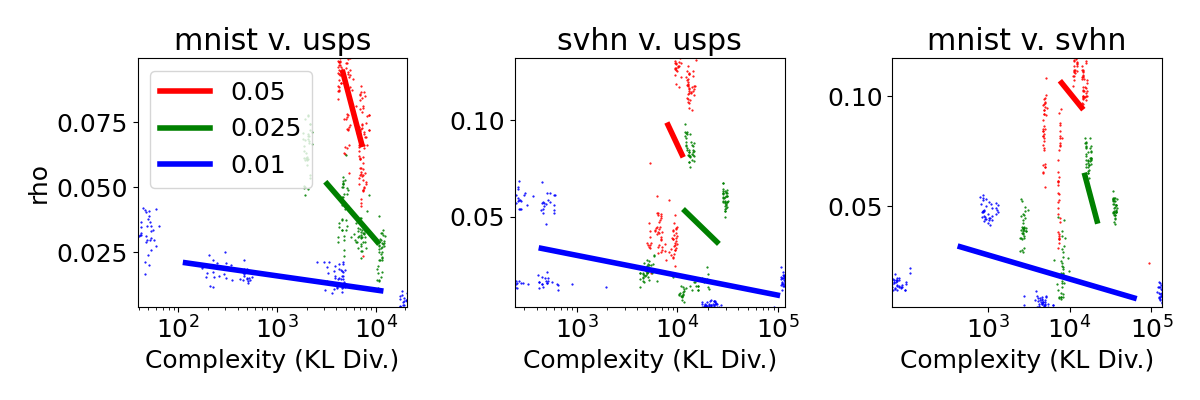}
    \caption{Estimates for $\rho$ while using \textbf{DANN} for various choices of the prior variance parameter $\sigma$. Solid line shows median, while scatter shows 95\% or more of data. Each datum describes unique $(S, T, \mathbb{Q})$. As a function of complexity, we expect $\rho$ to be smaller for more complex solutions. For example, in the formula for $\mathrm{KL}$-divergence between Gaussian distributions, similarly concentrated distributions will have high $\mathrm{KL}$-divergence as their variances decrease (i.e., holding all else constant). Sensibly, smaller variance gives more concentrated $\mathbb{Q}$, which helps to ensure small $\rho$ as well. This relationship (between $\rho$ and complexity) is observed in the above and is similar to our findings in the main text on adaptability after \textbf{DANN}.}
    \label{fig:rho-dann}
\end{figure}
\begin{figure}
    \centering
    \includegraphics[width=.7\columnwidth]{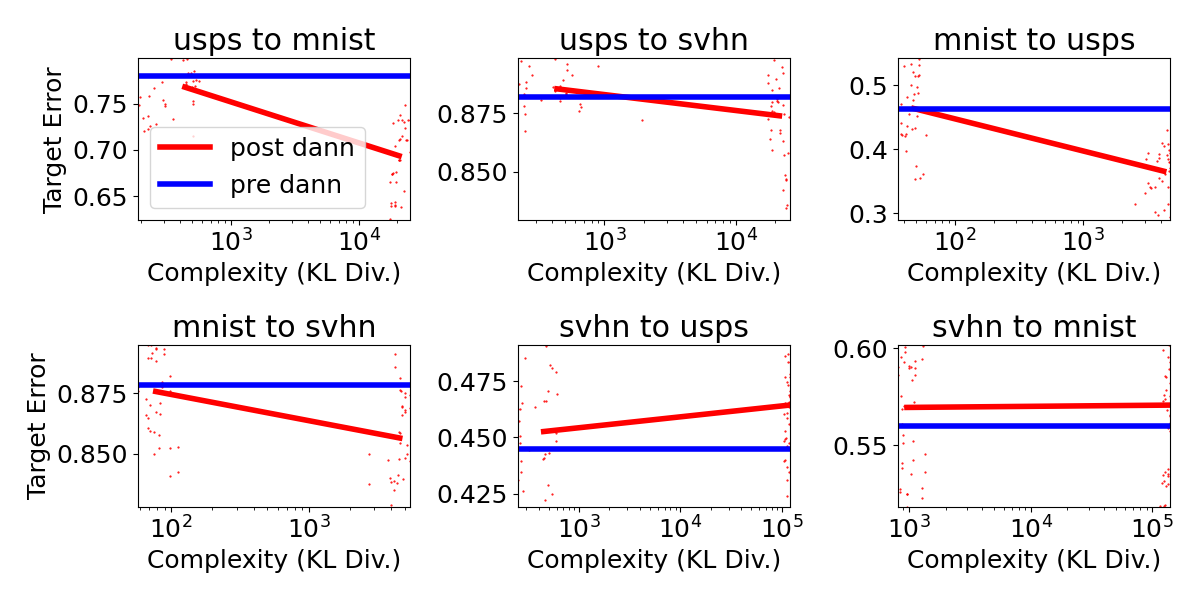}
    \caption{Estimates of target error $\mathbf{R}_T(\mathbb{Q})$ (after \textbf{DANN}) compared to prior error $\mathbf{R}_T(\mathbb{P})$ (before \textbf{DANN}, i.e. using \textbf{SA}). Solid line shows median, while scatter shows 95\% or more of data. Each datum describes unique $(S, T, \mathbb{Q})$. More complex solutions achieve lower error as expected. \textbf{DANN} is effective at reducing target error in many cases.}
    \label{fig:err}
\end{figure}
In Figure~\ref{fig:rho-dann}, we show the effect of \textbf{DANN} on $\rho$. This confirms our takeaway in the main text that, as a function of sample complexity, $\rho$ behaves like adaptability. Also, we see that increasing the prior variance makes flatness less likely, since this indirectly controls the variance of $\mathbb{Q}$, which is regularized to be similar to the prior via PBB (see Section~\ref{sec:dann_details}). Intuitively, it is easier to find small flat-minima than very large flat-minima. In Figure~\ref{fig:err}, we show the target (transfer) error of solutions trained using \textbf{DANN} compared to solutions trained using \textbf{SA}. The error rates may appear unusually high to familiar readers (e.g., compared to \citet{apx_ganin2015unsupervised}), but this is likely a result of the down-sampling we do to save training time (see Section~\ref{sec:dann_details}). Since \textbf{DANN} is a more sophisticated adaptation algorithm, we expect it to learn more about the target than \textbf{SA}, and indeed, it does in many contexts. Thus, this result also serves to validate our empirical setup for applying \textbf{DANN}. Lastly, as before, we can interpret the target error as a function of sample complexity: unconstrained solutions are able to achieve lower error than constrained solutions. Due to the higher complexity, these solutions may not generalize well.

\subsection{Detailed Visualization of Data in Table~\ref{tab:divapprox}}
In some instances, we observe poor correlation of the proposed divergence terms with the error-gap. For example, in Table~\ref{tab:divapprox}, poor correlation is observed on the \textbf{Digits} dataset. Poor correlation is also observed on the \textbf{PACS+OH} dataset for the model-dependent divergence. To study and understand these errors in detail, we visualize heatmaps (i.e., 2d histograms) in Figure~\ref{fig:heatmaps}. Histogram counts illustrate counts of the individual data pairs used to compute correlation in Table~\ref{tab:divapprox}; i.e., between a particular approximation of divergence and the corresponding error-gap. Please, see Section~\ref{sec:exp_details_divapprox} for details on the data used for Table~\ref{tab:divapprox}.

Results show the poor performance on the \textbf{Digits} dataset is likely due to insensitivity of the divergence approximation to changes in data sample and hypothesis. In particular, there is significant concentration of the divergence approximations near 1. In the case of the model-independent divergence, we also observe some artificially low approximations (i.e., near 0) compared to the error-gap; this illustrates poor approximation. On the \textbf{PACS+OH} dataset, the poor performance of the model-dependent divergence is best explained by comparing to the data-points for the model-independent divergence. While the model-dependent divergence is more variable and sensitive to data/model changes as one would expect, we see a high density of anti-correlated measurements on the \textbf{PACS+OH} data. Specifically, there is a cluster of cases where the divergence is near 1 with absolute change in error about 0.4 and another cluster of cases where the divergence is only about 0.8 with absolute change in error higher at 0.6. Aptly, the divergence does not perform well at ranking in this case.

These more nuanced results speak to the conservative nature of bounds (and their contained terms), in general. In particular, upperbounds are subject to “false positives” -- in which the actual bound is high, but the quantity controlled is low; e.g.,1 is a valid bound on 1 and so are 2, 10, 50, and 1000. While this undesirable property impacts the \textbf{Digits} and \textbf{PACS+OH} cases, it is also worth mentioning that the divergences perform well in many other cases. Depending on the application, a conservative measure of performance change may actually be desirable.

\begin{figure}
    \centering
    \includegraphics[width=\columnwidth]{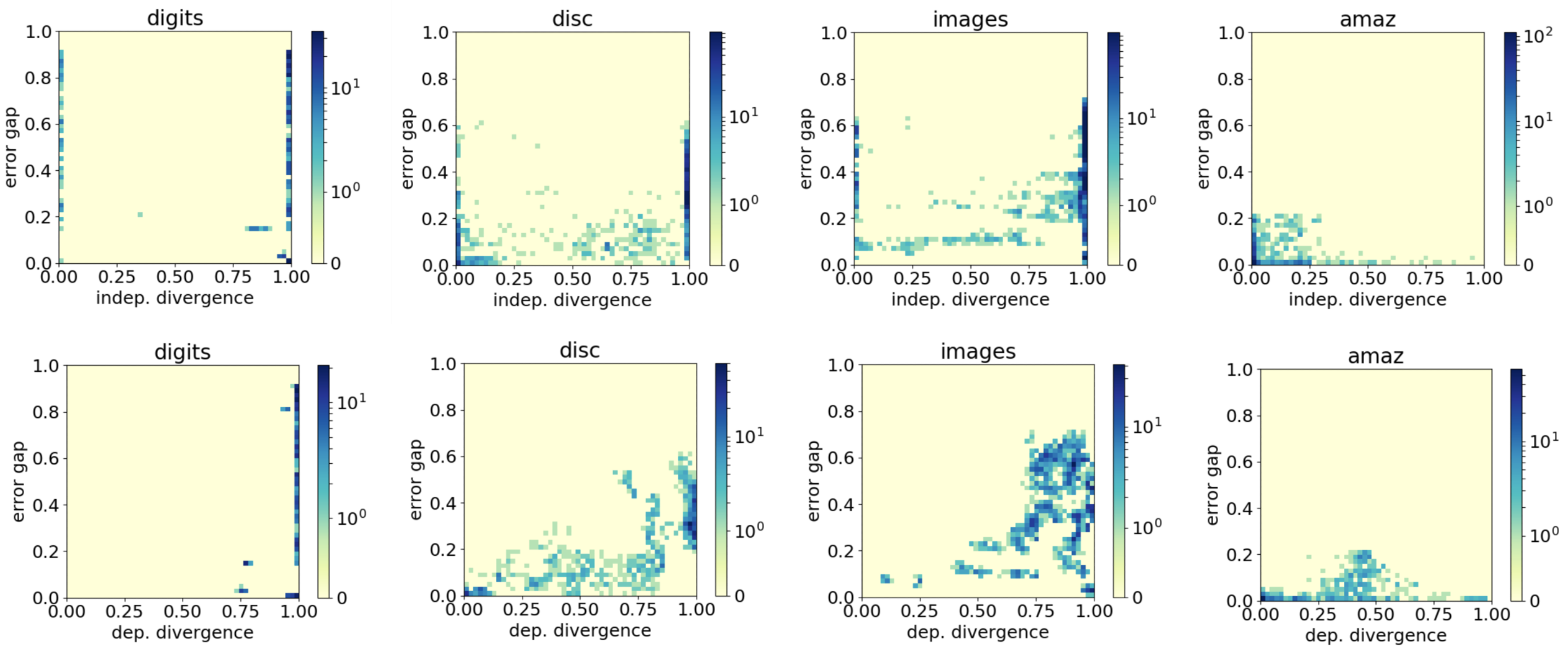}
    \caption{Heatmap (i.e., 2d histogram) showing counts for data used to compute correlations in Table~\ref{tab:divapprox}. \textbf{images} corresponds to the \textbf{PACS+OH} dataset.}
    \label{fig:heatmaps}
\end{figure}
%%%%%%%%%%%%%%%%%%%%%%%%%%%%%%%%%%%%%%%%%%%%%%%%
%%%%%%%%%%%%%%%%%%%%%%%%%%%%%%%%%%%%%%%%%%%%%%%%
% \bibliography{sicilia_277}
\end{document}